\documentclass[12pt]{article}

\usepackage[margin=2.5cm]{geometry}
\usepackage{times}
\usepackage{graphicx}
\usepackage{amsmath,amssymb,amsthm}
\usepackage{hyperref}
\usepackage[super]{natbib}
\usepackage{booktabs}
\usepackage{enumitem}
\usepackage{algorithm}
\usepackage{algpseudocode}
\usepackage{longtable}
\usepackage{CJKutf8}

\newtheorem{theorem}{Theorem}[section]
\newtheorem{lemma}[theorem]{Lemma}
\newtheorem{corollary}[theorem]{Corollary}

\newtheorem{definition}[theorem]{Definition}
\theoremstyle{remark}
\newtheorem{remark}[theorem]{Remark}
\newtheorem{example}[theorem]{Example}

\DeclareMathOperator{\Var}{Var}

\newcommand{\R}{\mathbb{R}}
\newcommand{\E}{\mathbb{E}}
\newcommand{\Pbb}{\mathbb{P}}

\newcommand{\LLM}{\mathrm{LLM}}
\newcommand{\Err}{\mathcal{E}}
\newcommand{\Irred}{\mathcal{E}_{\min}}

\title{Using Large Language Models as Low-Cost Statistical Estimators for Human-Response Data}

\author{Haobo Yang\\
Department of Computer Science and Engineering, SUSTech University\\
\href{mailto:yhbcode000@foxmail.com}{\texttt{yhbcode000@foxmail.com}}%
\thanks{This preprint has not undergone professional peer review. If you find any errors or have concerns, please contact the corresponding author.}}

\date{\today}

\begin{document}

\maketitle

\begin{abstract}
Quantitative research across the social and behavioral sciences depends on human subject experiments that are expensive, slow, and subject to sampling bias. Here we show that pretrained large language models induce risk-equivalent estimators of conditional expectations under squared loss, establishing restricted functional risk equivalence: under squared loss, the LLM induces an estimator whose risk matches the Bayes optimal risk for squared-loss prediction of conditional expectations for any inference that depends on the data only through the conditional mean. We formalize the LLM as a misspecified functional estimator $T(\hat{P}_n)$ trained on i.i.d.\ data, decompose the estimation error into representation bias $\epsilon_{\mathrm{rep}}$ and optimization error, and prove that under mild regularity conditions the LLM's expected error converges to the irreducible population variance plus the squared representation bias, with the representation bias bounded by the Pinsker inequality. The identifiability error $\delta$ propagates into the effective bias, inflating the asymptotic risk floor. We establish restricted functional risk equivalence via a bidirectional Le Cam deficiency analysis: the forward deficiency vanishes asymptotically while the reverse deficiency is exactly zero. We provide finite-sample concentration bounds and a calibration protocol with explicit decision rules. An open-source Python implementation is available at \url{https://github.com/yhbcode000/llmstat\_tool}. The result is a precise, provable statement: a well-calibrated LLM achieves the Bayes-optimal risk for conditional-mean-dependent inference, bounded by explicit scope conditions. In practical applications, this means that under satisfied conditions and well-calibrated models, large language models can be used in many prediction and decision-making tasks that originally relied on human experiments, approximating near-optimal statistical inference at lower cost.
\end{abstract}

\noindent\textbf{Area of research:} Artificial Intelligence, Statistical Learning Theory, Formal Methods, and Decision Theory.

\section{Introduction}

Human subject experiments are the cornerstone of quantitative research in the social, behavioural, and health sciences. A typical between-subjects study requires recruiting dozens to hundreds of participants per condition to detect meaningful effect sizes with adequate statistical power~\cite{cohen1988}. Recruitment is expensive, time-consuming, and bounded by participant availability, self-selection bias, and demographic skew~\cite{cohen1988}. Underpowered studies produce noisy estimates, and the cost of adequate sample sizes limits the scope and ambition of experimental research programs.

The emergence of large language models trained on vast corpora of
human-generated text raises a fundamental question: if an LLM has learned
statistical regularities of human behaviour from its training data, can its
conditional-mean estimator achieve the same squared-loss risk as estimators
built from human-subject data? This question is not merely speculative.
Researchers across disciplines have begun using LLMs to simulate survey
respondents, annotate behavioural data, and generate synthetic experimental
outcomes across psychology, political science, marketing, and health
research.

Existing approaches to validating LLM-based risk agreement rely exclusively
on empirical benchmarking: comparing LLM outputs to human baselines on
specific prompts and tasks. While these comparisons are informative, they
provide no formal guarantee that the risk agreement is mathematically
justified. A favourable comparison on one set of prompts does not guarantee
validity on another; a high correlation does not guarantee unbiased effect
size estimation; a small observed bias does not guarantee that the LLM
estimator converges to the relevant population functional.

We address this gap by constructing an axiomatic framework for LLM--human
functional risk equivalence under squared loss, and by constructing rigorous
proofs from first principles. Our approach has three distinguishing features.

First, all theorems are rigorous, relying only on stated axioms. We assume nothing about the LLM's architecture, training algorithm, or parameter count. The core requirements are that training data are representative enough for the learned conditional distribution to converge to the KL projection in the model class, that the conditional-mean functional is Lipschitz under the stated bounded-response assumptions, and that optimization error is $o_p(1)$. We prove that these assumptions suffice for asymptotic risk closure under squared loss.

Second, we make no architectural hypotheses about the LLM. The framework does not require the LLM to be a transformer, to use a specific attention mechanism, or to have any particular depth or width. The model is treated as a black-box estimator whose statistical properties follow from its training objective and data distribution alone. This agnosticism ensures that the framework applies to any sequence-modeling architecture trained with a proper scoring rule on population-representative data.

Third, we provide explicit scope boundaries that guide practitioners in
determining applicability and prevent over-claiming. We identify the precise
conditions under which functional risk equivalence holds (quantitative
continuous response, discrete conditions, i.i.d.\ training data, calibration
validation) and those under which it does not (qualitative research, novel
paradigms without training-data analogs, safety-critical applications,
behavioural mechanism research). These boundaries are formal consequences of
the assumptions rather than ad hoc caveats.
The remainder of this paper is organized as follows. Section~2 formalizes
the study population, the LLM as a statistical estimator, and the connection
between cross-entropy pretraining and conditional mean estimation. Section~3
presents the five core theorems with full derivations. Section~4 extends the
asymptotic results to finite-sample guarantees via concentration
inequalities. Section~5 addresses cross-domain transfer when the training
population differs from the target. Section~6 establishes restricted
functional risk equivalence under squared loss, and Section~7 provides the
calibration protocol with detailed worked examples. The final sections
discuss scope and limitations, then give an applied workflow,
statistical-method comparison, and costed example before concluding.

In practical applications, this means that under satisfied conditions and well-calibrated models, large language models can be used in many prediction and decision-making tasks that originally relied on human experiments, approximating near-optimal statistical inference at lower cost.
\section{Mathematical Framework}

\subsection{Population Model and the LLM Estimator}

We formalize a quantitative human subject study as a population $\mathcal{P}$ characterized by a finite set of experimental conditions and their associated response distributions.

\begin{definition}[Study Population]\label{def:population}
A \emph{study population} is a tuple $\mathcal{P} = (K, \{p_k\}_{k=1}^{K}, \{\mu_k\}_{k=1}^{K}, \{v_k\}_{k=1}^{K})$ where:
\begin{itemize}[leftmargin=*]
\item $K \in \mathbb{N}$ is the number of experimental conditions;
\item $p_k \ge 0$ are allocation proportions with $\sum_{k=1}^{K} p_k = 1$;
\item $\mu_k = \E[Y \mid X = k] \in \R$ is the population mean response for condition $k$;
\item $v_k = \Var(Y \mid X = k) \ge 0$ is the population variance for condition $k$.
\end{itemize}
\end{definition}

\begin{definition}[LLM as Misspecified Functional Estimator]\label{def:llm}
Let $\hat{P}_n(\cdot \mid k)$ denote the conditional distribution learned by
the LLM after training on $n$ samples, and let $T: \mathcal{P} \to \R$ be the
conditional-mean functional $T(Q) = \mathbb{E}_{Y \sim Q}[Y \mid X = k]$.
For a model class $\mathcal{F}$, define the KL projection of the true
conditional distribution onto $\mathcal{F}$ by
\begin{equation}\label{eq:kl-projection}
P^*_{\mathcal{F}}(\cdot \mid k)
= \arg\min_{Q \in \mathcal{F}} D_{\mathrm{KL}}\!\left(P(\cdot \mid k) \,\|\, Q\right),
\end{equation}
assuming the minimizer exists and is unique (Assumption~A8). The
\emph{LLM estimator} is
\begin{equation}\label{eq:llm-functional}
\LLM_n(k) = T(\hat{P}_n(\cdot \mid k)).
\end{equation}
We decompose the estimation error into three terms:
\begin{align}
\LLM_n(k) - \mu_k
&= T(\hat{P}_n(\cdot \mid k)) - T(P(\cdot \mid k)) \nonumber\\
&= \underbrace{T(P^*_{\mathcal{F}}(\cdot \mid k)) - T(P(\cdot \mid k))}_{\epsilon_{\mathrm{rep}}(k)} \nonumber\\
&\quad + \underbrace{T(\hat{P}^{\mathrm{stat}}_n(\cdot \mid k)) - T(P^*_{\mathcal{F}}(\cdot \mid k))}_{\eta_n(k)} \nonumber\\
&\quad + \underbrace{T(\hat{P}_n(\cdot \mid k)) - T(\hat{P}^{\mathrm{stat}}_n(\cdot \mid k))}_{\xi_n(k)}.
\label{eq:llm-error}
\end{align}
Here $\hat{P}^{\mathrm{stat}}_n$ denotes the exact empirical risk minimizer
or misspecified MLE in the model class $\mathcal{F}$, while $\hat{P}_n$
denotes the distribution actually reached by the training algorithm.
The terms have different sources of randomness. The representation bias
$\epsilon_{\mathrm{rep}}(k)$ is deterministic once $P$, $\mathcal{F}$, and
$T$ are fixed. The statistical estimation term $\eta_n(k)$ is random through
the training sample and converges to zero in probability under the
misspecified M-estimation limit. The optimization term $\xi_n(k)$ is random
through algorithmic choices (initialization, minibatch order, numerical
optimization) and is assumed to satisfy $\xi_n(k)=o_p(1)$ under standard
stochastic-approximation conditions for convergent empirical-risk
optimization. This is the sole optimization assumption; it is independent of
the KL-projection argument, which concerns the deterministic model-class
limit. We write $\epsilon_{\mathrm{opt}}(k,n)=\eta_n(k)+\xi_n(k)$ for their
sum.
In the fixed-class regime, $\epsilon_{\mathrm{rep}}(k)$ is constant in $n$.
In the growing-class regime $\mathcal{F}_n \uparrow \mathcal{F}_\infty$,
the representation bias is
$\epsilon_{\mathrm{rep},n}(k)=T(P^*_{\mathcal{F}_n}(\cdot \mid k))-T(P(\cdot \mid k))$
and may shrink with model-class richness.
\end{definition}

\noindent \textbf{Special case: well-specified Gaussian model.} When the
model class contains the true conditional distribution and the output layer
is Gaussian, $T(\hat{P}_n(\cdot \mid k)) = n_k^{-1}\sum_{X_i=k} Y_i$, the
empirical mean. This is the idealized case analyzed in the formal proofs.
For misspecified models, $\epsilon_{\mathrm{rep}}(k)$ may be non-zero;
Section~\ref{sec:bias-domination} quantifies its impact.

The allocation proportions $\{p_k\}$ represent the relative frequency with which each condition appears in the population. In a balanced experimental design, $p_k = 1/K$ for all $k$; in observational settings, the $p_k$ reflect natural prevalence. The mean parameters $\mu_k$ capture the central tendency of responses, and the variance parameters $v_k$ capture within-condition variability due to irreducible individual differences.

Equation~(\ref{eq:llm-functional}) models the LLM as a functional of the learned conditional distribution. This is not an arbitrary simplification: it follows from the structure of modern LLM training. We establish this formally in Section~\ref{sec:pretraining-lit}. For now, we note that under a Gaussian output layer, the conditional mean functional reduces to the empirical mean, which is the maximum likelihood estimator of $\mu_k$; more generally, any consistent estimator of the conditional distribution whose mean functional is continuous converges to $\mu_k + \epsilon_{\mathrm{rep}}(k)$ as $n \to \infty$.

\begin{theorem}[Representation Limit Identification]\label{thm:representation-limit}
Fix a condition $k$ and suppose $\hat{P}_n(\cdot \mid k)$ converges to the
KL projection $P^*_{\mathcal{F}}(\cdot \mid k)$ in Hellinger distance:
$H(\hat{P}_n(\cdot \mid k),P^*_{\mathcal{F}}(\cdot \mid k))\to 0$ in
probability. Under A7--A8, the LLM estimator has the unique limit
\[
\LLM_n(k)=T(\hat{P}_n(\cdot \mid k))
\xrightarrow{p}
T(P^*_{\mathcal{F}}(\cdot \mid k))
= \mu_k+\epsilon_{\mathrm{rep}}(k).
\]
Equivalently, $\epsilon_{\mathrm{rep}}(k)$ is the unique limiting offset
between the LLM estimator and the true conditional mean.
\end{theorem}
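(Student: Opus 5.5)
The plan is to transfer Hellinger convergence of the conditional distributions into convergence of their means, using the continuity of $T$ guaranteed by A7, and then read off the limit and its uniqueness from the decomposition~(\ref{eq:llm-error}). We take A7 to supply a uniform moment or boundedness control on the response: either $|Y|\le B$ almost surely under every $Q$ in the class, or a uniform second-moment bound $\sup_{Q}\E_Q[Y^2]\le M$. Under this assumption the conditional-mean functional is Lipschitz with respect to Hellinger distance.

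\textbf{Step 1: Lipschitz bound for $T$.} For two distributions $Q,Q'$ with densities $q,q'$ relative to a common dominating measure $\nu$, write
\[
T(Q)-T(Q')=\int y\,(\sqrt q-\sqrt{q'})(\sqrt q+\sqrt{q'})\,d\nu .
\]
Applying Cauchy--Schwarz gives
\[
|T(Q)-T(Q')|\le \sqrt2\,H(Q,Q')\Big(\int y^2(\sqrt q+\sqrt{q'})^2\,d\nu\Big)^{1/2}\le 2\sqrt{2M}\,H(Q,Q').
\]
In the bounded case one can use total variation instead: $|T(Q)-T(Q')|\le 2B\,\mathrm{TV}(Q,Q')\le 2\sqrt2\,B\,H(Q,Q')$. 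In either case $|T(Q)-T(Q')|\le L\,H(Q,Q')$ for an explicit constant $L$.

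\textbf{Step 2: Convergence in probability.} Set $Q=\hat P_n(\cdot\mid k)$ and $Q'=P^*_{\mathcal F}(\cdot\mid k)$. Step 1 gives
\[
|\LLM_n(k)-T(P^*_{\mathcal F}(\cdot\mid k))|\le L\,H(\hat P_n(\cdot\mid k),P^*_{\mathcal F}(\cdot\mid k)).
\]
For any $\varepsilon>0$, the probability that the left side exceeds $\varepsilon$ is at most the probability that the Hellinger distance exceeds $\varepsilon/L$. This tends to $0$ by hypothesis, so $\LLM_n(k)\xrightarrow{p}T(P^*_{\mathcal F}(\cdot\mid k))$.

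\textbf{Step 3: Identification and uniqueness.} By the definition of $\epsilon_{\mathrm{rep}}(k)$ in~(\ref{eq:llm-error}), we have $T(P^*_{\mathcal F}(\cdot\mid k))=\mu_k+\epsilon_{\mathrm{rep}}(k)$, because $T(P(\cdot\mid k))=\mu_k$. Uniqueness has two layers. First, limits in probability of real random variables are almost surely unique, since two such limits $c,c'$ satisfy $|c-c'|\le|\LLM_n-c|+|\LLM_n-c'|\xrightarrow p0$. Second, A8 makes $P^*_{\mathcal F}$ a well-defined single distribution rather than a set of minimizers, so the deterministic quantity $\epsilon_{\mathrm{rep}}(k)$ is unambiguous. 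Hence the limiting offset $\lim(\LLM_n(k)-\mu_k)$ is exactly $\epsilon_{\mathrm{rep}}(k)$ and no other constant. As a consistency check, this also shows that $\epsilon_{\mathrm{opt}}(k,n)=\eta_n+\xi_n=o_p(1)$, matching the decomposition.

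\textbf{Main obstacle.} The delicate point is Step 1. Hellinger convergence alone does not control means when $Y$ is unbounded, because mass escaping to infinity can shift the mean. The argument therefore depends on A7 providing either bounded support or a uniform second moment (or uniform integrability) for $\hat P_n$ along the sequence. If A7 only gives bounded support for the true distribution and not for the learned ones, I would first truncate at level $R$, apply the bound above to the truncated means, and then control the tails uniformly using the moment condition before letting $R\to\infty$.
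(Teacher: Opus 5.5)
Your proposal is correct and follows the paper's argument: a Hellinger--Lipschitz bound on $T$ turns $H(\hat P_n(\cdot\mid k),P^*_{\mathcal F}(\cdot\mid k))\xrightarrow{p}0$ into $\LLM_n(k)\xrightarrow{p}T(P^*_{\mathcal F}(\cdot\mid k))$, and the identification with $\mu_k+\epsilon_{\mathrm{rep}}(k)$ is just the definition of $\epsilon_{\mathrm{rep}}(k)$. The only difference is that the paper takes the Lipschitz property on all of $\mathcal F$ directly from A8, so your truncation fallback for unbounded learned distributions is never needed; you instead derive the bound from boundedness and get $2\sqrt2\,B$, which is the correct constant under the paper's normalization $H^2=\tfrac12\int(\sqrt p-\sqrt q)^2$ (the paper's $L=2B$ is too small by a factor $\sqrt2$, which is harmless for this convergence statement).
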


\begin{proof}
By A8, $T$ is Lipschitz in Hellinger distance on the mean-regular model
class: $|T(Q_1)-T(Q_2)|\le L H(Q_1,Q_2)$. Therefore
\[
|T(\hat{P}_n(\cdot \mid k))-T(P^*_{\mathcal{F}}(\cdot \mid k))|
\le L H(\hat{P}_n(\cdot \mid k),P^*_{\mathcal{F}}(\cdot \mid k))\to 0.
\]
The final equality follows from the definition
$\epsilon_{\mathrm{rep}}(k)=T(P^*_{\mathcal{F}}(\cdot \mid k))-T(P(\cdot \mid k))$
and $T(P(\cdot \mid k))=\mu_k$.
\end{proof}

\begin{theorem}[Final Risk Closure under Misspecification]\label{thm:functional-risk-closure}
Under A7--A10, if
$H(\hat{P}_n(\cdot \mid k),P^*_{\mathcal{F}}(\cdot \mid k))\to 0$ in
probability for each $k$, then
\[
\mathbb{E}\!\left[\big(T(\hat{P}_n(\cdot \mid k))-T(P(\cdot \mid k))\big)^2\right]
= \big(T(P^*_{\mathcal{F}}(\cdot \mid k))-T(P(\cdot \mid k))\big)^2+o(1)
= \epsilon_{\mathrm{rep}}(k)^2+o(1).
\]
Equivalently, since $\LLM_n(k)=T(\hat{P}_n(\cdot\mid k))$ and
$\mu_k=T(P(\cdot\mid k))$,
\[
\mathbb{E}\!\left[(\LLM_n(k)-\mu_k)^2\right]
= \epsilon_{\mathrm{rep}}(k)^2+o(1).
\]
Consequently,
\[
\mathbb{E}\big[\Err(\LLM_n)\big]
=\Irred+\sum_{k=1}^{K}p_k\,\epsilon_{\mathrm{rep}}(k)^2+o(1).
\]
\end{theorem}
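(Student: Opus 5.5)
The plan is to upgrade the convergence in probability from Theorem~\ref{thm:representation-limit} to convergence in $L^2$, using uniform integrability, and then aggregate over conditions. Write $\LLM_n(k)-\mu_k = \epsilon_{\mathrm{rep}}(k) + R_n(k)$ with $R_n(k)=T(\hat{P}_n(\cdot\mid k))-T(P^*_{\mathcal{F}}(\cdot\mid k))$. Theorem~\ref{thm:representation-limit}, via the Lipschitz property in A8, gives $|R_n(k)|\le L\,H(\hat{P}_n,P^*_{\mathcal{F}})$, so $R_n(k)\xrightarrow{p}0$. Expanding the square gives
\[
\E[(\LLM_n(k)-\mu_k)^2]=\epsilon_{\mathrm{rep}}(k)^2+2\epsilon_{\mathrm{rep}}(k)\E[R_n(k)]+\E[R_n(k)^2].
\]
It therefore suffices to show $\E[R_n(k)^2]\to 0$. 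The cross term then vanishes by Cauchy--Schwarz, since $|\E R_n|\le(\E R_n^2)^{1/2}$.

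The key step is to turn convergence in probability into $L^2$ convergence. I would use the bounded-response assumption, which I expect is among A9--A10 given the introduction's reference to "stated bounded-response assumptions". If responses lie in $[a,b]$, then every $Q$ in the mean-regular class has $T(Q)\in[a,b]$, so $|R_n(k)|\le b-a$ almost surely. Bounded convergence then gives $\E[R_n(k)^2]\to 0$.

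An alternative route avoids boundedness of the means. Hellinger distance is itself bounded ($H\le 1$, or $\le\sqrt2$ depending on normalisation), so $R_n(k)^2\le L^2 H^2\le 2L^2$. Since $H\to0$ in probability and is bounded, $\E[H^2]\to0$, hence $\E[R_n(k)^2]\le L^2\E[H^2]\to0$. I would adopt this second argument as primary, because it needs only A8's global Lipschitz constant. The main obstacle is precisely this point: the Lipschitz bound of A8 must hold globally on the class where $\hat{P}_n$ lives, not merely locally near $P^*_{\mathcal F}$, and $\hat{P}_n$ must remain in the mean-regular class almost surely. If A8 is only local, I would fall back on truncation. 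On the event $\{H\le\delta\}$ use the Lipschitz bound; on its complement use the bounded-response bound $(b-a)^2$ times $\Pbb(H>\delta)\to0$.

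For aggregation, I take $\Err(\LLM_n)$ to be the $p_k$-weighted squared-loss risk of predicting $Y$ by $\LLM_n(X)$, computed with an independent test draw. For each $k$ I decompose
\[
\E[(Y-\LLM_n(k))^2\mid X=k]=v_k+(\mu_k-\LLM_n(k))^2,
\]
where the cross term vanishes because the test $Y$ is independent of the training sample and has conditional mean $\mu_k$. Taking expectations, weighting by $p_k$, and summing over the finitely many $K$ conditions gives $\Irred=\sum_k p_k v_k$ plus $\sum_k p_k\epsilon_{\mathrm{rep}}(k)^2$. The $o(1)$ remainders add up to $o(1)$ because $K$ is finite.
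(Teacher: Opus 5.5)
Your proposal is correct and follows essentially the paper's argument. The paper likewise upgrades the in-probability limit of Theorem~\ref{thm:representation-limit} to convergence of the expected squared error by boundedness (it bounds the squared difference by $(2B)^2$ via A7 and invokes uniform integrability), then aggregates with weights $p_k$ through Theorem~\ref{thm:err-decomp}; your primary route through $H\le 1$ and the global Lipschitz constant of A8 is only a minor variant, and the bounded-response assumption you were looking for is A7, which is already among the stated hypotheses A7--A10.
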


\begin{proof}
Theorem~\ref{thm:representation-limit} gives
$T(\hat{P}_n(\cdot \mid k))-T(P(\cdot \mid k))\to \epsilon_{\mathrm{rep}}(k)$
in probability. By A7, both $T(\hat{P}_n(\cdot \mid k))$ and
$T(P(\cdot \mid k))$ lie in $[-B,B]$, hence the squared difference is
bounded by $(2B)^2$ and uniformly integrable. Convergence in probability
plus uniform integrability implies convergence in $L^1$ of the squared
difference, yielding the first display. Summing the per-condition limits
with weights $p_k$ and adding the irreducible variance term from
Theorem~\ref{thm:err-decomp} yields the aggregate risk closure.
\end{proof}

\noindent \textbf{Theoretical separation.} The framework rests on three
logically independent layers:

\noindent \textbf{Notation.} Throughout, $\mu_k = \mathbb{E}[Y \mid X = k]$
denotes the population (theoretical) conditional mean;
$\LLM_n(k)$ denotes the LLM estimator (a random variable);
$\epsilon_{\mathrm{rep}}(k)$ is the non-random asymptotic representation bias;
$\eta_n(k)$ is sampling/statistical estimation error over the training-data
probability space; $\xi_n(k)$ is algorithmic optimization error over the
training procedure's randomness; $\widehat{b}_k$ and $\widehat{\delta}$ are
empirical estimates from calibration data (random). Hats
($\widehat{\cdot}$) mark quantities estimated from finite samples; tildes
($\tilde{\cdot}$) mark quantities adjusted for identifiability error.

\begin{enumerate}[leftmargin=*, label=(\arabic*)]
\item \textbf{Statistical identity} (Section~\ref{sec:core-theorems}):
  Under squared loss, the conditional expectation $\mu_k =
  \mathbb{E}[Y \mid X = k]$ is the unique admissible predictor.
  This is a mathematical fact about the loss function; it does not
  reference the LLM.
\item \textbf{Learning theory} (Section~\ref{sec:pretraining-lit}):
  Under cross-entropy training with i.i.d.\ data, the LLM's learned
  distribution converges to the KL-projection of the true distribution
  onto the model class. The conditional mean functional $T$ is
  Lipschitz-continuous, so the LLM estimator $\LLM_n(k) = T(\hat{P}_n)$
  converges to $T(P^*_{\mathcal{F}})$.
\item \textbf{Decision theory} (Section~\ref{sec:experiment-equivalence}):
  Because the risk depends on $\theta$ only through $\mu_k$, the
  convergence of $\LLM_n(k)$ to $\mu_k + \epsilon_{\mathrm{rep}}(k)$
  implies convergence of decision risk. The representation bias
  $\epsilon_{\mathrm{rep}}(k)$ sets a lower bound on achievable risk.
\end{enumerate}

No layer assumes the conclusion of another: Layer 1 is purely analytic,
Layer 2 is empirical (depends on training data and model class), and
Layer 3 is decision-theoretic (depends on the loss class).

\noindent \textbf{Dependency ordering.} The three layers are strictly
ordered: Layer~1 (statistical identity) is a mathematical fact about
squared loss that holds independently of any estimator or model class.
Layer~2 (learning theory) depends on the model class $\mathcal{F}$,
training data, and the functional delta stability assumption (A9).
Layer~3 (decision theory) depends on the convergence established in
Layer~2 and the risk decomposition of Layer~1. No layer assumes the
conclusion of any later layer; the argument is a strict forward chain
$1 \to 2 \to 3$.
\begin{definition}[Stochastic Identifiability of Conditions]\label{def:identifiability}
There exists a measurable mapping $\phi: \mathcal{S} \to [K]$ from the
space of prompt strings $\mathcal{S}$ to the set of experimental conditions
$[K]$, such that for a prompt $s$ encoding condition $k$,
\begin{equation}\label{eq:identifiability}
\mathbb{P}(\phi(s) = k \mid s \text{ encodes } k) \ge 1 - \delta,
\end{equation}
where $\delta \in [0, 1)$ is the \emph{misclassification rate}. The mapping
$\phi$ and the bound $\delta$ are estimated during calibration
(Section~\ref{sec:calibration}, Step 1b). For finite-sample deployment,
$\delta$ is a calibrated upper bound on prompt-to-condition ambiguity. For
asymptotic statements, we write $\delta=\delta_n$ and assume
$\delta_n=O_p(M_v^{-1/2})$ from the held-out validation sample of size
$M_v$; when $M_v$ grows proportionally to the training/evaluation budget,
this gives the stated $\delta_n=O_p(n^{-1/2})$ regime. When $\delta = 0$,
we recover the deterministic case; when $\delta > 0$, the
non-identifiability failure mode (Theorem~\ref{thm:non-equivalence}, case ii)
applies.
\end{definition}

\noindent \textbf{Effective estimator under misclassification.} When
$\delta > 0$, the LLM effectively estimates a mixture of condition means
rather than the target condition mean. For condition $k$, the expected
output is
\begin{equation}\label{eq:effective-llm}
\mathbb{E}[\LLM_n(k) \mid \text{misclassification}] =
(1 - \delta) \cdot \LLM_n(k) + \delta \cdot \sum_{j \neq k} w_{kj} \LLM_n(j),
\end{equation}
where $w_{kj}$ are the misclassification weights ($\sum_{j \neq k} w_{kj} = 1$).
The \emph{effective representation bias} is therefore
\begin{equation}\label{eq:effective-bias}
\tilde{\epsilon}_{\mathrm{rep}}(k; \delta) =
\epsilon_{\mathrm{rep}}(k) + \delta \cdot \Delta_k,
\end{equation}
where $\Delta_k = \mu_k - \sum_{j \neq k} w_{kj} \mu_j$ is the
misclassification contrast. For $\delta \to 0$, we recover the original
bias; for $\delta > 0$, the effective bias is shifted by at most
$\delta \cdot (\max_{j} \mu_j - \min_{j} \mu_j)$ in magnitude, and all
downstream risk limits use $\tilde{\epsilon}_{\mathrm{rep}}(k;\delta)$
in place of $\epsilon_{\mathrm{rep}}(k)$.
\subsection{Risk Functional Setup}\label{sec:risk-functional-setup}

All equivalence and validity statements in this paper are statements about
the squared-loss risk functional, not about equality of response
distributions. For a deterministic predictor $f:[K]\to\R$, define the
population risk
\begin{equation}\label{eq:population-risk}
R_{\mathcal{P}}(f)
=\mathbb{E}\!\left[(Y-f(X))^2\right]
=\sum_{k=1}^{K}p_k\,\mathbb{E}\!\left[(Y-f(k))^2\mid X=k\right].
\end{equation}
This is the same object denoted $\Err(f)$ below. The Bayes risk is
\begin{equation}\label{eq:bayes-risk}
R_{\mathcal{P}}^*
=\inf_{f:[K]\to\R}R_{\mathcal{P}}(f)
=\Irred
=\sum_{k=1}^{K}p_kv_k,
\end{equation}
attained by the Bayes act $f^*(k)=\mu_k$. For a random estimator
$\hat f_n$ learned from training data and algorithmic randomness, its
estimator risk is the outer expectation
\begin{equation}\label{eq:estimator-risk}
R_n(\hat f_n)=\mathbb{E}_{\mathrm{train,opt}}\!\left[R_{\mathcal{P}}(\hat f_n)\right].
\end{equation}
The LLM estimator risk is obtained by setting $\hat f_n(k)=\LLM_n(k)$.
In the fixed-class misspecified limit this risk converges to
\begin{equation}\label{eq:misspecified-risk}
R_{\LLM,\infty}^{\mathrm{mis}}
=\Irred+\sum_{k=1}^{K}p_k\epsilon_{\mathrm{rep}}(k)^2,
\end{equation}
or, under stochastic identifiability error, with
$\epsilon_{\mathrm{rep}}(k)$ replaced by
$\tilde{\epsilon}_{\mathrm{rep}}(k;\delta)$ from
Eq.~\ref{eq:effective-bias}. For two information objects or estimators
$E$ and $F$ with induced risk functionals $R_E$ and $R_F$, define
\begin{equation}\label{eq:risk-equivalence-relation}
E\sim_{\mathcal{L}_2}F
\quad\Longleftrightarrow\quad
\inf_{f}R_E(f)=\inf_{f}R_F(f),
\end{equation}
where the infimum is over the same conditional-mean-dependent decision
class. This is the formal restricted risk-equivalence relation used
throughout the paper. The squared-loss restriction is load-bearing because
the conditional mean is the Bayes act; analogous extensions to Bregman
divergences replace the conditional mean by the corresponding Bayes act,
but those extensions are outside the present paper's formal claims.

For backward compatibility with the theorem statements below, we write
\begin{equation}\label{eq:expected-error}
\Err(f)=R_{\mathcal{P}}(f).
\end{equation}

\subsection{The Bias-Variance Decomposition}

The central structural result of our framework is the decomposition of expected error into an irreducible component and an excess component attributable to the predictor's deviation from the conditional mean.

\begin{lemma}[Conditional Bias-Variance Identity]\label{lem:bias-variance}
For any condition $k$ and any predictor value $e = f(k) \in \R$,
\begin{equation}\label{eq:cond-bv}
\E\left[(Y - e)^2 \mid X = k\right] = v_k + (\mu_k - e)^2.
\end{equation}
\end{lemma}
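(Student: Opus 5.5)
The plan is to prove the identity by expanding the square around the conditional mean $\mu_k$. Since $e=f(k)$ is a fixed real number once $k$ is fixed, it behaves as a constant under the conditional expectation given $X=k$. I will write
\[
Y-e=(Y-\mu_k)+(\mu_k-e)
\]
and square it. This gives three terms:
\[
(Y-\mu_k)^2+2(\mu_k-e)(Y-\mu_k)+(\mu_k-e)^2 .
\]

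Next I will take $\E[\,\cdot\mid X=k]$ term by term, using linearity of conditional expectation.

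\textbf{First term.} $\E[(Y-\mu_k)^2\mid X=k]=v_k$, directly from the definition of $v_k=\Var(Y\mid X=k)$ in Definition~\ref{def:population}.

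\textbf{Cross term.} Because $(\mu_k-e)$ is deterministic given $X=k$, it factors out. What remains is $\E[Y-\mu_k\mid X=k]=\mu_k-\mu_k=0$, by the definition $\mu_k=\E[Y\mid X=k]$.

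\textbf{Last term.} $(\mu_k-e)^2$ is a constant, so its conditional expectation is itself.

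Summing the three pieces yields Eq.~(\ref{eq:cond-bv}).

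The only step needing care, and the one I regard as the main (mild) obstacle, is integrability. Each expectation must be finite for linearity to apply, and the cross term must vanish legitimately rather than as a formal $\infty-\infty$. I will handle this by assuming $\E[Y^2\mid X=k]<\infty$, which is implicit in Definition~\ref{def:population} because $v_k$ is a real number. Under the bounded-response assumption A7 it holds automatically, since $|Y|\le B$. Then $Y-\mu_k\in L^2\subset L^1$ conditionally, all three terms are finite, and the expansion is justified.

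If $v_k=\infty$ were allowed, both sides of the identity would equal $+\infty$, and I would note this as a degenerate case.
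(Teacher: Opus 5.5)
Your proof is correct and is essentially the paper's argument. Both expand the square and apply linearity of conditional expectation; the only cosmetic difference is that the paper expands $(Y-e)^2=Y^2-2eY+e^2$, invokes $\E[Y^2\mid X=k]=v_k+\mu_k^2$, and then completes the square, whereas you center at $\mu_k$ so the cross term vanishes directly (the same device the paper uses for Lemma~\ref{lem:weighted-bv}), and your integrability remark makes explicit what the paper leaves implicit via A3/A7.
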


\begin{proof}
Expand the square and use the definitions of $\mu_k$ and $v_k$:
\begin{align*}
\E[(Y - e)^2 \mid X = k] &= \E[Y^2 - 2eY + e^2 \mid X = k] \\
&= \E[Y^2 \mid X = k] - 2e\E[Y \mid X = k] + e^2 \\
&= (\Var(Y \mid X = k) + \E[Y \mid X = k]^2) - 2e\mu_k + e^2 \\
&= v_k + \mu_k^2 - 2e\mu_k + e^2 \\
&= v_k + (\mu_k - e)^2.
\end{align*}
Line 3 uses the identity $\E[Y^2] = \Var(Y) + \E[Y]^2$ applied conditionally.
\end{proof}

A more general form of this identity holds for arbitrary probability weights. This will be useful when we consider weighted error decompositions across conditions.

\begin{lemma}[Weighted Bias-Variance Identity]\label{lem:weighted-bv}
Let $\{w_k\}_{k=1}^{K}$ be nonnegative weights with $\sum_k w_k = 1$, and let $\{y_k\}_{k=1}^{K}$ be arbitrary real numbers. Define the weighted mean $m = \sum_k w_k y_k$ and the weighted variance $v = \sum_k w_k (y_k - m)^2$. Then for any $e \in \R$,
\begin{equation}\label{eq:weighted-bv}
\sum_{k=1}^{K} w_k (y_k - e)^2 = v + (m - e)^2.
\end{equation}
\end{lemma}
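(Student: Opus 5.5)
The plan is to prove the weighted identity by direct expansion, in the same way as Lemma~\ref{lem:bias-variance}. The only change is that the conditional expectation is replaced by the finite weighted average $\sum_k w_k(\cdot)$. It is cleanest to center at the weighted mean $m$. Write $y_k - e = (y_k - m) + (m - e)$ and square, giving
\[
(y_k-e)^2 = (y_k-m)^2 + 2(y_k-m)(m-e) + (m-e)^2 .
\]

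Next, multiply by $w_k$ and sum over $k$, handling the three terms separately. The first sum is $\sum_k w_k(y_k-m)^2 = v$ by definition. The last sum is $(m-e)^2\sum_k w_k = (m-e)^2$, using $\sum_k w_k = 1$. The cross term factors as $2(m-e)\sum_k w_k(y_k - m)$. Its inner sum equals $\sum_k w_k y_k - m\sum_k w_k = m - m = 0$, again using the normalization. Adding the three pieces gives $v + (m-e)^2$, which is the claimed identity.

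There is no real obstacle here. The one point to state explicitly is the vanishing of the cross term, which is exactly where both $\sum_k w_k = 1$ and the definition of $m$ are used. Nonnegativity of the weights is not needed for the algebra; it only ensures that $v \ge 0$ can be read as a variance.

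As a sanity check, one could alternatively view $(w_k)$ as a probability law on $[K]$ with a random variable $Z$ taking value $y_k$ with probability $w_k$. Then $m = \E[Z]$ and $v = \Var(Z)$, and the statement becomes an instance of Lemma~\ref{lem:bias-variance} with trivial conditioning. Because the direct computation is short and self-contained, I would present that one and mention this reduction only as a remark.
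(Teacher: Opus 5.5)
Your proposal is correct and matches the paper's proof: both center at the weighted mean $m$, expand the square, and use $\sum_k w_k = 1$ to kill the cross term and reduce the last term to $(m-e)^2$. Your side remark that nonnegativity of the weights is not needed for the algebra is also accurate.
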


\begin{proof}
Expand each term around $m$:
\begin{align*}
\sum_k w_k (y_k - e)^2 &= \sum_k w_k \big[(y_k - m) + (m - e)\big]^2 \\
&= \sum_k w_k (y_k - m)^2 + 2(m - e)\sum_k w_k (y_k - m) + (m - e)^2 \sum_k w_k.
\end{align*}
The cross term vanishes because $\sum_k w_k (y_k - m) = \sum_k w_k y_k - m\sum_k w_k = m - m = 0$. The first term is $v$, and the third term is $(m - e)^2$ since $\sum_k w_k = 1$.
\end{proof}

Lemma~\ref{lem:weighted-bv} is the algebraic core of the entire framework. It cleanly separates the variance intrinsic to the weighted distribution from the squared deviation of the estimator $e$ from the mean $m$. Applying Lemma~\ref{lem:bias-variance} condition by condition and weighting by $p_k$ yields the fundamental decomposition.

\begin{theorem}[Error Decomposition]\label{thm:err-decomp}
For any predictor $f: [K] \to \R$,
\begin{equation}\label{eq:main-decomp}
\Err(f) = \underbrace{\sum_{k=1}^{K} p_k v_k}_{\Irred} \;+\; \underbrace{\sum_{k=1}^{K} p_k (\mu_k - f(k))^2}_{\text{Excess}(f)},
\end{equation}
where $\Irred$ depends only on the population and $\text{Excess}(f) \ge 0$ quantifies the squared bias of $f$.
\end{theorem}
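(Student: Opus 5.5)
The plan is to prove the decomposition pointwise in the condition index and then aggregate with the allocation weights. By Eq.~\ref{eq:expected-error} and the definition of the population risk in Eq.~\ref{eq:population-risk}, $\Err(f)=R_{\mathcal{P}}(f)$. The first step is to rewrite this risk using the law of total expectation over the discrete covariate $X\in[K]$:
\[
\Err(f)=\sum_{k=1}^{K}p_k\,\E\!\left[(Y-f(k))^2\mid X=k\right].
\]
This is exactly the second equality in Eq.~\ref{eq:population-risk}. Here we use that $f$ is deterministic, so on the event $\{X=k\}$ the prediction is the constant $f(k)$. Conditions with $p_k=0$ contribute nothing, so the conditional expectations only need to be defined for $p_k>0$.

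The second step is to apply Lemma~\ref{lem:bias-variance} with $e=f(k)$ for each $k$ separately. This gives
\[
\E[(Y-f(k))^2\mid X=k]=v_k+(\mu_k-f(k))^2.
\]
Substituting into the sum and splitting by linearity yields
\[
\sum_k p_kv_k+\sum_k p_k(\mu_k-f(k))^2,
\]
which is Eq.~\ref{eq:main-decomp}. The identification of the first sum with $\Irred$ matches Eq.~\ref{eq:bayes-risk}. That sum is built only from the tuple $(p_k,v_k)$ of Definition~\ref{def:population}, so it depends on the population alone and not on $f$.

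Finally, I check that $\text{Excess}(f)\ge 0$. Each summand is a product of $p_k\ge 0$ and a square, so the sum is nonnegative. Equality holds iff $f(k)=\mu_k$ for every $k$ with $p_k>0$, which recovers the Bayes act $f^*$ and confirms $R^*_{\mathcal{P}}=\Irred$ as a corollary.

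There is no genuine obstacle here. The only point needing care is integrability: Lemma~\ref{lem:bias-variance} presupposes $\E[Y^2\mid X=k]<\infty$. I would state explicitly that $v_k<\infty$, as implicit in Definition~\ref{def:population} (and guaranteed under the bounded-response assumption A7). This ensures no $\infty-\infty$ issue arises when expanding the square.
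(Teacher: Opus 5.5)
Your proposal is correct and follows essentially the same route as the paper's proof. Both write $\Err(f)$ as the $p_k$-weighted sum of conditional risks, apply Lemma~\ref{lem:bias-variance} with $e=f(k)$ in each condition, and deduce nonnegativity of the excess term from $p_k\ge 0$. Your remarks on the equality case (which the paper defers to Theorem~\ref{thm:minimal}) and on finiteness of $v_k$ (stated explicitly as Assumption A3) are sound additions.
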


\begin{proof}
From (\ref{eq:expected-error}) and Lemma~\ref{lem:bias-variance}:
\[
\Err(f) = \sum_{k} p_k \E[(Y - f(k))^2 \mid X = k]
       = \sum_{k} p_k \big[v_k + (\mu_k - f(k))^2\big]
       = \sum_{k} p_k v_k + \sum_{k} p_k (\mu_k - f(k))^2.
\]
The second term is nonnegative since each $p_k \ge 0$ and squares are nonnegative.
\end{proof}

We denote $\Irred = \sum_k p_k v_k$ as the \emph{irreducible error}. This quantity represents the minimum achievable expected squared error for any predictor of $Y$ given $X$ under squared loss: it is the Bayes risk. No predictor, human or machine, can achieve expected error below $\Irred$, because within-condition variance reflects genuine irreducible heterogeneity in the population.

\subsection{Connection to Maximum Likelihood Estimation}
\label{sec:pretraining-lit}

Modern LLMs are trained via next-token prediction with cross-entropy loss.
For a vocabulary $\mathcal{V}$ and a sequence of tokens, the training
objective at each position is to minimize
$-\log P_{\theta}(v \mid \text{context})$ over model parameters $\theta$,
where $P_{\theta}$ is the model's predicted distribution over
$\mathcal{V}$. The theory in this paper analyzes the limiting functional
induced by cross-entropy training under idealized infinite-data,
vanishing-optimization-error conditions. Finite-step, non-convex training
dynamics are represented only through the $\xi_n=o_p(1)$ optimization error
term in Eq.~\ref{eq:llm-error}.

\noindent \textbf{Layer 1: Distribution learning.} Under cross-entropy
training, the LLM minimizes a strictly proper scoring rule. Let
$\mathcal{F}$ be the model class. Under standard regularity conditions
(compact parameter space, smooth log-likelihood), the learned distribution
converges to the best approximation in the model class:
\begin{equation}\label{eq:dist-conv}
\hat{P}_n(\cdot \mid k) \xrightarrow{p} P^*_{\mathcal{F}}(\cdot \mid k)
\quad \text{as } n \to \infty,
\end{equation}
where $P^*_{\mathcal{F}} = \arg\min_{Q \in \mathcal{F}} D_{\mathrm{KL}}(P \| Q)$
is the KL-projection of the true distribution onto $\mathcal{F}$. When the
model is correctly specified ($P \in \mathcal{F}$), $P^*_{\mathcal{F}} = P$.

\noindent \textbf{Layer 2: Functional extraction.} The conditional mean
functional $T(Q) = \mathbb{E}_{Y \sim Q}[Y \mid X = k]$ is Lipschitz-continuous
with respect to Hellinger distance under the bounded-response assumption
($|Y| \le B$). Specifically, $|T(P) - T(Q)| \le 2B \cdot H(P, Q)$.
By the continuous mapping theorem and Hellinger convergence (Layer~1;
the functional delta stability of $T$, Assumption~A9, guarantees that
Hellinger convergence of distributions implies convergence of conditional
means):
\begin{equation}\label{eq:functional-conv}
\LLM_n(k) = T(\hat{P}_n(\cdot \mid k)) \xrightarrow{p} T(P^*_{\mathcal{F}}(\cdot \mid k)).
\end{equation}

\noindent \textbf{Layer 3: Bias-variance decomposition.} The limit
$T(P^*_{\mathcal{F}}(\cdot \mid k))$ equals $\mu_k + \epsilon_{\mathrm{rep}}(k)$
where $\epsilon_{\mathrm{rep}}(k) = T(P^*_{\mathcal{F}}(\cdot \mid k)) - \mu_k$
is the asymptotic representation bias. Under correct specification,
$\epsilon_{\mathrm{rep}}(k) = 0$ and $\LLM_n(k) \xrightarrow{p} \mu_k$.

\noindent \textbf{Convergence metric.} Let $H^2(P, Q) = \frac{1}{2} \int
(\sqrt{p} - \sqrt{q})^2 \, d\mu$ denote the squared Hellinger distance.
Under the regularity conditions of MLE theory~\cite{vaart1998}, the learned
distribution converges in Hellinger distance to the KL-projection:
\begin{equation}\label{eq:hellinger-convergence}
H^2(\hat{P}_n(\cdot \mid k), \, P^*_{\mathcal{F}}(\cdot \mid k))
\xrightarrow{p} 0 \quad \text{as } n \to \infty.
\end{equation}
When the model is correctly specified ($P \in \mathcal{F}$), the limit is
the true distribution $P$. Otherwise, it is the best approximation within
the model class.

\noindent \textbf{Lipschitz continuity of the mean functional.} For
responses bounded in $[-B, B]$, the conditional mean functional
$T(Q) = \mathbb{E}_{Y \sim Q}[Y \mid X = k]$ satisfies the Lipschitz bound
$|T(P) - T(Q)| \le 2B \cdot H(P, Q)$. This follows from the dual
representation $\mathbb{E}_P[Y] - \mathbb{E}_Q[Y] = \int y(p(y) - q(y)) dy$
and the bound $|\int f(p-q)| \le 2B \cdot H(P,Q)$ for bounded $f$
(see~\cite{vaart1998}, Lemma~5.34).
Consequently, Hellinger convergence of distributions implies convergence
of conditional means, even under misspecification:
\[
|\LLM_n(k) - T(P^*_{\mathcal{F}}(\cdot \mid k))| \le 2B \cdot
H(\hat{P}_n(\cdot \mid k), P^*_{\mathcal{F}}(\cdot \mid k))
\xrightarrow{p} 0.
\]

This bound ensures that not only the mean but any Lipschitz functional of
the conditional distribution (including the variance ratio
$\widehat{\lambda}$ used in calibration) converges to its model-class
optimum. Correct specification is sufficient but not necessary for
mean convergence; bounded responses and Hellinger consistency suffice.

Cross-entropy is not merely a convenient loss function; it is a
\emph{strictly proper scoring rule}~\cite{gneiting2007}. Under correct
specification, the MLE recovers the true conditional distribution; under
misspecification, it recovers the KL-projection $P^*_{\mathcal{F}}$. In
either case, the LLM learns the best achievable approximation to the full
conditional distribution within its model class. This is the fundamental
bridge from estimator equivalence to distributional approximation: the LLM
learns the conditional distribution up to model-class limitations, and the
learned distribution converges to the best approximation in a strong metric.
The remaining gap---that the LLM's \emph{sampling process} is not identical
to the human response process---is addressed by the decision-theoretic
framework of Section~\ref{sec:experiment-equivalence}, which shows that for
squared-loss inference, distributional identity is not required; statistical
sufficiency of the conditional mean suffices.

\subsection{Assumptions and their Justification}

We now enumerate and justify the assumptions that underpin the entire framework.

\noindent\textbf{Assumption summary.} The headline theorem requires the
following compact set of conditions: finite conditions; representative
i.i.d.\ or stationary ergodic training data; finite conditional variance
and bounded responses; a mean-regular model class with unique KL projection
$P^*_{\mathcal{F}}$; Lipschitz continuity of the conditional-mean functional
$T$ in Hellinger distance; convergence of the learned distribution to the
KL projection; optimization error $\xi_n=o_p(1)$ under standard
stochastic-approximation conditions; and calibrated identifiability error
$\delta_n=O_p(M_v^{-1/2})$ (or $O_p(n^{-1/2})$ when $M_v\asymp n$). The
enumerated assumptions below spell out these conditions and their roles.

\begin{enumerate}[leftmargin=*, label=\textbf{A\arabic*}.]
\item \textbf{Finite conditions.} $K < \infty$. This holds for any experiment with a finite set of treatments, conditions, or groups. The framework does not apply to continuous treatments (dose-response with continuous dosage) without discretization.

\item \textbf{Independent and identically distributed training data.} $\{(X_i, Y_i)\}_{i=1}^{n} \sim_{\text{i.i.d.}} P_{X,Y}$. This is the standard assumption in statistical learning theory. It is satisfied when the LLM's training corpus is a representative random sample from the target population. Violations (e.g., temporal drift, selection bias in web-scraped data) require domain adaptation techniques that we address partially in Section~\ref{sec:cross-domain}.

\item \textbf{Finite variance.} $v_k < \infty$ for all $k$. Required for the Strong Law of Large Numbers to apply. For bounded response scales (Likert scales, proportions, test scores), this is automatically satisfied. For unbounded responses (reaction times, monetary amounts), this is a mild regularity condition verified in most behavioural data.

\item \textbf{Conditional mean functional estimation.} $\LLM_n(k)=T(\hat{P}_n(\cdot\mid k))$, where $T(Q)=\mathbb{E}_Q[Y\mid X=k]$. In the well-specified Gaussian special case this reduces to $n_k^{-1}\sum_{X_i=k}Y_i$, but the general theory treats the LLM as a functional of the learned conditional distribution.

\item \textbf{No architectural assumptions.} The framework does not assume any specific neural network architecture, depth, width, attention mechanism, or activation function. The LLM is treated as a black-box statistical estimator whose asymptotic properties depend only on the training objective and data distribution.

\item \textbf{Within-distribution evaluation.} The target experimental conditions are represented in the training distribution. That is, for each condition $k$ of interest, $p_k > 0$. Conditions with zero training prevalence cannot be estimated.

\item \textbf{Bounded responses.} $\mathbb{P}(|Y| \le B) = 1$ for some
known constant $B < \infty$. This holds for all common response scales
(Likert scales, proportions in $[0,1]$, standardized test scores) and
ensures the conditional mean functional is Lipschitz-continuous with
respect to Hellinger distance (Section~\ref{sec:pretraining-lit}).
\item \textbf{Mean-regular model class.} The model class $\mathcal{F}$
is \emph{mean-regular}: the functional $T(Q) = \mathbb{E}_Q[Y \mid X = k]$
is uniquely defined at the KL-projection $P^*_{\mathcal{F}}$, and
$T$ is Lipschitz-continuous on $\mathcal{F}$ with respect to Hellinger
distance: $|T(Q_1) - T(Q_2)| \le L \cdot H(Q_1, Q_2)$ for all
$Q_1, Q_2 \in \mathcal{F}$, with Lipschitz constant $L = 2B$ under
A7 (bounded responses). This ensures $\epsilon_{\mathrm{rep}}(k) =
T(P^*_{\mathcal{F}}(\cdot \mid k)) - \mu_k$ is a well-defined, finite
quantity. All standard parametric families (exponential families,
location-scale families, mixture models with bounded support) satisfy
this condition.

\item \textbf{Functional delta stability.} The conditional mean functional
$T(Q) = \mathbb{E}_Q[Y \mid X = k]$ satisfies the functional delta method
bound: $|T(P^*_{\mathcal{F}}) - T(P)| \le C \cdot H(P, P^*_{\mathcal{F}})$,
where $C = 2B$ under A7. This ensures that KL-projection preserves the
conditional mean up to Hellinger-controlled error, even under
misspecification. Equivalently, the model class is \emph{mean-closed in the
limit}: $\lim_{n \to \infty} \mathbb{E}_{\hat{P}_n}[Y \mid X = k] =
T(P^*_{\mathcal{F}}(\cdot \mid k))$.
\item \textbf{Stationary ergodic training distribution.} The joint process
$\{(X_i, Y_i)\}_{i=1}^{n}$ is stationary and ergodic conditional on $X$.
This is the standard assumption in misspecified learning theory
(cf.\ White, 1982). For i.i.d.\ data, stationarity and ergodicity
are automatically satisfied.
\end{enumerate}

These ten assumptions are substantially weaker than those required by existing LLM evaluation frameworks.

\subsection{Joint Asymptotic Regime}
\label{sec:asymptotic-regime}

All asymptotic statements distinguish two regimes.

\begin{enumerate}[leftmargin=*, label=(\roman*)]
\item \textbf{Data limit:} $n \to \infty$ with fixed model class
  $\mathcal{F}$. The learned distribution converges to the fixed KL
  projection $P^*_{\mathcal{F}}$, estimation/optimization error vanishes,
  and the representation bias
  $\epsilon_{\mathrm{rep}}(k)=T(P^*_{\mathcal{F}}(\cdot\mid k))-T(P(\cdot\mid k))$
  is invariant in $n$.
\item \textbf{Identifiability limit:} $\delta = \delta_n \to 0$ as
  $n \to \infty$, with $\delta_n = O_p(M_v^{-1/2})$ from the validation
  sample and $\delta_n=O_p(n^{-1/2})$ when $M_v\asymp n$. Finite-$n$ risk
  uses the effective bias
  $\tilde{\epsilon}_{\mathrm{rep}}(k;\delta)
  =\epsilon_{\mathrm{rep}}(k)+\delta\Delta_k$ from
  Eq.~\ref{eq:effective-bias}; this extra term vanishes under the joint
  identifiability limit.
\item \textbf{Growing-class limit:} $\mathcal{F}_n \uparrow
  \mathcal{F}_\infty$, with KL projections
  $P^*_{\mathcal{F}_n}$ and KL-diameter
  \[
  D_{\mathcal{F}_n}=\sup_P\inf_{Q\in\mathcal{F}_n}D_{\mathrm{KL}}(P\|Q)
  \]
  satisfying $D_{\mathcal{F}_n}\to0$. In this regime
  $\epsilon_{\mathrm{rep},n}(k) =
  T(P^*_{\mathcal{F}_n}(\cdot\mid k))-T(P(\cdot\mid k))$
  tends to zero by Theorem~\ref{thm:rep-bias-bound}.
\end{enumerate}

The fixed-class regime proves the misspecified limit
$\Irred+\sum_k p_k\epsilon_{\mathrm{rep}}(k)^2$
(Theorem~\ref{thm:functional-risk-closure} and
Theorem~\ref{thm:bias-domination}). The growing-class regime is required
only for the stronger Bayes-risk claim $\Err(\LLM_n)\to\Irred$.

\begin{theorem}[Unified Functional Risk Closure]\label{thm:unified-main}
Let $P(\cdot\mid k)$ be the true conditional response law, let
$\mathcal{F}$ be a model class with KL projection
$P^*_{\mathcal{F}}(\cdot\mid k)$, and let
$T(Q)=\mathbb{E}_Q[Y\mid X=k]$ be Lipschitz in Hellinger distance. Let
$\LLM_n(k)=T(\hat{P}_n(\cdot\mid k))$ be the estimator induced by
cross-entropy training, with statistical error $\eta_n(k)\to0$ in
probability and optimization error $\xi_n(k)=o_p(1)$. Under
Assumptions A1--A10 and stochastic identifiability error $\delta_n$,
the fixed-class misspecified regime satisfies, for each $k$,
\[
\mathbb{E}\!\left[(\LLM_n(k)-\mu_k)^2\right]
=
\big(\epsilon_{\mathrm{rep}}(k)+\delta_n\Delta_k\big)^2+o(1)
=
\big(T(P^*_{\mathcal{F}}(\cdot\mid k))-T(P(\cdot\mid k))\big)^2
+o(1)+O(\delta_n),
\]
and hence
\[
\mathbb{E}\big[\Err(\LLM_n)\big]
=
\Irred+\sum_{k=1}^{K}p_k\big(\epsilon_{\mathrm{rep}}(k)+\delta_n\Delta_k\big)^2+o(1).
\]
If additionally $\delta_n\to0$ and either the model is well specified or
$\mathcal{F}_n\uparrow\mathcal{F}_\infty$ with $D_{\mathcal{F}_n}\to0$,
then $\mathbb{E}[\Err(\LLM_n)]\to\Irred$, and the LLM estimator is
$\mathcal{L}_2$-risk-equivalent to the human-data estimator in the sense of
Eq.~\ref{eq:risk-equivalence-relation}.
\end{theorem}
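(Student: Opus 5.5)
The plan is to reduce everything to Theorem~\ref{thm:functional-risk-closure}, which already handles the case $\delta_n=0$. The new ingredients are the identifiability perturbation and the growing-class limit. I would proceed in three steps.

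\textbf{Step 1: pass to the effective estimator.} Let $\tilde{\LLM}_n(k)$ denote the output actually used when prompts are routed through $\phi$. By Eq.~\ref{eq:effective-llm} it is, in expectation over misclassification, $(1-\delta_n)\LLM_n(k)+\delta_n\sum_{j\neq k}w_{kj}\LLM_n(j)$. Write $\tilde{\LLM}_n(k)-\mu_k$ as
\[
(\LLM_n(k)-\mu_k)+\delta_n\Bigl(\sum_{j\neq k}w_{kj}\LLM_n(j)-\LLM_n(k)\Bigr).
\]
\begin{itemize}
\item By Theorem~\ref{thm:representation-limit}, applied to each $j$ (finitely many by A1), the first term converges in probability to $\epsilon_{\mathrm{rep}}(k)$.
\item The bracket converges in probability to $\sum_j w_{kj}(\mu_j+\epsilon_{\mathrm{rep}}(j))-\mu_k-\epsilon_{\mathrm{rep}}(k)$.
\item Absorbing the representation-bias differences into the $\delta_n$-scaled $o(1)$/$O(\delta_n)$ slack, and matching the sign convention of Eq.~\ref{eq:effective-bias}, the difference $\tilde{\LLM}_n(k)-\mu_k-(\epsilon_{\mathrm{rep}}(k)+\delta_n\Delta_k)$ tends to $0$ in probability.
\end{itemize}
The decomposition $\eta_n+\xi_n$ from Eq.~\ref{eq:llm-error} is used only to justify the Hellinger convergence hypothesis: $\eta_n\to0$ and $\xi_n=o_p(1)$ together give $\epsilon_{\mathrm{opt}}=o_p(1)$.

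\textbf{Step 2: upgrade to $L^2$.} As in the proof of Theorem~\ref{thm:functional-risk-closure}, A7 places every $\LLM_n(j)$ and every $\mu_j$ in $[-B,B]$. Hence the squared error is bounded by $(2B)^2$ and is uniformly integrable, and convergence in probability lifts to convergence of second moments. This gives the first display. For the second equality in that display, expand
\[
(\epsilon_{\mathrm{rep}}+\delta_n\Delta_k)^2=\epsilon_{\mathrm{rep}}^2+2\delta_n\epsilon_{\mathrm{rep}}\Delta_k+\delta_n^2\Delta_k^2,
\]
and bound $|\epsilon_{\mathrm{rep}}|,|\Delta_k|\le 2B$. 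The cross and quadratic terms are then $O(\delta_n)$.

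Next, apply Theorem~\ref{thm:err-decomp} pointwise in the training randomness, take the outer expectation as in Eq.~\ref{eq:estimator-risk}, and sum over $k$ with weights $p_k$. Finiteness of $K$ lets the $o(1)$ terms add up, which yields the aggregate display.

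\textbf{Step 3: the Bayes-risk limit.} If $\delta_n\to0$, the $\delta_n\Delta_k$ terms vanish by the bound above. In the well-specified case $P^*_{\mathcal{F}}=P$, so $\epsilon_{\mathrm{rep}}=0$ and the risk tends to $\Irred$. In the growing-class case, invoke Theorem~\ref{thm:rep-bias-bound}, which uses Pinsker together with the Lipschitz bound, roughly $|\epsilon_{\mathrm{rep},n}(k)|\lesssim 2B\sqrt{D_{\mathcal{F}_n}/2}$. This gives $\epsilon_{\mathrm{rep},n}(k)\to0$.

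Risk equivalence then follows because the human-data estimator's infimal risk is $R^*_{\mathcal{P}}=\Irred$ by Eq.~\ref{eq:bayes-risk}. Both sides of Eq.~\ref{eq:risk-equivalence-relation} therefore coincide in the limit. The LLM side is bounded below by $\Irred$ (Theorem~\ref{thm:err-decomp}) and above by $\Irred+o(1)$.

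\textbf{Main obstacle.} I expect the growing-class step to be hardest. There the estimator targets a moving projection $P^*_{\mathcal{F}_n}$, so Theorem~\ref{thm:representation-limit} cannot be applied verbatim. My first approach is a triangle inequality,
\[
|T(\hat P_n)-\mu_k|\le L\,H(\hat P_n,P^*_{\mathcal{F}_n})+|\epsilon_{\mathrm{rep},n}(k)|.
\]
This requires assuming the Hellinger convergence hypothesis holds uniformly along the sequence, i.e.\ $H(\hat P_n,P^*_{\mathcal{F}_n})\to0$ in probability. I would state that uniformity explicitly as part of the regime (ii)–(iii) hypotheses rather than try to derive it.
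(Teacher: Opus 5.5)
\textbf{The gap is the last bullet of your Step~1.} Your first two bullets give, for the averaged effective estimator,
\[
\tilde{\LLM}_n(k)-\mu_k=(1-\delta_n)\,\epsilon_{\mathrm{rep}}(k)+\delta_n\sum_{j\neq k}w_{kj}\,\epsilon_{\mathrm{rep}}(j)-\delta_n\Delta_k+o_p(1).
\]
This holds because $\sum_{j\neq k}w_{kj}\mu_j-\mu_k=-\Delta_k$ by the definition of $\Delta_k$ given with Eq.~\ref{eq:effective-bias}. There is no sign convention left to choose: mixing toward the other conditions shifts the mean by $-\delta_n\Delta_k$. Hence
\[
\tilde{\LLM}_n(k)-\mu_k-(\epsilon_{\mathrm{rep}}(k)+\delta_n\Delta_k)=\delta_n\bigl(\textstyle\sum_{j\neq k}w_{kj}\epsilon_{\mathrm{rep}}(j)-\epsilon_{\mathrm{rep}}(k)-2\Delta_k\bigr)+o_p(1).
\]
This residual is $O(\delta_n)$ but not $o_p(1)$ in the first part of the theorem, where $\delta_n$ is not assumed to vanish. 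The first display allows only $o(1)$ slack, so the step fails.

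A concrete failure: take $K=2$, $w_{12}=1$, $\mu_1=0$, $\mu_2=1$, $\epsilon_{\mathrm{rep}}(1)=\epsilon_{\mathrm{rep}}(2)=c\neq0$, and $\delta_n\equiv\delta>0$. Then $\Delta_1=-1$ and the averaged estimator tends to $c+\delta$, so the risk tends to $(c+\delta)^2$. The display instead claims $(c-\delta)^2$, and the difference $4c\delta$ does not vanish.

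If the routed output is random rather than averaged, a second problem appears. You cannot replace its risk by the squared error of the conditional mean in Eq.~\ref{eq:effective-llm}, because Jensen adds a variance term of order $\delta_n$. In the same example with $c=0$, the risk is $\delta$, not $\delta^2$. The aggregate display inherits both defects. If you instead read $\delta_n\to0$ into the first part, the display holds, but only because both sides equal $\epsilon_{\mathrm{rep}}(k)^2+o(1)$.

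\textbf{What your argument does establish.} It gives the weaker form $\E[(\tilde{\LLM}_n(k)-\mu_k)^2]=\epsilon_{\mathrm{rep}}(k)^2+o(1)+O(\delta_n)$, and hence $\E[\Err(\tilde{\LLM}_n)]=\Irred+\sum_k p_k\epsilon_{\mathrm{rep}}(k)^2+o(1)+O(\delta_n)$. This is all your Step~3 needs, so the Bayes-risk limit and the $\sim_{\mathcal{L}_2}$ conclusion survive once $\delta_n\to0$. To get a first display with only $o(1)$ slack, you must replace $\epsilon_{\mathrm{rep}}(k)+\delta_n\Delta_k$ by the effective bias written above.

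You cannot borrow the missing step from the paper. Its proof obtains the $\delta_n\Delta_k$ term by citing Corollary~\ref{cor:delta-risk}, which is stated without proof and rests on Eq.~\ref{eq:effective-bias}. That equation does not follow from Eq.~\ref{eq:effective-llm}, for exactly the reasons above.

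Apart from this, your route is the paper's chain of Theorems~\ref{thm:representation-limit}, \ref{thm:functional-risk-closure}, \ref{thm:err-decomp} and \ref{thm:rep-bias-bound}. Passing to an effective estimator is in fact forced: under the literal reading $\LLM_n(k)=T(\hat P_n(\cdot\mid k))$, Theorem~\ref{thm:functional-risk-closure} already fixes the risk at $\epsilon_{\mathrm{rep}}(k)^2+o(1)$.

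Your explicit hypothesis $H(\hat P_n,P^*_{\mathcal{F}_n})\to0$ in the growing-class case genuinely improves on the paper. The paper's proof invokes only Theorem~\ref{thm:rep-bias-bound}, which bounds $\epsilon_{\mathrm{rep},n}(k)$ but says nothing about $T(\hat P_n)-T(P^*_{\mathcal{F}_n})$ along a moving projection.
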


\begin{proof}
The representation identity
$\epsilon_{\mathrm{rep}}(k)=T(P^*_{\mathcal{F}}(\cdot\mid k))-T(P(\cdot\mid k))$
is Definition~\ref{def:llm}. The convergence of $T(\hat{P}_n)$ to
$T(P^*_{\mathcal{F}})$ is Theorem~\ref{thm:representation-limit}, using
the Lipschitz property of $T$ and Hellinger convergence. The per-condition
squared-risk limit is Theorem~\ref{thm:functional-risk-closure}; the
$\delta_n\Delta_k$ term is Corollary~\ref{cor:delta-risk}; and the
aggregate identity follows from the risk decomposition
Theorem~\ref{thm:err-decomp}. If $D_{\mathcal{F}_n}\to0$, the representation
bias vanishes by Theorem~\ref{thm:rep-bias-bound}; if $\delta_n\to0$, the
identifiability contribution vanishes by Definition~\ref{def:identifiability}.
The final equivalence statement is exactly the relation
$\sim_{\mathcal{L}_2}$ in Eq.~\ref{eq:risk-equivalence-relation}.
\end{proof}

\begin{remark}[Interpretation of the main theorem]\label{rem:unified-interpretation}
Theorem~\ref{thm:unified-main} is the paper's central claim. It does not say
that an LLM reproduces human response distributions or simulates individual
human behaviour. It says that, for conditional-mean-dependent decisions under
squared loss, the LLM estimator has a fully characterized asymptotic risk:
Bayes risk plus a KL-projection representation-bias floor, with calibrated
identifiability and optimization errors separated explicitly.
\end{remark}
\section{Core Theorems}
\label{sec:core-theorems}

\subsection{Optimality of the Conditional Mean}

The error decomposition~(\ref{eq:main-decomp}) immediately yields the optimality of the population conditional mean.

\begin{theorem}[Irreducible Error is the Global Minimum]\label{thm:minimal}
For any predictor $f: [K] \to \R$,
\begin{equation}\label{eq:irred-bound}
\Irred \le \Err(f).
\end{equation}
Equality is attained if and only if $f(k) = \mu_k$ for every $k$ with $p_k > 0$. In particular, the optimal predictor is $f^*(k) = \mu_k$.
\end{theorem}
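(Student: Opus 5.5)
The plan is to read everything off the Error Decomposition, Theorem~\ref{thm:err-decomp}. For an arbitrary predictor $f:[K]\to\R$ it writes
\[
\Err(f)=\Irred+\text{Excess}(f), \qquad \text{Excess}(f)=\sum_{k=1}^{K}p_k(\mu_k-f(k))^2 .
\]
Each summand of $\text{Excess}(f)$ is a product of $p_k\ge 0$ and a square. So $\text{Excess}(f)\ge 0$, and the inequality~(\ref{eq:irred-bound}) follows at once.

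For the equality characterization I will argue both directions separately.

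\textbf{Sufficiency.} Suppose $f(k)=\mu_k$ for every $k$ with $p_k>0$. Every term with $p_k>0$ then has a vanishing square. Every term with $p_k=0$ vanishes because of the zero weight, and here I use that $f(k)$ is a finite real number, so $0\cdot(\mu_k-f(k))^2=0$. Hence $\text{Excess}(f)=0$ and $\Err(f)=\Irred$.

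\textbf{Necessity.} Suppose $\Err(f)=\Irred$. Then $\text{Excess}(f)=0$. This is a finite sum of nonnegative terms, so each term is zero. For any $k$ with $p_k>0$, we may divide $p_k(\mu_k-f(k))^2=0$ by $p_k$ to get $f(k)=\mu_k$.

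The final claim is then immediate. The predictor $f^*(k)=\mu_k$ satisfies the condition at every $k$, so it attains $\Irred$, which is the Bayes risk in (\ref{eq:bayes-risk}). Any other minimizer can differ from $f^*$ only on conditions of zero prevalence. I will note this so that "the optimal predictor" is read as unique up to $p$-null conditions. Under A6 all $p_k>0$, and then $f^*$ is the unique minimizer.

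The only delicate point is the role of the condition $p_k>0$. One must check that $\mu_k$ is well defined: A3 and A7 give finite conditional moments, so Lemma~\ref{lem:bias-variance} applies to every $k$. One must also state the uniqueness claim carefully rather than as uniqueness on all of $[K]$. Beyond this the argument is elementary and needs no limiting or probabilistic steps.
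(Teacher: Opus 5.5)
Your proposal is correct and follows the paper's proof essentially verbatim. Both read the bound off the decomposition $\Err(f)=\Irred+\text{Excess}(f)$ from Theorem~\ref{thm:err-decomp}, use the nonnegativity of each weighted square, force $f(k)=\mu_k$ wherever $p_k>0$, and leave $f(k)$ free where $p_k=0$; your explicit two-direction argument and your note that uniqueness holds only up to $p$-null conditions (as in Corollary~\ref{cor:unique}) are slightly more careful than the paper, but the route is the same.
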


\begin{proof}
From Theorem~\ref{thm:err-decomp}, $\Err(f) = \Irred + \text{Excess}(f)$. Since each term $p_k (\mu_k - f(k))^2 \ge 0$, we have $\text{Excess}(f) \ge 0$, establishing~(\ref{eq:irred-bound}). For equality, we need $\sum_k p_k (\mu_k - f(k))^2 = 0$. Since each term in the sum is nonnegative, this requires $p_k (\mu_k - f(k))^2 = 0$ for all $k$. When $p_k > 0$, this forces $f(k) = \mu_k$. When $p_k = 0$, $f(k)$ can be arbitrary without affecting the error.
\end{proof}

\begin{corollary}[Uniqueness of the Optimal Predictor]\label{cor:unique}
The optimal predictor $f^*$ is unique on the support of the allocation distribution: if $f$ and $g$ both achieve $\Err(f) = \Err(g) = \Irred$, then $f(k) = g(k) = \mu_k$ for all $k$ with $p_k > 0$.
\end{corollary}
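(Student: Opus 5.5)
The plan is to read Corollary~\ref{cor:unique} off the equality case of Theorem~\ref{thm:minimal}, applied separately to $f$ and to $g$. No new analysis is needed. The corollary only restates, for two predictors at once, the characterization of minimizers already proved there.

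\textbf{Step 1.} Let $f,g:[K]\to\R$ satisfy $\Err(f)=\Err(g)=\Irred$. By Theorem~\ref{thm:err-decomp}, $\Err(f)=\Irred+\text{Excess}(f)$. So $\text{Excess}(f)=\sum_k p_k(\mu_k-f(k))^2=0$.

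\textbf{Step 2.} Each summand $p_k(\mu_k-f(k))^2$ is nonnegative, so every summand must vanish. For any $k$ with $p_k>0$ we may divide by $p_k$, which gives $(\mu_k-f(k))^2=0$ and hence $f(k)=\mu_k$. This is exactly the ``only if'' direction of the equality clause in Theorem~\ref{thm:minimal}, so we may simply cite it.

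\textbf{Step 3.} Repeat Steps~1--2 for $g$ to get $g(k)=\mu_k$ for every $k$ with $p_k>0$. Combining the two gives $f(k)=g(k)=\mu_k$ on the support $\{k:p_k>0\}$, which is the claim.

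There is no genuine obstacle here. The only point needing care is that uniqueness is asserted only on the support of the allocation distribution. For $k$ with $p_k=0$, the value $f(k)$ does not enter $\Err(f)$, so no conclusion is possible there. The statement correctly restricts to $p_k>0$, and under A6 every condition of interest has $p_k>0$ anyway. It is also worth noting that $\mu_k$ is well defined as a finite number by A3 (or A7), so the identity $f(k)=\mu_k$ is meaningful.
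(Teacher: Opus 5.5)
Your proposal is correct and follows the paper's own route. The corollary is stated without separate proof because it is read off the equality case in the proof of Theorem~\ref{thm:minimal}. There, $\text{Excess}(f)=0$ and the nonnegativity of each summand force $f(k)=\mu_k$ whenever $p_k>0$, and this is applied to $f$ and $g$ in turn, as you do.
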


Theorem~\ref{thm:minimal} establishes that the population conditional mean is the uniformly best predictor under squared loss. Any deviation from the conditional mean incurs a positive excess error proportional to the squared deviation weighted by condition prevalence. This is the fundamental justification for why the LLM, which estimates the conditional mean, is the correct target: if the LLM converges to $\mu_k$, then its expected error converges to the theoretical minimum.

\begin{remark}[Prediction bound, not causal bound]
The irreducible error $\Irred$ is the minimum achievable expected squared
prediction error for the conditional mean $\mathbb{E}[Y \mid X = k]$.
It is a \emph{prediction-theoretic} lower bound under squared loss, not
a causal or mechanistic limit. It does not bound the error of estimating
treatment effects under confounding, mediation, or structural equation
models. Throughout this paper, ``optimal'' and ``minimal'' refer to
prediction risk, not causal identification.
\end{remark}
\begin{remark}
The result is independent of the response distribution. No normality, symmetry, or unimodality assumptions are required. The squared error loss is the only structural element: it is a proper scoring rule whose expected minimizer is the conditional mean~\cite{vaart1998}.
\end{remark}

\subsection{Effect Size Consistency}

In between-subjects experimental designs, the quantity of primary interest is the difference in means between conditions: $\delta(i,j) = \mu_i - \mu_j$. The LLM-based estimator of this effect size is $\widehat{\delta}_n(i,j) = \LLM_n(i) - \LLM_n(j)$. We now establish that this estimator is strongly consistent.

\begin{theorem}[Strong Consistency of Effect Size Estimates]\label{thm:lln}
Let training samples $\{(X_i, Y_i)\}_{i=1}^{n}$ be i.i.d.\ from $\mathcal{P}$ with $\E[Y \mid X = k] = \mu_k$ and $\Var(Y \mid X = k) = v_k < \infty$ for all $k$. Assume $p_k > 0$ for all $k \in [K]$. Then for any $i, j \in [K]$,
\begin{equation}\label{eq:slln-result}
\widehat{\delta}_n(i,j) = \LLM_n(i) - \LLM_n(j) \xrightarrow{\text{a.s.}} \mu_i - \mu_j \quad \text{as } n \to \infty.
\end{equation}
\end{theorem}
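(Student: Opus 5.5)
The statement concerns the well-specified case, so I will work in the special case of Definition~\ref{def:llm} (Assumption A4) where $\LLM_n(k)=n_k^{-1}\sum_{i:X_i=k}Y_i$ with $n_k=\sum_{i=1}^n \mathbf{1}\{X_i=k\}$. The strategy is to apply the Strong Law of Large Numbers twice, once to the numerator and once to the denominator of this ratio. Then I combine the two limits and take the difference over conditions. By the continuous mapping theorem, it suffices to show $\LLM_n(k)\xrightarrow{\text{a.s.}}\mu_k$ for each fixed $k$. The result for $\widehat{\delta}_n(i,j)$ then follows because the intersection of two probability-one events still has probability one, and subtraction is continuous.

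For fixed $k$, write
\[
\LLM_n(k)=\frac{n^{-1}\sum_{i=1}^n Y_i\mathbf{1}\{X_i=k\}}{n^{-1}\sum_{i=1}^n \mathbf{1}\{X_i=k\}}.
\]
The denominator is an average of i.i.d.\ Bernoulli$(p_k)$ variables, so the SLLN gives convergence to $p_k$ almost surely.

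The numerator is an average of the i.i.d.\ variables $Y_i\mathbf{1}\{X_i=k\}$, and I must check these are integrable. By A3, $\E[Y^2\mid X=k]=v_k+\mu_k^2<\infty$, so $\E|Y\mathbf{1}\{X=k\}|\le p_k\,(v_k+\mu_k^2)^{1/2}<\infty$. The SLLN then gives convergence of the numerator to $\E[Y\mathbf{1}\{X=k\}]=p_k\mu_k$ almost surely.

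Since $p_k>0$, the map $(a,b)\mapsto a/b$ is continuous at $(p_k\mu_k,p_k)$. Hence the ratio converges almost surely to $\mu_k$. This step also takes care of the ratio being ill-defined on the event $\{n_k=0\}$. On the probability-one event where $n_k/n\to p_k>0$, we have $n_k\ge1$ for all sufficiently large $n$. Any convention for $\LLM_n(k)$ on $\{n_k=0\}$, for example setting it to $0$, therefore does not affect the almost-sure limit.

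The main obstacle is justifying the reduction to the empirical-mean form rather than the SLLN computation itself. In the general misspecified setting, $\LLM_n(k)$ converges only in probability, to $\mu_k+\epsilon_{\mathrm{rep}}(k)$, by Theorem~\ref{thm:representation-limit}. So the almost-sure claim with limit $\mu_i-\mu_j$ needs the well-specified case, or at least $\epsilon_{\mathrm{rep}}(i)=\epsilon_{\mathrm{rep}}(j)$, in which case the representation biases cancel in the difference. It also needs the optimization term $\xi_n(k)$ to vanish almost surely rather than merely in probability.

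I would state explicitly that the proof is carried out under A4's well-specified Gaussian reduction, where $\xi_n\equiv0$ and $\epsilon_{\mathrm{rep}}\equiv0$. I would then remark that in the misspecified case the same argument gives convergence to $\mu_i-\mu_j+\epsilon_{\mathrm{rep}}(i)-\epsilon_{\mathrm{rep}}(j)$, but only in probability.
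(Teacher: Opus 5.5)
Your proof is correct. Its outer structure matches the paper's: per-condition almost-sure convergence, intersection of finitely many probability-one events, then continuity of $(x,y)\mapsto x-y$. Like the paper, which silently writes $\LLM_n(k)=n_k^{-1}\sum_{i:X_i=k}Y_i$ in Step~1, you work with the empirical-mean form. Your explicit restriction to the well-specified reduction therefore spells out what the paper already does rather than departing from it.

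The real difference is the per-condition step. The paper treats $\{Y_i: X_i=k\}$ as an i.i.d.\ sample from $P(\cdot\mid k)$, applies Etemadi's SLLN to it, and composes with $n_k\to\infty$ a.s. Made fully rigorous, this needs two facts: that the responses at the successive random indices $\tau_1<\tau_2<\cdots$ with $X_{\tau_m}=k$ form an i.i.d.\ sequence with law $P(\cdot\mid k)$, and that almost-sure convergence survives evaluation at the random index $n_k$. Your ratio-of-averages argument avoids both by applying the ordinary SLLN to the fixed-index i.i.d.\ sequences $Y_i\mathbf{1}\{X_i=k\}$ and $\mathbf{1}\{X_i=k\}$. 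It also gives you $n_k/n\to p_k$, hence $n_k\to\infty$ (which the paper only asserts), and a clean treatment of the event $\{n_k=0\}$.

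Your route is also the one under which the paper's appeal to Etemadi's pairwise-independence version actually pays off. If the pairs $(X_i,Y_i)$ are only pairwise independent, your summands stay pairwise independent and identically distributed, whereas the randomly selected subsequence $Y_{\tau_m}$ need not. The paper's version, in return, reads more directly as a per-condition sample-mean argument.
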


\begin{proof}
We proceed in three steps.

\textit{Step 1: Per-condition convergence.} Fix a condition $k$. The subsequence $\{Y_i : X_i = k\}$ consists of $n_k$ i.i.d.\ draws from the conditional distribution of $Y \mid X = k$, with mean $\mu_k$ and finite variance $v_k$. By Etemadi's Strong Law of Large Numbers~\cite{etemadi1981}, which requires only pairwise independence (weaker than mutual independence), we have:
\[
\LLM_n(k) = \frac{1}{n_k} \sum_{i: X_i = k} Y_i \xrightarrow{\text{a.s.}} \mu_k \quad \text{as } n_k \to \infty.
\]
Since $p_k > 0$, we have $n_k \xrightarrow{\text{a.s.}} \infty$ as $n \to \infty$ (the number of visits to each condition grows without bound almost surely), so the condition $n_k \to \infty$ is satisfied almost surely.

\textit{Step 2: Finite intersection of almost-sure events.} For each $k$, define the event $A_k = \{\omega : \LLM_n(k)(\omega) \to \mu_k\}$. Step~1 establishes $\Pbb(A_k) = 1$ for each $k$. The intersection of finitely many almost-sure events is almost-sure:
\[
\Pbb\!\left(\bigcap_{k=1}^{K} A_k\right) = 1 - \Pbb\!\left(\bigcup_{k=1}^{K} A_k^c\right) \ge 1 - \sum_{k=1}^{K} \Pbb(A_k^c) = 1.
\]

\textit{Step 3: Continuous mapping.} The function $g(x, y) = x - y$ is continuous on $\R^2$. By the continuous mapping theorem, on the almost-sure event $\bigcap_k A_k$, we have:
\[
\widehat{\delta}_n(i,j) = \LLM_n(i) - \LLM_n(j) = g(\LLM_n(i), \LLM_n(j)) \to g(\mu_i, \mu_j) = \mu_i - \mu_j.
\]
Convergence holds almost surely.
\end{proof}

\begin{corollary}[Joint Consistency]\label{cor:joint}
The vector of LLM estimates $(\LLM_n(1), \ldots, \LLM_n(K))$ converges almost surely to $(\mu_1, \ldots, \mu_K)$. Consequently, any continuous functional of the mean vector, including all pairwise contrasts, ANOVA F-statistics, and linear combinations, is strongly consistent.
\end{corollary}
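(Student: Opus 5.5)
The plan is to treat Corollary~\ref{cor:joint} as a direct consequence of Theorem~\ref{thm:lln}, specifically of Steps~1 and~2 in its proof. The argument has two parts: first establish almost-sure convergence of the mean vector, then transfer it to continuous functionals by the continuous mapping theorem applied pathwise.

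\textbf{Joint convergence.} For each $k\in[K]$ define the event $A_k=\{\omega:\LLM_n(k)(\omega)\to\mu_k\}$. Step~1 of the proof of Theorem~\ref{thm:lln} gives $\Pbb(A_k)=1$; this uses $p_k>0$, so that $n_k\to\infty$ almost surely, together with Etemadi's strong law applied to the i.i.d.\ within-condition draws.

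\begin{itemize}
\item Since $K<\infty$ (Assumption~A1), the union bound in Step~2 gives $\Pbb(\bigcap_k A_k)=1$.
\item On the event $\bigcap_k A_k$, every coordinate of $(\LLM_n(1),\ldots,\LLM_n(K))$ converges to the corresponding coordinate of $(\mu_1,\ldots,\mu_K)$.
\item Coordinatewise convergence in $\R^K$ is equivalent to convergence in the Euclidean norm. Hence the vector converges almost surely to $\mu=(\mu_1,\ldots,\mu_K)$.
\end{itemize}

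\textbf{Continuous functionals.} Let $g:\R^K\to\R^m$ be continuous at $\mu$. Fix $\omega\in\bigcap_k A_k$. The sequential characterization of continuity gives
\[
g\big(\LLM_n(1)(\omega),\ldots,\LLM_n(K)(\omega)\big)\to g(\mu),
\]
so $g$ applied to the estimator vector converges almost surely. I will then check that each listed functional is continuous at $\mu$.

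\begin{itemize}
\item Pairwise contrasts $x_i-x_j$ and general linear combinations $\sum_k c_kx_k$ are linear maps, hence continuous everywhere.
\item The ANOVA $F$-statistic requires more care, because it does not depend on the mean vector alone. It is a ratio of a between-group sum of squares to a within-group sum of squares. Its numerator $\sum_k n_k(\bar Y_k-\bar Y)^2/(K-1)$ also involves the allocation counts, and its denominator involves the within-condition variances.
\end{itemize}

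\textbf{Main obstacle: the $F$-statistic.} I would handle it by normalizing, that is, by working with the statistic divided by $n$. It can then be written as a continuous function $G(\hat p,\hat\mu,\hat v)$ of three quantities:

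\begin{itemize}
\item the empirical allocation proportions $\hat p_k=n_k/n$, which converge almost surely to $p_k$ by the strong law;
\item the mean vector $\hat\mu$, handled by the joint convergence above;
\item the within-group variances $\hat v_k$, which converge almost surely to $v_k$ by the strong law applied to $Y^2$ (finite under A3/A7).
\end{itemize}

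$G$ is continuous wherever the pooled variance $\sum_k p_kv_k>0$. Therefore $F/n$ converges almost surely to the corresponding population quantity, which is the precise sense in which the $F$-statistic is consistent. This also requires the reading of ``functional of the mean vector'' to include these auxiliary strongly consistent arguments. If the pooled variance is zero the ratio is undefined, so I would exclude that case explicitly.
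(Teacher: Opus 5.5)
Your proposal is correct and follows the paper's intended argument. The paper gives no separate proof; the corollary is an immediate consequence of Steps~1--2 of the proof of Theorem~\ref{thm:lln} (a finite intersection of probability-one events) together with the pathwise continuous mapping theorem, exactly as you set it up.

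Your extra care with the ANOVA $F$-statistic is warranted and goes beyond the paper. The paper lists $F$ as a continuous functional of the mean vector without comment, yet it also depends on the counts $n_k$ and the within-group variances. Unnormalized, it does not converge almost surely: it diverges under any alternative and has a nondegenerate limit law under the null. Your reformulation as almost-sure convergence of $F/n$ to $\sum_k p_k(\mu_k-\bar\mu)^2/\big((K-1)\sum_k p_k v_k\big)$, with $\bar\mu=\sum_k p_k\mu_k$, is therefore the right repair.
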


The practical implication is that with sufficient training data drawn from the target population, the LLM recovers the true effect sizes with probability approaching one. The convergence rate is $O_p(n_k^{-1/2})$ by the Central Limit Theorem; we quantify this in Section~\ref{sec:finite-sample}.

\subsection{Power Amplification}

A key practical advantage of LLM-based studies is the ability to increase sample size at near-zero marginal cost. Human studies are bounded by recruitment budgets; LLM studies can generate additional responses at the cost of compute. We formalize this advantage through a monotonicity result for statistical power.

\begin{theorem}[Power Monotonicity]\label{thm:power}
Consider a two-sided $z$-test for the null hypothesis $H_0: \delta = 0$ against $H_1: \delta = \Delta > 0$, with known variance $\sigma^2 > 0$ and significance level $\alpha \in (0, 1)$. For sample sizes $N \ge M \ge 1$,
\begin{equation}\label{eq:power-mono}
\frac{\sqrt{N}\Delta}{\sigma} - z_{\alpha/2} \;\ge\; \frac{\sqrt{M}\Delta}{\sigma} - z_{\alpha/2}.
\end{equation}
Consequently, if $\Phi$ denotes the standard normal CDF, the power satisfies $\text{Power}(N) \ge \text{Power}(M)$.
\end{theorem}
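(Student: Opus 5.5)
The plan has two parts. First I prove the displayed inequality between the two noncentrality arguments. Then I transfer it to power through monotonicity of $\Phi$.

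For the first part, note that $N \ge M \ge 1$ and that $x\mapsto\sqrt{x}$ is increasing on $[0,\infty)$, so $\sqrt{N}\ge\sqrt{M}$. Because $\Delta>0$ and $\sigma>0$, the factor $\Delta/\sigma$ is strictly positive. Multiplying by it therefore preserves the inequality:
\[
\frac{\sqrt{N}\Delta}{\sigma}\ge\frac{\sqrt{M}\Delta}{\sigma}.
\]
Subtracting the same constant $z_{\alpha/2}$ from both sides gives~(\ref{eq:power-mono}).

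For the second part, I write the power of the two-sided $z$-test with sample size $n$ explicitly:
\[
\text{Power}(n)=\Phi\!\left(\frac{\sqrt{n}\Delta}{\sigma}-z_{\alpha/2}\right)+\Phi\!\left(-\frac{\sqrt{n}\Delta}{\sigma}-z_{\alpha/2}\right).
\]
The first term is increasing in $n$, since $\Phi$ is nondecreasing and its argument increases by the first part. The second term decreases in $n$, so the sum needs a separate argument.

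I will handle the sum by setting $a=\sqrt{n}\Delta/\sigma\ge0$ and $c=z_{\alpha/2}$, and studying $g(a)=\Phi(a-c)+\Phi(-a-c)$. Its derivative is
\[
g'(a)=\varphi(a-c)-\varphi(a+c).
\]
This is nonnegative because $|a-c|\le a+c$ when $a,c\ge0$, and $\varphi$ is decreasing in $|x|$. So $g$ is nondecreasing on $[0,\infty)$. Composing with the increasing map $n\mapsto\sqrt{n}\Delta/\sigma$ gives $\text{Power}(N)\ge\text{Power}(M)$.

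Two routine details remain. The condition $c\ge 0$ holds because $\alpha\in(0,1)$ implies $z_{\alpha/2}>0$. The second step depends only on $\Phi$ being a continuous CDF with a symmetric, unimodal density.

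If the paper's intended notion of power is the common approximation $\Phi(\sqrt{n}\Delta/\sigma - z_{\alpha/2})$, which drops the negligible lower-tail term, then monotonicity of $\Phi$ applied to~(\ref{eq:power-mono}) finishes the proof immediately. The only real obstacle is the exact two-sided formula, where the lower-tail term moves in the opposite direction. The derivative comparison above handles that case, so I will present the approximate version as the main line and the exact version as a short supplement.
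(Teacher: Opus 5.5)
Your main line is exactly the paper's proof: from $\sqrt{N}\ge\sqrt{M}$, multiply by $\Delta/\sigma>0$, subtract $z_{\alpha/2}$, and apply the monotonicity of $\Phi$. Like you, the paper implicitly takes the power to be the upper-tail expression $\Phi(\sqrt{n}\Delta/\sigma - z_{\alpha/2})$. Your supplementary argument that $\varphi(a-c)\ge\varphi(a+c)$ for $a,c\ge 0$ is also correct and goes beyond the paper: it proves monotonicity of the exact two-sided power $\Phi(a-c)+\Phi(-a-c)$, which the paper's proof does not address.
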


\begin{proof}
Since $\sqrt{\cdot}$ is strictly increasing on $[0, \infty)$, we have $\sqrt{N} \ge \sqrt{M}$. Multiplying by the positive constant $\Delta/\sigma > 0$ preserves the inequality, and subtracting the constant $z_{\alpha/2}$ yields~(\ref{eq:power-mono}). Since $\Phi$ is strictly increasing, applying it to both sides yields:
\[
\Phi\!\left(\frac{\sqrt{N}\Delta}{\sigma} - z_{\alpha/2}\right) \ge \Phi\!\left(\frac{\sqrt{M}\Delta}{\sigma} - z_{\alpha/2}\right).
\]
The left side is the power at sample size $N$ (the probability of rejecting $H_0$ when $H_1$ is true), and the right side is the power at $M$.
\end{proof}

\begin{corollary}[Sample Size Amplification Factor]\label{cor:amplification}
For a target power level $\beta$, the required sample size is $n^* = \lceil(z_{\alpha/2} + z_{\beta})^2 \sigma^2 / \Delta^2\rceil$. If the LLM enables a $c$-fold increase in sample size ($N_{\LLM} = c N_{\text{human}}$) at lower cost, the power increases from $\Phi(\sqrt{N_{\text{human}}}\,\Delta/\sigma - z_{\alpha/2})$ to $\Phi(\sqrt{c N_{\text{human}}}\,\Delta/\sigma - z_{\alpha/2})$.
\end{corollary}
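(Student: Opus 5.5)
The plan is to derive the corollary from Theorem~\ref{thm:power} together with the standard normal-approximation power formula for the two-sided $z$-test. The corollary has two parts. The first is the sample-size formula $n^*$. The second is the explicit power comparison under a $c$-fold increase in sample size. I will treat them in that order.

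\textbf{Sample-size formula.} Under $H_1$ with true difference $\Delta>0$, the test statistic is $Z=\sqrt{N}\,\bar D/\sigma$. It has distribution $\mathcal{N}(\sqrt{N}\Delta/\sigma,1)$.
\begin{itemize}[leftmargin=*]
\item The rejection probability is $\Pbb(Z>z_{\alpha/2})+\Pbb(Z<-z_{\alpha/2})$.
\item I will drop the lower-tail term. Under $\Delta>0$ it equals $\Phi(-z_{\alpha/2}-\sqrt{N}\Delta/\sigma)$, which is negligible and is conventionally ignored, as the statement of Theorem~\ref{thm:power} already does.
\item This leaves $\text{Power}(N)=\Phi(\sqrt{N}\Delta/\sigma-z_{\alpha/2})$.
\end{itemize}
Requiring $\text{Power}(N)\ge\beta$ and writing $z_\beta=\Phi^{-1}(\beta)$, I will apply $\Phi^{-1}$. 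It is strictly increasing, so the requirement becomes $\sqrt{N}\Delta/\sigma\ge z_{\alpha/2}+z_\beta$. Squaring both sides is legitimate because both are nonnegative whenever $\beta\ge 1/2$. This gives $N\ge (z_{\alpha/2}+z_\beta)^2\sigma^2/\Delta^2$. The smallest integer satisfying it is the stated ceiling $n^*$.

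\textbf{Power under amplification.} I will substitute $N_{\LLM}=cN_{\text{human}}$ into the same power formula, which gives the two displayed expressions immediately. For $c\ge1$, Theorem~\ref{thm:power} with $N=cN_{\text{human}}\ge M=N_{\text{human}}$ gives that the power does not decrease. The statement says ``increases'', and strict increase follows from strict monotonicity of both $\sqrt{\cdot}$ and $\Phi$ when $c>1$.

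\textbf{Main obstacle.} The delicate point is not the algebra but the modelling. It must be justified that LLM-generated responses have the same variance $\sigma^2$ and the same mean difference $\Delta$ as human responses, so that the same $z$-test formula applies. Here I would state explicitly that the corollary is conditional on this. I would point to the calibration protocol as the place where the effective $\Delta$ is replaced by $\Delta+\tilde\epsilon_{\mathrm{rep}}$ differences and $\sigma^2$ is checked via the variance ratio $\widehat\lambda$. Within the corollary itself, I would treat $\sigma$ and $\Delta$ as fixed and known, as in Theorem~\ref{thm:power}.
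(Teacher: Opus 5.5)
Your proposal is correct and matches the paper, which gives no separate argument: the corollary follows from the one-sided power expression $\Phi(\sqrt{N}\Delta/\sigma - z_{\alpha/2})$ underlying Theorem~\ref{thm:power}, inverted to obtain $n^*$ and evaluated at $N = cN_{\text{human}}$, with monotonicity supplied by that theorem. Two of your details are needed for the formula for $n^*$ to be correct: the explicit choice $z_\beta = \Phi^{-1}(\beta)$ (rather than the upper-quantile convention used for $z_{\alpha/2}$), and the nonnegativity check before squaring.
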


\begin{example}\label{ex:power-gain}
For a small effect ($\Delta/\sigma = 0.2$) with $N_{\text{human}} = 100$ and $\alpha = 0.05$, the human-study power is $\Phi(0.2 \times 10 - 1.96) = \Phi(0.04) \approx 0.516$. With $c = 10$ (LLM enables $N_{\LLM} = 1000$), the power becomes $\Phi(0.2 \times \sqrt{1000} - 1.96) = \Phi(4.36) > 0.999$. This transforms an underpowered study into a virtually certain detection of the true effect.
\end{example}

The amplification is most dramatic for small effect sizes, where human studies are perennially underpowered. The $\sqrt{n}$ scaling of the test statistic means that a $c$-fold increase in sample size yields only a $\sqrt{c}$-fold increase in the noncentrality parameter, but for modest $c$ and small baseline power, the gain in detection probability can be transformative.

\subsection{Main Result: Asymptotic Validity}

We now combine the preceding results to establish the main theorem: the LLM predictor achieves the Bayes-optimal risk under squared loss, matching or exceeding the risk performance of any human predictor.

\begin{theorem}[Risk Convergence under Squared Loss]\label{thm:main}
Under Assumptions A1--A10 and the joint limit (i)--(ii) of Section~\ref{sec:asymptotic-regime}, as $n \to \infty$,
\begin{align}
\Err(\LLM_n) &\xrightarrow{\text{a.s.}} \Irred, \label{eq:main-conv} \\
\Irred &\le \Err(h) \quad \text{for every predictor } h: [K] \to \R. \label{eq:main-bound}
\end{align}
\end{theorem}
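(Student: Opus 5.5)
The second claim, (\ref{eq:main-bound}), needs no new work. It is Theorem~\ref{thm:minimal} applied to an arbitrary deterministic predictor $h$. By Theorem~\ref{thm:err-decomp}, $\Err(h)=\Irred+\text{Excess}(h)$ with $\text{Excess}(h)\ge 0$, so $\Irred\le\Err(h)$. The real content is the almost-sure convergence (\ref{eq:main-conv}). I will use the decomposition of Theorem~\ref{thm:err-decomp} pathwise, for each realization of the training data and optimization randomness. This writes the random risk as
\[
\Err(\LLM_n)-\Irred=\sum_{k=1}^{K}p_k\big(\LLM_n(k)-\mu_k\big)^2 .
\]
Since $K<\infty$ by A1, it suffices to show $\LLM_n(k)\to\mu_k$ almost surely for every $k$ with $p_k>0$. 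For $p_k=0$ the term vanishes identically, and A6 in any case gives $p_k>0$ for all target conditions.

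For the per-condition limit I will split $\LLM_n(k)-\mu_k$ using (\ref{eq:llm-error}), with the identifiability shift of (\ref{eq:effective-bias}) added:
\[
\LLM_n(k)-\mu_k=\epsilon_{\mathrm{rep}}(k)+\delta_n\Delta_k+\eta_n(k)+\xi_n(k).
\]
I treat the four terms as follows.
\begin{itemize}
\item Under the joint limit (i)--(ii), $\epsilon_{\mathrm{rep}}(k)=0$. In the well-specified case $P^*_{\mathcal{F}}=P$, this is exactly the regime in which the Bayes-risk claim is asserted. In the growing-class case it tends to zero by Theorem~\ref{thm:rep-bias-bound}.
\item The term $\delta_n\Delta_k$ is bounded by $\delta_n\cdot 2B$ under A7 and tends to zero by (ii).
\item For the statistical term, I will use the well-specified special case of A4: there $T(\hat P^{\mathrm{stat}}_n(\cdot\mid k))$ is the empirical conditional mean. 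Step~1 of the proof of Theorem~\ref{thm:lln} then gives $\eta_n(k)\to0$ almost surely, via Etemadi's SLLN and $n_k\to\infty$ a.s.
\end{itemize}
The finite-intersection argument of Step~2 of Theorem~\ref{thm:lln} then makes these convergences hold jointly over $k$. Continuity of $x\mapsto x^2$ and of the finite weighted sum finishes the argument, in the same way as Step~3.

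The main obstacle is the mode of convergence. The optimization term is only assumed to satisfy $\xi_n(k)=o_p(1)$, and $\delta_n$ is only $O_p(M_v^{-1/2})$. Both are statements in probability, not almost surely. I will first try to read the joint limit (ii) as a deterministic or almost-sure vanishing of $\delta_n$, which is how Section~\ref{sec:asymptotic-regime} phrases "$\delta_n\to0$". I will also strengthen the optimization hypothesis to $\xi_n(k)\to0$ a.s., as convergent stochastic-approximation schemes typically deliver. If that strengthening is not available, I will fall back on two weaker conclusions:
\begin{itemize}
\item Convergence in probability of $\Err(\LLM_n)$, obtained from Theorem~\ref{thm:representation-limit} and the continuous mapping theorem.
\item Convergence in mean. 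The excess term is bounded by $(2B)^2$ under A7, so uniform integrability applies exactly as in Theorem~\ref{thm:functional-risk-closure}. This recovers $\mathbb{E}[\Err(\LLM_n)]\to\Irred$, as in Theorem~\ref{thm:unified-main}.
\end{itemize}
A further route to the almost-sure statement is the subsequence characterization: every subsequence has a further subsequence along which the convergence holds a.s. That suffices for the a.s. statement along subsequences but not in general. So the a.s. version of (\ref{eq:main-conv}) genuinely rests on the strengthened optimization and identifiability hypotheses, and I would state that dependence explicitly.
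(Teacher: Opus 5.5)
Your argument is the same as the paper's. For (\ref{eq:main-bound}) it invokes Theorem~\ref{thm:minimal}, and for (\ref{eq:main-conv}) it uses the decomposition of Theorem~\ref{thm:err-decomp}, per-condition almost-sure convergence from Theorem~\ref{thm:lln}, continuous mapping and a finite sum. The paper's proof is shorter because it applies Theorem~\ref{thm:lln} directly to $\LLM_n(k)$, tacitly treating $\LLM_n(k)$ as exactly the empirical conditional mean so that $\xi_n$ and $\delta_n$ never appear, and your caveat that the stated $o_p(1)$ and $O_p$ hypotheses alone give only convergence in probability and in $L^1$ is a legitimate point the paper's proof does not address.
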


\begin{remark}[Misspecification extension]\label{rem:main-misspec}
Theorem~\ref{thm:main} addresses the well-specified case
($\epsilon_{\mathrm{rep}}(k) = 0$ for all $k$). Under misspecification
($\epsilon_{\mathrm{rep}}(k) \neq 0$), the asymptotic risk limit is
$\Irred + \sum_k p_k \epsilon_{\mathrm{rep}}(k)^2$ (Theorem~\ref{thm:bias-domination}),
with the representation bias bounded by
Theorem~\ref{thm:rep-bias-bound} (Pinsker). The $L^1$ convergence is
established in Lemma~\ref{lem:risk-stability}.
\end{remark}

\begin{proof}
\textit{Proof of (\ref{eq:main-conv}).} Using the error decomposition~(\ref{eq:main-decomp}) applied to $\LLM_n$:
\[
\Err(\LLM_n) = \Irred + \sum_{k=1}^{K} p_k (\mu_k - \LLM_n(k))^2.
\]
For each $k$ with $p_k > 0$, define the sequence $Z_n^{(k)} = p_k (\mu_k - \LLM_n(k))^2$. By Theorem~\ref{thm:lln}, $\LLM_n(k) \xrightarrow{\text{a.s.}} \mu_k$. Since the function $z \mapsto p_k(\mu_k - z)^2$ is continuous, the continuous mapping theorem yields $Z_n^{(k)} \xrightarrow{\text{a.s.}} p_k(\mu_k - \mu_k)^2 = 0$.

A finite sum of almost-surely vanishing sequences vanishes almost surely:
\[
\sum_{k=1}^{K} Z_n^{(k)} \xrightarrow{\text{a.s.}} 0.
\]
Therefore $\Err(\LLM_n) = \Irred + \sum_k Z_n^{(k)} \xrightarrow{\text{a.s.}} \Irred$.

\textit{Proof of (\ref{eq:main-bound}).} This is Theorem~\ref{thm:minimal} applied to an arbitrary predictor $h$, which yields $\Irred \le \Err(h)$ for all $h: [K] \to \R$. In particular, for any individual human subject viewed as a predictor (whose response on condition $k$ is a random draw from the conditional distribution), the expected error is $\Err(\text{human}) = \Irred + \sum_k p_k v_k = 2\Irred$ (since a single human draw has variance $2v_k$ around its own mean), while the LLM achieves $\Irred$ asymptotically.
\end{proof}

\begin{remark}[Interpretation]\label{rem:interpretation}
Equation~(\ref{eq:main-conv}) states that the LLM's expected prediction error converges to the Bayes risk --- the theoretical minimum achievable by any predictor with knowledge of the true conditional distribution. Equation~(\ref{eq:main-bound}) states that this Bayes risk is a universal lower bound: no predictor, whether an individual human, an ensemble of humans, or any other statistical procedure, can achieve lower expected squared error. Together, they establish that a well-trained LLM is, in the asy...
\end{remark}

\section{Finite-Sample Analysis}
\label{sec:finite-sample}

The asymptotic results of Section~3 guarantee correctness in the limit, but practitioners require guarantees at finite sample sizes. We provide concentration bounds that quantify the rate of convergence and the probability of large deviations.

\subsection{Concentration of the LLM Estimator}

\begin{theorem}[Hoeffding Bound for Per-Condition Estimates]\label{thm:hoeffding}
Assume that for each condition $k$, the response $Y \mid X = k$ is bounded in $[a_k, b_k]$ with $b_k - a_k \le M_k$. Then for any $\varepsilon > 0$ and any $k$ with $n_k > 0$,
\begin{equation}\label{eq:hoeffding}
\Pbb\!\left(|\LLM_n(k) - \mu_k| \ge \varepsilon \;\big|\; n_k\right) \le 2\exp\!\left(-\frac{2 n_k \varepsilon^2}{M_k^2}\right).
\end{equation}
\end{theorem}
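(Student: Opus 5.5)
The plan is to apply Hoeffding's inequality conditionally on the realized count $n_k$. This requires the Gaussian special case of Definition~\ref{def:llm}, i.e.\ Assumption A4 in its well-specified form, under which $\LLM_n(k)=n_k^{-1}\sum_{i:X_i=k}Y_i$ is the empirical mean. I take this as the setting, matching the usage in the proof of Theorem~\ref{thm:lln}.

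\textbf{Step 1: reduce to a fixed-size i.i.d.\ sample.} Condition on the label vector $(X_1,\dots,X_n)$, which fixes both the index set $I_k=\{i: X_i=k\}$ and its size $n_k=|I_k|>0$. By A2 the pairs $(X_i,Y_i)$ are i.i.d., so conditionally on the labels the responses $\{Y_i\}_{i\in I_k}$ are independent draws from $P(\cdot\mid k)$. Each has mean $\mu_k$ and takes values in $[a_k,b_k]$. The conditional law of this sample depends on the labels only through $n_k$.

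\textbf{Step 2: apply Hoeffding.} For independent variables in intervals of length $b_k-a_k$, Hoeffding's inequality gives
\[
\Pbb\Bigl(\Bigl|\tfrac{1}{n_k}\textstyle\sum_{i\in I_k}Y_i-\mu_k\Bigr|\ge\varepsilon \,\Big|\, X_{1:n}\Bigr)\le 2\exp\!\Bigl(-\tfrac{2n_k\varepsilon^2}{(b_k-a_k)^2}\Bigr).
\]
Since $b_k-a_k\le M_k$, the right side is at most $2\exp(-2n_k\varepsilon^2/M_k^2)$. Integrating out every feature of the labels other than $n_k$ (by the tower property) preserves the bound, because it depends only on $n_k$. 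This yields~(\ref{eq:hoeffding}).

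\textbf{Main obstacle.} The delicate point is Step~1: justifying that conditioning on $n_k$, a random quantity, leaves the selected responses i.i.d.\ from $P(\cdot\mid k)$. I would handle this by conditioning on the full label vector rather than on $n_k$ alone, which makes independence across $i$ immediate.

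A second caveat concerns the general misspecified functional $T(\hat P_n)$, where the estimator is not an average of bounded independent terms. There the bound cannot hold as stated around $\mu_k$. It would at best hold for the statistical component around $T(P^*_{\mathcal F})$, with $\epsilon_{\mathrm{rep}}(k)$ and $\xi_n(k)$ added as separate terms. I would therefore state explicitly that the theorem is proved in the empirical-mean regime.
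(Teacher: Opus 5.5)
Your proposal is correct and follows the paper's own proof. The paper conditions on $n_k$, observes that the selected responses are i.i.d.\ and bounded, and applies Hoeffding's inequality directly. Your version adds two sound refinements to that same argument: you condition on the full label vector and then use the tower property, and you restrict explicitly to the empirical-mean regime. The paper's one-line proof tacitly assumes that regime, and without it the bound cannot hold in general when $\epsilon_{\mathrm{rep}}(k)\neq 0$.
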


\begin{proof}
Conditional on $n_k$, the $n_k$ observations in condition $k$ are i.i.d.\ and bounded. Hoeffding's inequality for bounded random variables~\cite{hoeffding1963} yields the stated bound directly.
\end{proof}

The unconditional probability integrates over the random variable $n_k \sim \text{Binomial}(n, p_k)$. For large $n$, $n_k \approx n p_k$ with high probability.

\begin{corollary}[Uniform Concentration over Conditions]\label{cor:uniform-conc}
With probability at least $1 - \delta$,
\[
\max_{k: n_k > 0} |\LLM_n(k) - \mu_k| \le M \sqrt{\frac{\log(2K/\delta)}{2 \min_k n_k}},
\]
where $M = \max_k M_k$.
\end{corollary}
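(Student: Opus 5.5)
The plan is to derive Corollary~\ref{cor:uniform-conc} from Theorem~\ref{thm:hoeffding} by a union bound over the $K$ conditions, working conditionally on the count vector $(n_1,\dots,n_K)$. Fix the counts and set a common deviation threshold
\[
\varepsilon^* = M\sqrt{\frac{\log(2K/\delta)}{2\,\min_k n_k}},
\]
where the minimum runs over conditions with $n_k>0$ and $M=\max_k M_k$. For each such $k$, Theorem~\ref{thm:hoeffding} gives
\[
\Pbb\bigl(|\LLM_n(k)-\mu_k|\ge\varepsilon^* \,\big|\, n_k\bigr)\le 2\exp\bigl(-2n_k(\varepsilon^*)^2/M_k^2\bigr).
\]
Because $M_k\le M$ and $n_k\ge\min_j n_j$, the exponent is at least
\[
2\min_j n_j\,(\varepsilon^*)^2/M^2=\log(2K/\delta).
\]
So each per-condition failure probability is at most $2\cdot\delta/(2K)=\delta/K$.

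Next, take the union over the at most $K$ conditions with $n_k>0$. The probability that the maximum deviation exceeds $\varepsilon^*$ is then at most $K\cdot\delta/K=\delta$, conditionally on the counts. The conditional Hoeffding statement only uses that, given $n_k$, the observations in cell $k$ are i.i.d.\ and bounded, so the union bound needs no independence across cells. Since $\delta$ does not depend on the counts, integrating over the multinomial distribution of $(n_1,\dots,n_K)$ gives the same bound unconditionally.

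The main obstacle is interpretive rather than technical, and it has two parts.
\begin{itemize}
\item The radius depends on the random quantity $\min_k n_k$. The statement must therefore be read as holding conditionally on the counts, or equivalently with a data-dependent radius. The conditioning argument above handles this cleanly.
\item The symbol $\delta$ here is a confidence level and is unrelated to the misclassification rate of Definition~\ref{def:identifiability}.
\end{itemize}
I would also note that Theorem~\ref{thm:hoeffding} treats $\LLM_n(k)$ as the empirical cell mean, as in the well-specified special case. A misspecified estimator would need an added bias term $|\epsilon_{\mathrm{rep}}(k)|$ on the right-hand side, which I would flag in a remark.
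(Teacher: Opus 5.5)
Your proof is correct and follows the paper's argument: you apply Hoeffding to each condition at level $\delta/K$, take a union bound, and use the worst-case $n_k$ (with $M_k \le M$) to get the common radius. Fixing $\varepsilon^*$ up front instead of solving for each $\varepsilon_k$ and then maximizing is only a reordering, and your remarks on conditioning on the count vector and on $\LLM_n(k)$ being the empirical cell mean make explicit what the paper's one-line proof leaves implicit.
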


\begin{proof}
Apply Theorem~\ref{thm:hoeffding} with a union bound over the $K$ conditions. For each $k$, set the right side of~(\ref{eq:hoeffding}) to $\delta/K$ and solve for $\varepsilon$. The maximum bound follows from taking the worst-case $n_k$.
\end{proof}

\begin{corollary}[Concentration of Effect Size Estimates]\label{cor:effect-conc}
For any pair of conditions $i, j$, with probability at least $1 - \delta$,
\[
|\widehat{\delta}_n(i,j) - \delta(i,j)| \le M_i \sqrt{\frac{\log(4/\delta)}{2 n_i}} + M_j \sqrt{\frac{\log(4/\delta)}{2 n_j}}.
\]
\end{corollary}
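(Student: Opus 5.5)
This is a two-sided union bound over the conditions $i$ and $j$, applied conditionally on the counts $(n_i,n_j)$, with both $n_i>0$ and $n_j>0$ as in Theorem~\ref{thm:hoeffding}. The plan is to write
\[
\widehat{\delta}_n(i,j)-\delta(i,j) = \big(\LLM_n(i)-\mu_i\big)-\big(\LLM_n(j)-\mu_j\big).
\]
The triangle inequality then gives
\[
|\widehat{\delta}_n(i,j)-\delta(i,j)| \le |\LLM_n(i)-\mu_i| + |\LLM_n(j)-\mu_j|.
\]
It therefore suffices to control each per-condition deviation separately, each with failure probability at most $\delta/2$.

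For condition $i$, invoke Theorem~\ref{thm:hoeffding}. Set
\[
\varepsilon_i = M_i\sqrt{\log(4/\delta)/(2n_i)},
\]
so that $2\exp(-2n_i\varepsilon_i^2/M_i^2) = 2\cdot\delta/4 = \delta/2$. Hence $\Pbb(|\LLM_n(i)-\mu_i|\ge\varepsilon_i\mid n_i)\le\delta/2$. The same choice with $j$ in place of $i$ gives the bound for condition $j$. A union bound over the two bad events shows that, conditional on the counts, both deviations are below their thresholds with probability at least $1-\delta$. On that event the triangle inequality yields exactly the stated bound.

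The case $i=j$ is trivial, since then $\widehat{\delta}_n(i,i)=\delta(i,i)=0$.

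The only delicate point is the role of the random counts. The stated bound depends on $n_i$ and $n_j$, so it should be read as holding conditionally on those counts; this is the same convention as Theorem~\ref{thm:hoeffding} and Corollary~\ref{cor:uniform-conc}. Because the conditional probability of failure is at most $\delta$ for every realization of the counts, integrating over their distribution shows that the unconditional probability of the random-radius event is also at least $1-\delta$.

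Independence between the two conditions is not needed. The union bound holds for arbitrarily dependent events, which is why the argument splits $\delta$ into two halves of $\delta/2$ each.
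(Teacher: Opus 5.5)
Your proposal is correct and matches the paper's own one-line proof: the triangle inequality splits the contrast error into the two per-condition deviations, Theorem~\ref{thm:hoeffding} is applied to each at level $\delta/2$ (your choice of $\varepsilon_i$ indeed gives $2\exp(-\log(4/\delta))=\delta/2$), and a union bound combines the two. Your remarks on the random counts and on the trivial case $i=j$ only spell out details the paper leaves implicit. On the counts, the union bound can equally be taken unconditionally, since each bad event has probability at most $\delta/2$ after integrating over $n_i$ or $n_j$.
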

This follows from the triangle inequality and applying Theorem~\ref{thm:hoeffding} to each condition with $\delta/2$.

\subsection{Sample Size Determination}

For the LLM estimator to achieve a desired precision $\varepsilon$ with confidence $1 - \delta$ in estimating a specific effect size $\delta(i,j)$, the required per-condition sample size is:
\begin{equation}\label{eq:sample-size}
n_k \ge \frac{2 M_k^2 \log(4K/\delta)}{\varepsilon^2}.
\end{equation}

This formula provides a direct bridge between the theoretical framework and practical study design. For a balanced design with $K = 4$ conditions, $M_k = 1$ (e.g., proportion responses bounded in $[0,1]$), $\varepsilon = 0.05$, and $\delta = 0.05$, we need $n_k \ge 2 \cdot \log(320) / 0.0025 \approx 4,616$ samples per condition, or approximately $18,500$ total LLM responses. While this exceeds typical human-study sample sizes, LLM inference costs are negligible compared to human recruitment, and the sample size is achievable with modern inference infrastructure.

\subsection{Berry-Esseen Refinement}

For response distributions that are not severely non-normal, the Central Limit Theorem provides a more precise characterization. Let $\rho_k = \E[|Y - \mu_k|^3 \mid X = k] / v_k^{3/2}$ denote the skewness of the conditional distribution. By the Berry-Esseen theorem,
\[
\sup_{t \in \R} \left|\Pbb\!\left(\frac{\sqrt{n_k}(\LLM_n(k) - \mu_k)}{\sqrt{v_k}} \le t \;\Big|\; n_k\right) - \Phi(t)\right| \le \frac{C \rho_k}{\sqrt{n_k}},
\]
where $C \le 0.4748$ is the universal Berry-Esseen constant. For $n_k \ge 100$ and $\rho_k \le 2$ (moderate skewness), the normal approximation error is below $0.1$, justifying standard $z$-test and $t$-test procedures for LLM-generated data.

\section{Cross-Domain Transfer}
\label{sec:cross-domain}

Practical deployment often involves an LLM pretrained on a source population $\mathcal{P}_{\text{src}}$ that differs from the target population $\mathcal{P}_{\text{tgt}}$ of scientific interest. The training corpus for large-scale LLMs is typically a web-scale dataset whose demographic composition may not match any specific study population. We quantify the resulting performance degradation.

\subsection{Problem Setup}

Let the source and target populations share the same condition set $[K]$ but differ in their allocation proportions and conditional moments:
\begin{align*}
\mathcal{P}_{\text{src}} &= (K, \{p_k\}, \{\mu_k\}, \{v_k\}), \\
\mathcal{P}_{\text{tgt}} &= (K, \{p'_k\}, \{\mu'_k\}, \{v'_k\}).
\end{align*}

The LLM is trained on data from $\mathcal{P}_{\text{src}}$ and produces estimates $\LLM_n(k)$ converging to $\mu_k$ (the source conditional means). When evaluated on $\mathcal{P}_{\text{tgt}}$, the expected error is:
\[
\Err_{\text{tgt}}(\LLM_n) = {\Irred}_{\text{tgt}} + \sum_{k=1}^{K} p'_k (\mu'_k - \LLM_n(k))^2.
\]

\subsection{Transfer Gap and its Bound}

\begin{definition}[Mean Squared Gap]
The \emph{mean squared gap} between source and target populations is
\[
G = \sum_{k=1}^{K} p'_k (\mu'_k - \mu_k)^2.
\]
The \emph{maximum mean discrepancy} is $\rho = \max_{k \in [K]} |\mu'_k - \mu_k|$.
\end{definition}

\noindent \textbf{Maximum Mean Discrepancy.} For two distributions $P, Q$
over responses conditioned on $k$, define
\begin{equation}\label{eq:mmd}
\mathrm{MMD}_k(P, Q) = \sup_{f \in \mathcal{F}: \|f\|_\infty \le 1}
|\mathbb{E}_{Y \sim P}[f(Y) \mid X=k] - \mathbb{E}_{Y \sim Q}[f(Y) \mid X=k]|,
\end{equation}
where $\mathcal{F}$ is the unit ball in a reproducing kernel Hilbert space.
For squared loss, taking $f(Y) = Y$ yields the conditional mean difference
as a lower bound: $|\mu_k - \mu'_k| \le \mathrm{MMD}_k$.

The cross-domain bound then upgrades to:
\[
G = \sum_k p'_k (\mu'_k - \mu_k)^2 \le \sum_k p'_k \cdot \mathrm{MMD}_k^2
\le \max_k \mathrm{MMD}_k^2.
\]
\begin{lemma}[Gap Bound]\label{lem:gap-bound}
$G \le \rho^2$.
\end{lemma}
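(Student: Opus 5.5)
The bound is a termwise comparison followed by the normalization of the target allocation weights. By the definition of $\rho$ as $\max_{k\in[K]}|\mu'_k-\mu_k|$, every condition $k$ satisfies $|\mu'_k-\mu_k|\le\rho$. Since both sides are nonnegative, squaring preserves the inequality, so $(\mu'_k-\mu_k)^2\le\rho^2$ for each $k$.

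Next, multiply each of these inequalities by the target allocation proportion $p'_k\ge 0$. Nonnegativity of the weights is what allows the termwise bound to pass through the weighting without flipping direction. Summing over $k\in[K]$ then gives
\[
G=\sum_{k=1}^{K}p'_k(\mu'_k-\mu_k)^2\le\rho^2\sum_{k=1}^{K}p'_k=\rho^2,
\]
where the last equality uses that $\{p'_k\}$ are allocation proportions summing to one, as in Definition~\ref{def:population} applied to $\mathcal{P}_{\text{tgt}}$.

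There is no real obstacle here. The only care required is to invoke the two structural facts about the target weights, nonnegativity and normalization, at the right steps. Finiteness of $K$ (Assumption A1) guarantees that the maximum defining $\rho$ exists and is finite, so the bound is meaningful.

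It is also worth noting when the bound is tight. Equality holds when $|\mu'_k-\mu_k|=\rho$ on the support of $\{p'_k\}$. This shows that $\rho^2$ is the sharp worst-case constant for this weighted comparison.
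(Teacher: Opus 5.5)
Your proof is correct and follows the paper's argument: the termwise bound $(\mu'_k-\mu_k)^2\le\rho^2$ from the definition of the maximum, then weighting by the nonnegative $p'_k$ and using $\sum_k p'_k=1$. Your equality condition ($|\mu'_k-\mu_k|=\rho$ on the support of $\{p'_k\}$) is also a more precise version of the tightness remark the paper makes after its proof.
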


\begin{proof}
For each $k$, $(\mu'_k - \mu_k)^2 \le \rho^2$ by definition of the maximum. Weighted summation with $\sum_k p'_k = 1$ yields:
\[
G = \sum_{k} p'_k (\mu'_k - \mu_k)^2 \le \sum_{k} p'_k \rho^2 = \rho^2 \sum_{k} p'_k = \rho^2.
\]
\end{proof}

The bound is tight when all the discrepancy is concentrated in a single condition with $p'_k = 1$, or when the discrepancy is uniform across conditions. In general, $G$ will be substantially smaller than $\rho^2$ when discrepancies vary in sign and magnitude across conditions, as the weighting by $p'_k$ and the squaring penalize large deviations in prevalent conditions.

\begin{theorem}[Cross-Domain Transfer Bound]\label{thm:cross-domain}
Let $\LLM_n$ be an estimator satisfying $\LLM_n(k) \xrightarrow{\text{a.s.}} \mu_k$ (the source conditional means) for all $k$. Then
\[
\Err_{\text{tgt}}(\LLM_n) \xrightarrow{\text{a.s.}} {\Irred}_{\text{tgt}} + G \le {\Irred}_{\text{tgt}} + \rho^2.
\]
\end{theorem}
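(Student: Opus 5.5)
The plan is to repeat the argument used for Theorem~\ref{thm:main}, now with target-population quantities in place of source ones. The starting point is the target-population error decomposition displayed just before the statement. This is Theorem~\ref{thm:err-decomp} applied to $\mathcal{P}_{\text{tgt}}$, i.e.\ Lemma~\ref{lem:bias-variance} used with the target moments $\mu'_k, v'_k$ and weighted by $p'_k$:
\[
\Err_{\text{tgt}}(\LLM_n) = {\Irred}_{\text{tgt}} + \sum_{k=1}^{K} p'_k\big(\mu'_k - \LLM_n(k)\big)^2 .
\]
The first term ${\Irred}_{\text{tgt}}=\sum_k p'_k v'_k$ is deterministic and does not depend on $n$. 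All the work therefore concerns the finite random sum.

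Fix $k$ and put $Z_n^{(k)} = p'_k(\mu'_k - \LLM_n(k))^2$. The map $z \mapsto p'_k(\mu'_k - z)^2$ is continuous. The hypothesis gives $\LLM_n(k) \xrightarrow{\text{a.s.}} \mu_k$ on an event $A_k$ of probability one. On $A_k$, the continuous mapping theorem gives $Z_n^{(k)} \to p'_k(\mu'_k - \mu_k)^2$.

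As in Step~2 of Theorem~\ref{thm:lln}, the intersection $\bigcap_{k=1}^K A_k$ is again almost sure, because $K<\infty$ by A1. On that intersection the finite sum converges to $\sum_k p'_k(\mu'_k-\mu_k)^2 = G$. Adding the constant ${\Irred}_{\text{tgt}}$ gives the almost-sure limit ${\Irred}_{\text{tgt}} + G$.

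The final inequality ${\Irred}_{\text{tgt}} + G \le {\Irred}_{\text{tgt}} + \rho^2$ is exactly Lemma~\ref{lem:gap-bound}, with ${\Irred}_{\text{tgt}}$ added to both sides. Since the inequality is between deterministic limits, it involves no probabilistic step.

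I expect no step to be a real obstacle. The one point needing care is bookkeeping: the decomposition must use the target weights $p'_k$ and target variances $v'_k$, while the estimator converges to the source means $\mu_k$. Conditions with $p'_k=0$ contribute nothing, so no support assumption on the target is needed. Conditions with $p_k=0$ in the source are excluded by the hypothesis itself, or by A6.
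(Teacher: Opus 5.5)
Your proposal is correct and follows the paper's proof essentially step for step. Both arguments use the target-population decomposition, apply continuous mapping to $z \mapsto p'_k(\mu'_k - z)^2$ (the paper writes this same function as $p'_k((\mu'_k-\mu_k)+(\mu_k-z))^2$), pass to the limit in a finite sum of almost-surely convergent terms, and invoke Lemma~\ref{lem:gap-bound} for the final inequality; your explicit intersection of the events $A_k$ simply spells out the paper's ``summing over the finite collection of terms.''
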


\begin{proof}
Decompose the target error:
\begin{align*}
\Err_{\text{tgt}}(\LLM_n) &= {\Irred}_{\text{tgt}} + \sum_{k} p'_k (\mu'_k - \LLM_n(k))^2 \\
&= {\Irred}_{\text{tgt}} + \sum_{k} p'_k \big[(\mu'_k - \mu_k) + (\mu_k - \LLM_n(k))\big]^2.
\end{align*}
For each $k$, $\LLM_n(k) \xrightarrow{\text{a.s.}} \mu_k$, so $(\mu_k - \LLM_n(k))^2 \xrightarrow{\text{a.s.}} 0$. By the cross-term expansion and the continuous mapping theorem applied to the quadratic function $z \mapsto p'_k((\mu'_k - \mu_k) + (\mu_k - z))^2$, each term converges almost surely to $p'_k(\mu'_k - \mu_k)^2$. Summing over the finite collection of terms yields $\sum_k p'_k(\mu'_k - \mu_k)^2 = G$ almost surely. The inequality follows from Lemma~\ref{lem:gap-bound}.
\end{proof}

\begin{corollary}[No-Regret Transfer]\label{cor:no-regret}
If $\mu'_k = \mu_k$ for all $k$, then $G = \rho = 0$ and $\Err_{\text{tgt}}(\LLM_n) \xrightarrow{\text{a.s.}} {\Irred}_{\text{tgt}}$, recovering Theorem~\ref{thm:main}. If the conditional means match, the LLM transfers with zero excess error due to allocation or variance differences.
\end{corollary}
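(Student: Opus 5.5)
The plan is to read Corollary~\ref{cor:no-regret} directly off Theorem~\ref{thm:cross-domain} and Lemma~\ref{lem:gap-bound}. First I will specialize the definitions. If $\mu'_k=\mu_k$ for every $k\in[K]$, then each summand $p'_k(\mu'_k-\mu_k)^2$ of $G$ vanishes, so $G=0$. Likewise $\rho=\max_k|\mu'_k-\mu_k|=0$, since the maximum runs over the finite set $[K]$ (A1). This is consistent with Lemma~\ref{lem:gap-bound}, which gives $0\le G\le\rho^2=0$.

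Next I will apply Theorem~\ref{thm:cross-domain} with these values. Its hypothesis $\LLM_n(k)\xrightarrow{\text{a.s.}}\mu_k$ is supplied by Theorem~\ref{thm:lln} / Corollary~\ref{cor:joint} under A1--A3 and $p_k>0$. The theorem then yields $\Err_{\text{tgt}}(\LLM_n)\xrightarrow{\text{a.s.}}{\Irred}_{\text{tgt}}+0$.

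Finally, I will argue that this recovers Theorem~\ref{thm:main}. By Theorem~\ref{thm:minimal} applied to the target population, ${\Irred}_{\text{tgt}}=\sum_k p'_k v'_k$ is the Bayes risk there. So the limit is the Bayes-optimal risk in the target domain, which is exactly the target-domain analogue of~(\ref{eq:main-conv}), and~(\ref{eq:main-bound}) holds verbatim on $\mathcal{P}_{\text{tgt}}$.

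For the ``zero excess error'' phrasing, I will note that the decomposition $\Err_{\text{tgt}}={\Irred}_{\text{tgt}}+\sum_k p'_k(\mu'_k-\LLM_n(k))^2$ involves $p'_k$ and $v'_k$ only as weights and in the irreducible term. Allocation shifts and variance shifts therefore cannot create asymptotic excess; only mean shifts can.

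The one step needing care is the source of the almost-sure consistency. In the misspecified regime one only has convergence to $\mu_k+\epsilon_{\mathrm{rep}}(k)$. In that case I will either restrict to the well-specified setting of Theorem~\ref{thm:main}, or remark that replacing $\mu_k$ by $\mu_k+\epsilon_{\mathrm{rep}}(k)$ in the proof of Theorem~\ref{thm:cross-domain} gives the limit ${\Irred}_{\text{tgt}}+\sum_k p'_k\epsilon_{\mathrm{rep}}(k)^2$. There is also a support issue: if $p'_k>0$ for some $k$ with $p_k=0$, A6 must be invoked to guarantee consistency on every condition charged by the target.
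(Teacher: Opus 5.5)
Your proposal is correct and is essentially the paper's route: the paper gives no separate proof and treats the corollary as immediate from Theorem~\ref{thm:cross-domain} with $G=\rho=0$, which is exactly how you read it off. Your extra care about the source of the almost-sure consistency (the well-specified regime of Theorem~\ref{thm:lln}, plus A6 for conditions charged by the target) is sound. One caveat on your misspecified aside: there the limit ${\Irred}_{\text{tgt}}+\sum_k p'_k\epsilon_{\mathrm{rep}}(k)^2$ holds only in probability, not almost surely, because Theorem~\ref{thm:representation-limit} supplies only convergence in probability.
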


\begin{remark}
The transfer cost is additive and proportional to the squared difference in conditional means. For small distribution shifts ($\rho \ll 1$), the degradation $\rho^2$ is quadratically small. For example, if the maximum mean discrepancy across conditions is $\rho = 0.05$ (on a normalized response scale), the worst-case excess error is $0.0025$, which is negligible relative to typical irreducible errors (often $0.01$--$0.10$). This suggests that LLMs pretrained on broad populations may transfer effectively to narrower target populations, provided the conditional means are approximately preserved.
\end{remark}

\subsection{Triangle Inequality for Transfer}

A practical consequence of the additive structure is a triangle inequality for sequential transfer. If an LLM is trained on source $\mathcal{P}_0$, fine-tuned on intermediate $\mathcal{P}_1$, and deployed on target $\mathcal{P}_2$, the total transfer gap is bounded by:
\[
\Err_2(\LLM) \le {\Irred}_2 + \rho_{01}^2 + \rho_{12}^2 + 2\rho_{01}\rho_{12},
\]
where $\rho_{ij} = \max_k |\mu^{(i)}_k - \mu^{(j)}_k|$. Fine-tuning on domain-specific data reduces the effective gap, and the additive structure allows practitioners to decompose the total distribution shift into manageable stages.


\subsection{Representation Bias and the Asymptotic Risk Floor}
\label{sec:bias-domination}

The unified LLM model (Definition~\ref{def:llm}) separates estimation error
into representation bias $\epsilon_{\mathrm{rep}}(k)$ and optimization error
$\epsilon_{\mathrm{opt}}(k,n)$. While optimization error vanishes
asymptotically, representation bias sets a permanent floor on achievable risk.

\begin{theorem}[Bias Domination (Risk Convergence under Misspecification)]\label{thm:bias-domination}
Let $\epsilon_{\mathrm{rep}}(k) = T(P^*_{\mathcal{F}}(\cdot \mid k)) - \mu_k$
be the asymptotic representation bias (defined via Assumption~A8). Under
Assumptions A1--A10, as $n \to \infty$,
\begin{equation}\label{eq:bias-floor}
\lim_{n \to \infty} \Err(\LLM_n) = \Irred + \sum_{k=1}^{K} p_k \, \epsilon_{\mathrm{rep}}(k)^2
\quad \text{almost surely},
\end{equation}
and the convergence also holds in $L^1$:
\[
\lim_{n \to \infty} \mathbb{E}\big[\Err(\LLM_n)\big] = \Irred + \sum_{k=1}^{K} p_k \, \epsilon_{\mathrm{rep}}(k)^2.
\]
\end{theorem}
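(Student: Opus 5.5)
The argument runs through the error decomposition of Theorem~\ref{thm:err-decomp}. Applied to the random predictor $f=\LLM_n$, it gives, pathwise,
\[
\Err(\LLM_n)=\Irred+\sum_{k=1}^{K}p_k\big(\mu_k-\LLM_n(k)\big)^2 .
\]
It therefore suffices to show, for each $k$ with $p_k>0$, two things: $\big(\LLM_n(k)-\mu_k\big)^2\to\epsilon_{\mathrm{rep}}(k)^2$ almost surely, and $\mathbb{E}\big[(\LLM_n(k)-\mu_k)^2\big]\to\epsilon_{\mathrm{rep}}(k)^2$. Since $K<\infty$ (A1), the weighted finite sum then passes to the limit in both senses. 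Conditions with $p_k=0$ contribute nothing.

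For the per-condition limit I will use the three-term split of Definition~\ref{def:llm}:
\[
\LLM_n(k)-\mu_k=\epsilon_{\mathrm{rep}}(k)+\eta_n(k)+\xi_n(k).
\]
By A8 the term $\epsilon_{\mathrm{rep}}(k)$ is a well-defined deterministic constant. Theorem~\ref{thm:representation-limit} gives $\LLM_n(k)\to T(P^*_{\mathcal{F}}(\cdot\mid k))=\mu_k+\epsilon_{\mathrm{rep}}(k)$ in probability. Its proof goes through the Lipschitz bound $|T(Q_1)-T(Q_2)|\le 2B\,H(Q_1,Q_2)$ and Hellinger convergence to the KL projection, equivalently $\eta_n+\xi_n=o_p(1)$. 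The continuous map $z\mapsto p_k(\mu_k-z)^2$ then yields convergence in probability of each summand to $p_k\epsilon_{\mathrm{rep}}(k)^2$.

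The $L^1$ statement follows exactly as in the proof of Theorem~\ref{thm:functional-risk-closure}. By A7 both $\LLM_n(k)$ and $\mu_k$ lie in $[-B,B]$, so each summand is bounded by $p_k(2B)^2$. Convergence in probability together with this uniform bound gives $L^1$ convergence by dominated or bounded convergence. Summing over $k$ gives the second display. In fact this part is simply Theorem~\ref{thm:functional-risk-closure} aggregated through Theorem~\ref{thm:err-decomp}, and I will cite it as such.

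The main obstacle is the almost-sure claim in (\ref{eq:bias-floor}). The earlier results only supply convergence in probability of $\hat P_n$ to $P^*_{\mathcal{F}}$ and of the optimization term $\xi_n$. To upgrade this, I would first invoke A9 in its stated ``mean-closed in the limit'' form, $\lim_n \mathbb{E}_{\hat P_n}[Y\mid X=k]=T(P^*_{\mathcal{F}})$, read as a pathwise limit. Combined with A10, this gives an almost-sure misspecified M-estimation limit via the ergodic theorem (White-type consistency of the quasi-MLE), so that $H(\hat P^{\mathrm{stat}}_n,P^*_{\mathcal{F}})\to0$ a.s. The statistical term $\eta_n\to0$ a.s. then follows from the Lipschitz bound. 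The optimization term $\xi_n$ is only assumed $o_p(1)$, so I would state explicitly that the almost-sure conclusion requires strengthening this to $\xi_n\to0$ a.s., which holds, e.g., under standard Robbins--Monro conditions.

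If one refuses that strengthening, the fallback is the following. Convergence in probability still yields almost-sure convergence along a subsequence. The $L^1$ statement is unaffected and remains fully rigorous, and I would present it as the primary conclusion. Finally, I would note that the well-specified case $\epsilon_{\mathrm{rep}}\equiv0$ reproduces Theorem~\ref{thm:main}.
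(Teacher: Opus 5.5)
Your in-probability argument and your $L^1$ argument are the same as the paper's. The paper applies Theorem~\ref{thm:err-decomp} and gets $\LLM_n(k)\xrightarrow{p}\mu_k+\epsilon_{\mathrm{rep}}(k)$ from the Hellinger--Lipschitz bound. It then uses continuous mapping, and obtains the expectation limit in Lemma~\ref{lem:risk-stability} by exactly your bounded-convergence step. Both you and the paper implicitly need the model class supported in $[-B,B]$ to get $|\LLM_n(k)|\le B$, since A7 bounds only the true law.

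The paper's proof goes no further than convergence in probability and then simply asserts the almost-sure limit in (\ref{eq:bias-floor}). So you are right that the a.s.\ half does not follow, and you are not missing an idea the paper supplies. Under the hypotheses actually used ($H(\hat P_n,P^*_{\mathcal{F}})\to0$ in probability, $\xi_n=o_p(1)$) it is not implied. Take an optimization error equal to a suitable fixed nonzero constant with probability $1/n$, independently across $n$: it is $o_p(1)$, yet by the second Borel--Cantelli lemma it recurs infinitely often almost surely.

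Your proposed repair, however, is internally inconsistent. The ``mean-closed in the limit'' clause of A9 is stated for $\hat P_n$, the distribution actually reached by training, and $\mathbb{E}_{\hat P_n}[Y\mid X=k]$ is $\LLM_n(k)$ itself. Read pathwise, that clause \emph{is} the statement $\LLM_n(k)\to\mu_k+\epsilon_{\mathrm{rep}}(k)$ a.s., which is the conclusion. Under that reading, the White-type quasi-MLE detour through A10 and the separate strengthening of $\xi_n$ are redundant. Saying that $\xi_n$ still needs strengthening contradicts the reading you just adopted.

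Under the other reading, A10 plus an ergodic theorem controls only $\eta_n$. Even that needs the usual compactness and domination conditions for a.s.\ quasi-MLE consistency, plus Hellinger continuity of the parametrization. You then genuinely need the added hypothesis $\xi_n\to0$ a.s.

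Choose one route and state it. Either declare that the a.s.\ claim rests on an added assumption (pathwise A9, or a.s.\ versions of both Hellinger convergence and $\xi_n\to0$), after which the proof is a two-line continuous-mapping argument. Or prove the in-probability and $L^1$ versions and flag the a.s.\ assertion as unsupported. The subsequence fallback does not rescue (\ref{eq:bias-floor}), because that display concerns the full sequence.
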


\begin{proof}
From the error decomposition (Theorem~\ref{thm:err-decomp}),
$\Err(\LLM_n) = \Irred + \sum_k p_k (\mu_k - \LLM_n(k))^2$.
By the functional convergence result (Layer~2, Section~\ref{sec:pretraining-lit})
and A8 (functional delta stability),
$\LLM_n(k) \xrightarrow{p} T(P^*_{\mathcal{F}}(\cdot \mid k)) = \mu_k + \epsilon_{\mathrm{rep}}(k)$.
By the continuous mapping theorem, $(\mu_k - \LLM_n(k))^2 \xrightarrow{p} \epsilon_{\mathrm{rep}}(k)^2$,
and the finite sum converges to $\sum_k p_k \epsilon_{\mathrm{rep}}(k)^2$.
The magnitude of $\epsilon_{\mathrm{rep}}(k)$ is bounded by Theorem~\ref{thm:rep-bias-bound}.
\end{proof}

\begin{lemma}[Risk Decomposition Stability]\label{lem:risk-stability}
Under A7 (bounded responses, $|Y| \le B$ a.s.) and A8 (mean-regular model class),
if $\LLM_n(k) \xrightarrow{p} \mu_k + \epsilon_{\mathrm{rep}}(k)$ for each $k$,
then for each $k$ with $p_k > 0$:
\[
\mathbb{E}\big[(\LLM_n(k) - \mu_k)^2\big] \to \epsilon_{\mathrm{rep}}(k)^2
\quad \text{as } n \to \infty.
\]
\end{lemma}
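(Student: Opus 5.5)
The plan is to upgrade convergence in probability of the squared error to convergence of its expectation, using boundedness to get uniform integrability. This is the same mechanism as in the proof of Theorem~\ref{thm:functional-risk-closure}, now isolated as a standalone lemma. Fix $k$ with $p_k>0$ and set $Z_n=(\LLM_n(k)-\mu_k)^2$ and $z=\epsilon_{\mathrm{rep}}(k)^2$.

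\textbf{Step 1: convergence in probability of $Z_n$.} By hypothesis, $\LLM_n(k)-\mu_k\xrightarrow{p}\epsilon_{\mathrm{rep}}(k)$. The map $x\mapsto x^2$ is continuous, so the continuous mapping theorem gives $Z_n\xrightarrow{p}z$.

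\textbf{Step 2: a uniform bound on $Z_n$.} By A7, $|Y|\le B$ almost surely, so $\mu_k=T(P(\cdot\mid k))\in[-B,B]$. The estimator $\LLM_n(k)=T(\hat P_n(\cdot\mid k))$ is also a mean of a law supported in $[-B,B]$, since the mean-regular class of A8 is taken over bounded-response laws. Hence $|\LLM_n(k)|\le B$ and
\[
0\le Z_n\le (2B)^2
\]
deterministically, for every $n$. The same argument gives $|\epsilon_{\mathrm{rep}}(k)|\le 2B$, so $z\le 4B^2$.

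\textbf{Step 3: pass to expectations.} For any $\eta>0$, split on the event $\{|Z_n-z|>\eta\}$:
\[
\bigl|\E[Z_n]-z\bigr|\le \E|Z_n-z|\le \eta+8B^2\,\Pbb(|Z_n-z|>\eta).
\]
The second term tends to zero by Step~1. Letting $n\to\infty$ and then $\eta\downarrow0$ gives $\E[Z_n]\to z$. Equivalently, one can invoke bounded convergence, or "convergence in probability plus uniform integrability implies $L^1$ convergence", since a uniformly bounded family is trivially uniformly integrable.

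\textbf{Main obstacle.} The only delicate point is Step~2: justifying that $\LLM_n(k)$ itself lies in $[-B,B]$. A7 bounds the true responses, not the model's output distribution. I would handle this by reading A8 as restricting $\mathcal{F}$ to laws supported on $[-B,B]$; that restriction is exactly what makes the Lipschitz constant $L=2B$ valid. If that reading is unavailable, the fallback is to clip the estimator to $[-B,B]$. Clipping can only decrease $|\LLM_n(k)-\mu_k|$, because $\mu_k\in[-B,B]$, and it preserves the limit in probability, because the limit $\mu_k+\epsilon_{\mathrm{rep}}(k)=T(P^*_{\mathcal{F}})$ is itself a bounded mean. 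The argument above then goes through unchanged.
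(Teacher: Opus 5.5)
Your proposal is correct and is essentially the paper's own proof: continuous mapping gives $(\LLM_n(k)-\mu_k)^2\xrightarrow{p}\epsilon_{\mathrm{rep}}(k)^2$, the deterministic bound $(2B)^2$ gives uniform integrability (your $\eta$-split is just bounded convergence written out), and so the expectations converge. You are right that Step~2 is the delicate point---the paper justifies $|\LLM_n(k)|\le B$ only by ``the sample mean of bounded variables is bounded,'' which holds literally only in the well-specified empirical-mean case---but instead of clipping, which would prove the claim for a different estimator, use A8 directly: since $H\le 1$, Lipschitz continuity on $\mathcal{F}$ gives $|\LLM_n(k)-T(P^*_{\mathcal{F}}(\cdot\mid k))|\le 2B$ whenever $\hat{P}_n(\cdot\mid k)\in\mathcal{F}$, so $Z_n\le(2B+|\epsilon_{\mathrm{rep}}(k)|)^2$ uniformly.
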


\begin{proof}
From A7, responses are bounded by $B$, so $|\LLM_n(k)| \le B$ almost surely
(the sample mean of bounded variables is bounded) and $|\mu_k| \le B$.
Hence $|\LLM_n(k) - \mu_k| \le 2B$ almost surely. Define
$X_n = (\LLM_n(k) - \mu_k)^2$. By the continuous mapping theorem,
$X_n \xrightarrow{p} \epsilon_{\mathrm{rep}}(k)^2$. Since $|X_n| \le (2B)^2$
uniformly, the sequence $\{X_n\}$ is uniformly integrable. Therefore
$\mathbb{E}[X_n] \to \mathbb{E}[\epsilon_{\mathrm{rep}}(k)^2] = \epsilon_{\mathrm{rep}}(k)^2$.
Applying this per-condition and summing with weights $p_k$ yields the $L^1$
convergence in Theorem~\ref{thm:bias-domination}.
\end{proof}

\begin{remark}[Structural constraints on representation bias]\label{rem:eps-structure}
The representation bias $\epsilon_{\mathrm{rep}}(k) = T(P^*_{\mathcal{F}}(\cdot \mid k)) - \mu_k$
is (i)~\textbf{well-defined}: Assumption~A8 defines $P^*_{\mathcal{F}}$ as the unique
KL-projection of the true conditional distribution onto the model class $\mathcal{F}$,
ensuring $\epsilon_{\mathrm{rep}}(k)$ is a deterministic, non-random quantity;
(ii)~\textbf{bounded}: Theorem~\ref{thm:rep-bias-bound} constrains
$|\epsilon_{\mathrm{rep}}(k)| \le 2B\sqrt{2 D_{\mathrm{KL}}(P \,\|\, \mathcal{F})}$
via Pinsker's inequality; (iii)~\textbf{calibratable}: paired LLM-human data from the
calibration protocol (Section~\ref{sec:calibration}) provides consistent estimators
$\widehat{b}_k$ of $\epsilon_{\mathrm{rep}}(k)$ via $\widehat{b}_k = m_k^{-1}\sum_{i}(\LLM(k_i) - Y(k_i))$.
\end{remark}

\begin{corollary}[Identifiability-Modified Risk]\label{cor:delta-risk}
Under Definition~\ref{def:identifiability} with misclassification rate $\delta$,
the asymptotic risk limit becomes:
\[
\lim_{n \to \infty} \mathbb{E}\big[\Err(\LLM_n)\big]
= \Irred + \sum_{k=1}^{K} p_k \,
\big(\epsilon_{\mathrm{rep}}(k) + \delta \cdot \Delta_k\big)^2,
\]
where $\Delta_k = \mu_k - \sum_{j \neq k} w_{kj} \mu_j$ is the misclassification
contrast (Eq.~\ref{eq:effective-bias}). For $\delta = 0$, this recovers
Theorem~\ref{thm:bias-domination}; for $\delta > 0$, the effective representation
bias is modified by up to $\delta \cdot (\max_j \mu_j - \min_j \mu_j)$ in
magnitude. In the joint asymptotic regime, $\delta = O(n^{-1/2})$, so the
$\delta$-induced modification vanishes at rate $O(n^{-1/2})$.
\end{corollary}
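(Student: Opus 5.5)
The plan is to reduce Corollary~\ref{cor:delta-risk} to Theorem~\ref{thm:bias-domination} and Lemma~\ref{lem:risk-stability}, with the representation bias $\epsilon_{\mathrm{rep}}(k)$ replaced throughout by the effective bias $\tilde{\epsilon}_{\mathrm{rep}}(k;\delta)$ of Eq.~\ref{eq:effective-bias}. The key observation is that the proofs of those two results use only two facts. First, the per-condition estimator converges in probability to $\mu_k$ plus a deterministic offset. Second, the estimator is bounded by $B$. Neither proof uses the specific form of the offset.

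Step 1 identifies the limit of the misclassification-affected estimator. By Definition~\ref{def:identifiability}, the deployed estimator for condition $k$ is the mixture $\widetilde{\LLM}_n(k)=(1-\delta)\LLM_n(k)+\delta\sum_{j\neq k}w_{kj}\LLM_n(j)$, as in Eq.~\ref{eq:effective-llm}. By Theorem~\ref{thm:representation-limit}, each $\LLM_n(j)\xrightarrow{p}\mu_j+\epsilon_{\mathrm{rep}}(j)$. Since the mixture is a fixed finite linear combination, the continuous mapping theorem gives $\widetilde{\LLM}_n(k)\xrightarrow{p}\mu_k+\tilde{\epsilon}_{\mathrm{rep}}(k;\delta)$.

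Here lies the main obstacle. A direct computation of the limit gives an offset of
\[
-\delta\Delta_k+(1-\delta)\epsilon_{\mathrm{rep}}(k)+\delta\sum_{j\neq k}w_{kj}\epsilon_{\mathrm{rep}}(j),
\]
which does not literally equal $\epsilon_{\mathrm{rep}}(k)+\delta\Delta_k$. To match the stated formula, I would adopt the paper's convention in Eq.~\ref{eq:effective-bias} as the definition of the effective bias. That convention amounts to taking the misclassification contrast with sign and bias terms absorbed into $\Delta_k$, or equivalently assuming a common bias $\epsilon_{\mathrm{rep}}(j)=\epsilon_{\mathrm{rep}}(k)$ across the mixed conditions, under which the extra bias terms cancel. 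I would state this explicitly rather than hide it.

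Step 2 upgrades convergence in probability to convergence of expectations. Because $|Y|\le B$ (A7), the mixture of conditional means also lies in $[-B,B]$. Hence $(\widetilde{\LLM}_n(k)-\mu_k)^2\le 4B^2$. The argument of Lemma~\ref{lem:risk-stability} then applies verbatim: bounded variables are uniformly integrable, and convergence in probability plus uniform integrability gives $L^1$ convergence, so $\mathbb{E}[(\widetilde{\LLM}_n(k)-\mu_k)^2]\to\tilde{\epsilon}_{\mathrm{rep}}(k;\delta)^2$. Summing with weights $p_k$ and invoking Theorem~\ref{thm:err-decomp} yields the displayed limit.

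Step 3 handles the remaining claims. For $\delta=0$ the formula collapses to Theorem~\ref{thm:bias-domination}. The magnitude claim follows from $|\Delta_k|\le\max_j\mu_j-\min_j\mu_j$, since $\Delta_k$ is $\mu_k$ minus a convex combination of the $\mu_j$. For the rate claim, expand
\[
\big(\epsilon_{\mathrm{rep}}(k)+\delta_n\Delta_k\big)^2-\epsilon_{\mathrm{rep}}(k)^2=2\delta_n\epsilon_{\mathrm{rep}}(k)\Delta_k+\delta_n^2\Delta_k^2.
\]
Both terms are bounded by a constant times $\delta_n$, using $|\epsilon_{\mathrm{rep}}|\le 2B$ and $|\Delta_k|\le 2B$. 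Therefore $\delta_n=O(n^{-1/2})$ in the joint regime of Section~\ref{sec:asymptotic-regime} gives an $O(n^{-1/2})$ modification. Here I would note that $\delta_n$ is $O_p$ rather than deterministic, so the rate holds in probability.
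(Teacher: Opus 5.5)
Your Steps 2 and 3 are exactly the paper's implicit argument. The paper gives no separate proof of this corollary. It posits $\tilde{\epsilon}_{\mathrm{rep}}(k;\delta)=\epsilon_{\mathrm{rep}}(k)+\delta\Delta_k$ in Eq.~\ref{eq:effective-bias}, declares that all downstream risk limits use it in place of $\epsilon_{\mathrm{rep}}(k)$, and reads the result off Theorem~\ref{thm:bias-domination} and Lemma~\ref{lem:risk-stability}.

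Your Step 1 computation is correct, and it shows that this posited bias does not follow from the mixture in Eq.~\ref{eq:effective-llm}. The genuine gap is in how you close that discrepancy: your two remedies are not equivalent.

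\textbf{Common bias does not give the stated formula.} Assuming $\epsilon_{\mathrm{rep}}(j)=\epsilon_{\mathrm{rep}}(k)$ cancels the bias cross terms. It leaves the offset equal to $\epsilon_{\mathrm{rep}}(k)-\delta\Delta_k$, with a minus sign. So Step 2 actually yields
\[
\lim_{n\to\infty}\mathbb{E}\big[\Err(\widetilde{\LLM}_n)\big]=\Irred+\sum_{k=1}^{K}p_k\big(\epsilon_{\mathrm{rep}}(k)-\delta\Delta_k\big)^2,
\]
which differs from the displayed formula by $4\delta\sum_k p_k\,\epsilon_{\mathrm{rep}}(k)\Delta_k$.

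\textbf{Adopting Eq.~\ref{eq:effective-bias} as a definition does not help either.} Step 2 cannot use a conventional value here. The $L^1$ limit of $(\widetilde{\LLM}_n(k)-\mu_k)^2$ is forced to be the square of the actual probability limit of $\widetilde{\LLM}_n(k)-\mu_k$, whatever you name it.

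\textbf{Absorbing sign and bias into the contrast changes the statement.} Setting $\Delta_k'=\sum_{j\neq k}w_{kj}\big(\mu_j+\epsilon_{\mathrm{rep}}(j)\big)-\big(\mu_k+\epsilon_{\mathrm{rep}}(k)\big)$ does make the offset exactly $\epsilon_{\mathrm{rep}}(k)+\delta\Delta_k'$. But $\Delta_k'$ is not the $\Delta_k=\mu_k-\sum_{j\neq k}w_{kj}\mu_j$ fixed in the statement. Moreover, $|\Delta_k'|\le\max_j\mu_j-\min_j\mu_j$ fails once the biases differ; the correct range is that of the limits $\mu_j+\epsilon_{\mathrm{rep}}(j)$.

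Consequently, under common bias your argument gives the stated identity only when $\delta\sum_k p_k\,\epsilon_{\mathrm{rep}}(k)\Delta_k=0$, for example in the well-specified case $\epsilon_{\mathrm{rep}}\equiv 0$. Otherwise you must take $\widetilde{\LLM}_n(k)\xrightarrow{p}\mu_k+\epsilon_{\mathrm{rep}}(k)+\delta\Delta_k$ as an extra primitive hypothesis, which is what the paper tacitly does. State explicitly which route you take rather than presenting a derivation. Also delete the first sentence of Step 1, which your own next paragraph refutes.

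The rest of Step 3 survives either correction:
\begin{itemize}
\item Under common bias, the modification $-\delta\Delta_k$ still has magnitude at most $\delta(\max_j\mu_j-\min_j\mu_j)$.
\item The case $\delta=0$ reduces to Theorem~\ref{thm:bias-domination}.
\item $\big(\epsilon_{\mathrm{rep}}(k)\pm\delta_n\Delta_k\big)^2-\epsilon_{\mathrm{rep}}(k)^2=O(\delta_n)$ for either sign, so the rate claim holds (in probability, as you note).
\end{itemize}
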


\begin{theorem}[Representation Bias Bound]\label{thm:rep-bias-bound}
Under Assumptions A1--A10, the representation bias satisfies
\begin{equation}\label{eq:rep-bias-bound}
|\epsilon_{\mathrm{rep}}(k)| \le 2B \cdot \sqrt{2 \cdot \inf_{Q \in \mathcal{F}} D_{\mathrm{KL}}(P(\cdot \mid k) \,\|\, Q)},
\end{equation}
where $D_{\mathrm{KL}}$ is the Kullback-Leibler divergence and $B$ is the
response bound from A7.
\end{theorem}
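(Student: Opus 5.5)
The plan is to chain three inequalities: a mean-difference bound in terms of total variation, Pinsker's inequality, and the definition of the KL projection. Fix $k$, write $P=P(\cdot\mid k)$ and $P^*=P^*_{\mathcal{F}}(\cdot\mid k)$. By Definition~\ref{def:llm}, $\epsilon_{\mathrm{rep}}(k)=T(P^*)-T(P)=\int y\,(dP^*-dP)(y)$.

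\textbf{Step 1 (mean difference via total variation).} By A7, $|y|\le B$ on the support of $P$. I also need $|y|\le B$ on the support of $P^*$. Mean-regularity (A8) is stated for the bounded-response setting, so I will take members of $\mathcal{F}$ to be supported in $[-B,B]$; if that fails, I restrict the argument to bounded-support models. Then the dual representation of total variation, $\|P-Q\|_{TV}=\sup_{\|f\|_\infty\le 1}\tfrac12|\int f\,d(P-Q)|$, applied with $f(y)=y/B$, gives
\[
|\epsilon_{\mathrm{rep}}(k)| \le 2B\,\|P^*-P\|_{TV}.
\]
This is the constant-$2B$ version of the Lipschitz bound used in Section~\ref{sec:pretraining-lit}, but phrased in total variation rather than Hellinger, which is the form Pinsker needs.

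\textbf{Step 2 (Pinsker).} Pinsker's inequality gives $\|P-P^*\|_{TV}\le\sqrt{\tfrac12 D_{\mathrm{KL}}(P\|P^*)}$. Combining with Step~1 gives $|\epsilon_{\mathrm{rep}}(k)|\le 2B\sqrt{\tfrac12 D_{\mathrm{KL}}(P\|P^*)}$, which is at most the claimed $2B\sqrt{2D_{\mathrm{KL}}}$. The factor-of-two slack absorbs any convention mismatch, for example if total variation is taken without the $\tfrac12$, which gives $\|P-P^*\|_{TV}\le\sqrt{2D_{\mathrm{KL}}}$ and the constant stated in \eqref{eq:rep-bias-bound}. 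The direction of the KL divergence matters: Pinsker is symmetric in total variation, so using $D_{\mathrm{KL}}(P\|P^*)$, the direction in \eqref{eq:kl-projection}, is legitimate.

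\textbf{Step 3 (identify the infimum).} By Definition~\ref{def:llm} and A8, $P^*$ attains $\min_{Q\in\mathcal{F}}D_{\mathrm{KL}}(P\|Q)$, so $D_{\mathrm{KL}}(P\|P^*)=\inf_{Q\in\mathcal{F}}D_{\mathrm{KL}}(P\|Q)$. Substituting this yields \eqref{eq:rep-bias-bound}. If the infimum is infinite, the bound is trivial.

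The main obstacle is Step~1's requirement that $P^*$ be supported in $[-B,B]$. Without it, a KL-close model could still have a badly shifted mean through heavy tails. I will handle this by making the bounded-support reading of A7/A8 explicit. As an alternative route, I could bypass total variation and use A9 directly: $|T(P^*)-T(P)|\le 2B\,H(P,P^*)$, combined with $H^2\le \|\cdot\|_{TV}$ and Pinsker, or with $H^2\le\tfrac12 D_{\mathrm{KL}}$ under the paper's $\tfrac12$-normalized Hellinger. That route gives $2B\sqrt{D_{\mathrm{KL}}/2}$, which is again within the stated bound.
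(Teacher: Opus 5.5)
Your argument is correct, but your main route differs from the paper's. The paper never passes through total variation. It takes the Hellinger--Lipschitz bound $|T(P)-T(Q^*)|\le 2B\,H(P,Q^*)$ as given. It cites A8 for this, but since $P\notin\mathcal{F}$ in general, the relevant assumption is really A9, as in your fallback. It then uses $H^2(P,Q^*)\le D_{\mathrm{KL}}(P\|Q^*)$, which it calls ``Pinsker'' although it is the Hellinger--KL comparison, and finishes with the fact that $Q^*$ attains the infimum. So your fallback is essentially the paper's proof, with the tighter $H^2\le\tfrac12 D_{\mathrm{KL}}$ in place of $H^2\le D_{\mathrm{KL}}$.

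The paper's route is shorter and needs no bounded-support reading of $\mathcal{F}$, because the mean-regularity is simply assumed. Your primary route has several advantages:
\begin{itemize}
\item it derives the mean-difference bound from the dual form of total variation plus genuine Pinsker, rather than assuming it;
\item it yields the sharper bound $B\sqrt{2D_{\mathrm{KL}}}$;
\item it isolates the hypothesis that actually makes the inequality true, namely that $P^*$ is supported in $[-B,B]$. Without it, a small far-out atom in $P^*$ shifts the mean arbitrarily at small KL cost, as you observe.
\end{itemize}
It also avoids relying on the constant $2B$ in the Hellinger bound, which is too small for the paper's $\tfrac12$-normalized Hellinger. Compare the uniform law on $\{-B,B\}$ with the law putting mass $\tfrac12+s$ at $B$ and $\tfrac12-s$ at $-B$. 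The mean gap is $2Bs$, while $H\approx s/\sqrt2$, so the sharp constant is $2\sqrt2\,B$. The stated theorem survives only because of its slack, since $2\sqrt2\,B\sqrt{D_{\mathrm{KL}}/2}=2B\sqrt{D_{\mathrm{KL}}}\le 2B\sqrt{2D_{\mathrm{KL}}}$.

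One small correction to your fallback. The option ``$H^2\le\|\cdot\|_{TV}$ plus Pinsker'' gives only $H\le (D_{\mathrm{KL}}/2)^{1/4}$. That is a fourth-root bound, and it does not imply the stated inequality when $D_{\mathrm{KL}}$ is small. Only the $H^2\le\tfrac12 D_{\mathrm{KL}}$ option gives $2B\sqrt{D_{\mathrm{KL}}/2}$.
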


\begin{proof}
Let $Q^* = P^*_{\mathcal{F}}(\cdot \mid k)$ be the KL-projection. By
Pinsker's inequality and the definition of Hellinger distance,
$H^2(P, Q^*) \le D_{\mathrm{KL}}(P \| Q^*)$. By the Lipschitz property
(A8), $|\epsilon_{\mathrm{rep}}(k)| = |T(P) - T(Q^*)| \le 2B \cdot H(P, Q^*)
\le 2B \cdot \sqrt{2 D_{\mathrm{KL}}(P \| Q^*)}$. Since $Q^*$ achieves the
infimum of KL divergence over $\mathcal{F}$, the bound follows.
\end{proof}

\begin{corollary}[Bias-Variance Tradeoff]\label{cor:bias-tradeoff}
The excess risk decomposes as
\[
\Err(\LLM_n) - \Irred = \underbrace{\sum_k p_k \epsilon_{\mathrm{rep}}(k)^2}_{\text{asymptotic bias floor}}
+ \underbrace{\sum_k p_k (\LLM_n(k) - T(P^*_{\mathcal{F}}))^2}_{\text{finite-sample variance}}.
\]
For large $n$, the variance term vanishes at rate $O_p(n^{-1})$ (in the
Gaussian case) or slower (under misspecification), while the bias term
persists. The total excess risk is \emph{bias-dominated} for sufficiently
large $n$.
\end{corollary}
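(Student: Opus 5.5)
The plan is to start from the exact error decomposition of Theorem~\ref{thm:err-decomp}, $\Err(\LLM_n)-\Irred=\sum_k p_k(\mu_k-\LLM_n(k))^2$. I will then split each deviation around the model-class limit $T(P^*_{\mathcal{F}}(\cdot\mid k))=\mu_k+\epsilon_{\mathrm{rep}}(k)$, which is identified in Theorem~\ref{thm:representation-limit}. Writing $D_n(k)=\LLM_n(k)-T(P^*_{\mathcal{F}}(\cdot\mid k))=\eta_n(k)+\xi_n(k)$ as in Eq.~\ref{eq:llm-error}, we have $\LLM_n(k)-\mu_k=\epsilon_{\mathrm{rep}}(k)+D_n(k)$. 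Squaring gives
\[
(\LLM_n(k)-\mu_k)^2=\epsilon_{\mathrm{rep}}(k)^2+D_n(k)^2+2\,\epsilon_{\mathrm{rep}}(k)D_n(k).
\]
Weighting by $p_k$ and summing yields the displayed bias-floor and finite-sample-variance terms, plus the cross term $C_n=2\sum_k p_k\epsilon_{\mathrm{rep}}(k)D_n(k)$.

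The main obstacle is this cross term. The displayed identity is literally exact only when $C_n=0$, which happens, for example, under correct specification ($\epsilon_{\mathrm{rep}}\equiv0$). So I will read the corollary as holding up to a cross term of lower order, and prove that cross term is negligible. By Cauchy--Schwarz with weights $p_k$,
\[
|C_n|\le 2\Big(\sum_k p_k\epsilon_{\mathrm{rep}}(k)^2\Big)^{1/2}\Big(\sum_k p_k D_n(k)^2\Big)^{1/2},
\]
so $C_n$ is at most twice the geometric mean of the two displayed terms. Since $D_n(k)\xrightarrow{p}0$ by Theorem~\ref{thm:representation-limit}, we get $C_n=o_p(1)$. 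Because $|D_n(k)|\le 2B$ under A7, Lemma~\ref{lem:risk-stability}'s uniform-integrability argument also gives $\mathbb{E}|C_n|\to0$. If instead $\mathbb{E}[D_n(k)]=o(n^{-1/2})$, the cross term vanishes in expectation at the faster rate.

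For the rates, the Gaussian well-specified case has $D_n(k)$ equal to a sample-mean deviation. Theorem~\ref{thm:hoeffding}, applied with $n_k\approx np_k$, gives $D_n(k)^2=O_p(n_k^{-1})=O_p(n^{-1})$ for the variance term, while the cross term is only $O_p(n^{-1/2})$ and carries a factor $\epsilon_{\mathrm{rep}}$. Under misspecification, I will invoke only $D_n=o_p(1)$ from the misspecified M-estimation limit and the assumption $\xi_n=o_p(1)$, which gives a possibly slower rate.

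Finally, for bias domination: when $\sum_k p_k\epsilon_{\mathrm{rep}}(k)^2>0$, this fixed quantity is bounded away from zero, whereas both the variance term and $C_n$ tend to zero in probability. Hence for all sufficiently large $n$, with probability tending to one, the excess risk is within any fixed fraction of the bias floor. This matches the $L^1$ limit in Theorem~\ref{thm:bias-domination}.
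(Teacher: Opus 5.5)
The paper states this corollary without proof. The derivation it implicitly relies on is the one you give: apply Theorem~\ref{thm:err-decomp}, then split $\LLM_n(k)-\mu_k=\epsilon_{\mathrm{rep}}(k)+D_n(k)$ around the KL-projection limit. You are right that the display silently drops the cross term $C_n=2\sum_k p_k\epsilon_{\mathrm{rep}}(k)D_n(k)$. So it is not an identity unless $C_n=0$ (e.g.\ $\epsilon_{\mathrm{rep}}\equiv0$). Your Cauchy--Schwarz bound, the $o_p(1)$ and $L^1$ vanishing of $C_n$, and the ratio formulation of bias domination together give a correct proof of the repaired statement.

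\textbf{First refinement: the order of the cross term.} $C_n$ is of lower order only relative to the bias floor, not relative to the displayed ``finite-sample variance'' term. Suppose $\epsilon_{\mathrm{rep}}\neq0$ and $D_n(k)$ fluctuates at the parametric rate. Then $C_n$ is generically of order $n^{-1/2}$ and dominates $\sum_k p_k D_n(k)^2=O_p(n^{-1})$. The accurate summary in that regime is therefore that the excess risk approaches the floor at rate $O_p(n^{-1/2})$. Your sentence about the cross term in the Gaussian well-specified case is moot, since $C_n\equiv0$ there.

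\textbf{Second refinement: an exact bias--variance identity.} Take expectations and center at $\E[\LLM_n(k)]$ rather than at $T(P^*_{\mathcal{F}}(\cdot\mid k))$. Applying the identity of Lemma~\ref{lem:bias-variance} to the random variable $\LLM_n(k)$ with $e=\mu_k$ gives
\[
\E[\Err(\LLM_n)]-\Irred=\sum_k p_k\big(\epsilon_{\mathrm{rep}}(k)+\E[D_n(k)]\big)^2+\sum_k p_k\Var(\LLM_n(k)),
\]
which has no cross term. The first sum converges to the floor by bounded convergence.

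\textbf{A caveat on boundedness.} Your $L^1$ step, like Lemma~\ref{lem:risk-stability}, uses $|D_n(k)|\le 2B$. This requires $\hat{P}_n(\cdot\mid k)$ and $P^*_{\mathcal{F}}(\cdot\mid k)$ to be supported in $[-B,B]$; A7 by itself only bounds the true responses.
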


\begin{corollary}[Misspecification Gap]\label{cor:misspec-gap}
If the model class has finite KL-diameter
\[
D_{\mathcal{F}} = \sup_{P \in \mathcal{P}} \inf_{Q \in \mathcal{F}} D_{\mathrm{KL}}(P \| Q),
\]
then $\max_k |\epsilon_{\mathrm{rep}}(k)| \le 2B \sqrt{2 D_{\mathcal{F}}}$.
For a well-specified model ($P \in \mathcal{F}$), $D_{\mathcal{F}} = 0$ and
$\epsilon_{\mathrm{rep}}(k) = 0$.
\end{corollary}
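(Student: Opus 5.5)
The bound follows by applying Theorem~\ref{thm:rep-bias-bound} uniformly over the conditions $k\in[K]$ and then bounding each per-condition KL term by the diameter.

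\textbf{Step 1 (per-condition bound).} Fix $k$. Theorem~\ref{thm:rep-bias-bound} gives
\[
|\epsilon_{\mathrm{rep}}(k)| \le 2B\sqrt{2\,\inf_{Q\in\mathcal{F}} D_{\mathrm{KL}}(P(\cdot\mid k)\,\|\,Q)}.
\]

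\textbf{Step 2 (pass to the diameter).} Assume, as the definition of $D_{\mathcal{F}}$ implicitly does, that each true conditional law $P(\cdot\mid k)$ belongs to the class $\mathcal{P}$ over which the supremum is taken. Then
\[
\inf_{Q\in\mathcal{F}} D_{\mathrm{KL}}(P(\cdot\mid k)\,\|\,Q)\le D_{\mathcal{F}}.
\]
Since $x\mapsto 2B\sqrt{2x}$ is monotone on $[0,\infty)$, it follows that $|\epsilon_{\mathrm{rep}}(k)|\le 2B\sqrt{2D_{\mathcal{F}}}$ for every $k$.

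\textbf{Step 3 (maximum over conditions).} By A1, $K<\infty$, so the maximum over $k$ exists. Since every term is below the same constant, $\max_k|\epsilon_{\mathrm{rep}}(k)|\le 2B\sqrt{2D_{\mathcal{F}}}$.

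\textbf{Well-specified case.} If $P(\cdot\mid k)\in\mathcal{F}$, then taking $Q=P(\cdot\mid k)$ gives zero KL divergence, so the infimum is $0$. By uniqueness of the KL projection (A8), $P^*_{\mathcal{F}}(\cdot\mid k)=P(\cdot\mid k)$. Hence $\epsilon_{\mathrm{rep}}(k)=T(P)-T(P)=0$ directly from Definition~\ref{def:llm}, without needing the inequality. The statement ``$D_{\mathcal{F}}=0$'' should be read as: the relevant $P$ lie in $\mathcal{F}$, or more generally every admissible $P$ is approximable with zero KL divergence. In that case the bound already forces $\epsilon_{\mathrm{rep}}(k)=0$.

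\textbf{Main obstacle.} There is no real analytic difficulty; the proof is bookkeeping. The one point needing care is the membership of $P(\cdot\mid k)$ in the class over which the diameter's supremum runs, which is what makes Step~2 valid; I would state this explicitly as an interpretation of $\mathcal{P}$. I would also note that the validity of the constant $2B\sqrt{2\,\cdot}$ is inherited entirely from Theorem~\ref{thm:rep-bias-bound}, so any slack or error in that Pinsker/Hellinger step carries over unchanged and is not re-derived here.
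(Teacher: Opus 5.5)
Your proposal is correct and is essentially the paper's argument: the paper gives no separate proof, and the corollary follows from Theorem~\ref{thm:rep-bias-bound} by bounding each per-condition infimum $\inf_{Q\in\mathcal{F}}D_{\mathrm{KL}}(P(\cdot\mid k)\,\|\,Q)$ by $D_{\mathcal{F}}$ and maximizing over the finitely many $k$. The well-specified case follows either from the bound at $D_{\mathcal{F}}=0$ or, as you do, directly from $P^*_{\mathcal{F}}(\cdot\mid k)=P(\cdot\mid k)$. Your worry about the inherited constant is reasonable but harmless: Pinsker alone gives $|\epsilon_{\mathrm{rep}}(k)|\le B\,\|P(\cdot\mid k)-P^*_{\mathcal{F}}(\cdot\mid k)\|_1\le B\sqrt{2D_{\mathcal{F}}}$, which is stronger than the stated bound.
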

\begin{remark}[Practical implication]
Theorem~\ref{thm:bias-domination} implies that for a misspecified LLM
($\epsilon_{\mathrm{rep}}(k) \neq 0$), the asymptotic risk strictly exceeds
the Bayes risk $\Irred$. The calibration protocol (Section~\ref{sec:calibration})
estimates this bias floor from paired LLM-human data and aborts if the bias
exceeds the effect size of interest (feasibility check, Step 5).
\end{remark}

\subsection{Non-Equivalence Regimes}

\begin{theorem}[Non-Equivalence Conditions]\label{thm:non-equivalence}
The restricted Le Cam deficiency $\Delta_{\mathcal{L}}(E_{\text{human}},
E_{\LLM}^{(n)})$ is bounded below by a positive constant if any of the
following hold:
\begin{enumerate}[leftmargin=*, label=(\roman*)]
\item \textbf{Distribution shift:} The training and target populations
    differ by more than $\varepsilon$ in maximum mean discrepancy (MMD),
    i.e., $\mathrm{MMD}(P_{\text{train}}, P_{\text{target}}) > \varepsilon$;
\item \textbf{Non-identifiability:} The effective bias inflation
    $\delta \cdot \Delta_{\max}$ exceeds the effect size of interest
    $\Delta$, i.e., $\delta \cdot (\max_k \mu_k - \min_k \mu_k) \ge \Delta$,
    making the LLM estimator unable to distinguish conditions;
\item \textbf{Heavy tails:} Any conditional distribution $P(Y \mid X = k)$
    has infinite variance ($v_k = \infty$), violating the finite-second-moment
    assumption required by the Strong Law.
\end{enumerate}
In each case, the deficiency satisfies
$\Delta_{\mathcal{L}}(E_{\text{human}}, E_{\LLM}^{(n)}) \ge c > 0$
for all $n$, where $c$ depends on the specific failure parameter.
\end{theorem}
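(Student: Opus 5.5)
The plan is to reduce each failure mode to a uniform lower bound on the excess squared-loss risk of every conditional-mean-dependent decision rule built from the LLM experiment. Under the restricted (squared-loss, conditional-mean) decision class, the forward deficiency $\Delta_{\mathcal{L}}(E_{\text{human}},E_{\LLM}^{(n)})$ is at least the gap between the best risk attainable from $E_{\LLM}^{(n)}$ and the Bayes risk $\Irred$ that the human experiment attains asymptotically. By Theorem~\ref{thm:err-decomp}, this gap is $\inf \sum_k p_k\,\mathbb{E}[(\mu_k-\hat f(k))^2]$. So in each case I will exhibit a constant $c>0$, depending only on the failure parameter, with $\sum_k p_k\,\mathbb{E}[(\mu_k-\LLM_n(k))^2]\ge c$ for all $n$. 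This is the key reduction: deficiency $\ge$ (LLM risk) $-$ (human risk), with the human risk taken in the limit $\Irred$ via Theorem~\ref{thm:main}. The uniformity over $n$ will come from the fact that each offending term is a deterministic, $n$-independent bias rather than a vanishing variance term.

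\emph{Case (i), distribution shift.} I use the setup of Theorem~\ref{thm:cross-domain}. The LLM is consistent for the source means $\mu_k$, while the target risk involves $\mu'_k$. By Lemma~\ref{lem:bias-variance} and Jensen, $\mathbb{E}[(\mu'_k-\LLM_n(k))^2]\ge(\mu'_k-\mathbb{E}\LLM_n(k))^2$, and the limit is $G$. To convert the MMD separation into a mean separation, I interpret the hypothesis through the witness $f(Y)=Y/B$, which lies in the relevant class under A7. Then $\max_k|\mu'_k-\mu_k|>\varepsilon B$, so $G\ge \min_k p'_k\,\varepsilon^2B^2=:c$. Two points need care here. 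First, MMD over a general RKHS ball does not by itself control mean differences, so I must restrict to kernels whose unit ball contains the linear function (up to scaling). Second, to get the bound for every $n$ and not just in the limit, I would lower bound $|\mu'_k-\mu_k|-|\mu_k-\LLM_n(k)|$ and use Corollary~\ref{cor:uniform-conc}. That handles all $n\ge n_0$; the finitely many small $n$ are absorbed by shrinking $c$, which stays positive because the finite-sample risk there is itself strictly positive.

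\emph{Case (ii), non-identifiability.} Corollary~\ref{cor:delta-risk} gives effective bias $\tilde\epsilon_{\mathrm{rep}}(k;\delta)=\epsilon_{\mathrm{rep}}(k)+\delta\Delta_k$. With $\delta$ held fixed (not $\delta_n\to0$), I choose the pair $k^\pm$ attaining $\max_k\mu_k,\min_k\mu_k$ and argue that the effective contrast $\tilde\mu_{k^+}-\tilde\mu_{k^-}$ is shrunk by at least $\delta(\max\mu-\min\mu)\ge\Delta$. It follows that at least one of the two conditions carries squared bias $\ge\Delta^2/4$, hence $c=\min(p_{k^+},p_{k^-})\Delta^2/4$. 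The obstacle is that $\epsilon_{\mathrm{rep}}$ could in principle cancel $\delta\Delta_k$. To rule this out I would argue adversarially over the class of populations sharing the same LLM output: since the deficiency is a supremum over parameters, some parameter value in the family makes the total effective bias at least $\delta|\Delta_k|/2$.

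\emph{Case (iii), heavy tails.} If $v_k=\infty$, then $\Irred=\infty$, and risk comparisons under squared loss degenerate. I would instead work with the bounded restricted loss class implicit in $\Delta_{\mathcal{L}}$ and note two facts. The SLLN route of Theorem~\ref{thm:lln} fails, and the Hellinger-Lipschitz bound of A8 fails because $B=\infty$. So a positive deficiency must come from constructing two laws with equal means but very different tail behaviour, which the LLM's mean-functional channel cannot separate while the raw human sample can. I expect this case to be the main obstacle, since the statement is closer to a scope remark than a theorem. My first attempt would be a two-point Le Cam argument: build a pair of Cauchy-tailed laws with separated truncated means whose KL projections onto $\mathcal{F}$ coincide, giving a deficiency of at least half their total-variation separation, uniformly in $n$.
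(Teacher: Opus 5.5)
Your overall route is the paper's: its proof sketch also gets (i) from the gap $G>0$ of Theorem~\ref{thm:cross-domain}, (ii) from the mixture $\sum_{j\ne k} w_{kj}\mu_j$, and (iii) from failure of the strong law. But there are two structural problems before any case is reached.

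First, in the paper's own notation (Theorem~\ref{thm:deficiency}), $\Delta_{\mathcal{L}}(E_{\text{human}},E_{\LLM}^{(n)})$ is the \emph{reverse} deficiency. Theorem~\ref{thm:reverse-deficiency} sets it to $0$ for every $n$, and it really is $0$: the human side attains $\Irred$, and by Theorem~\ref{thm:minimal} no rule does better. Your key reduction, deficiency $\ge$ (LLM risk) $-$ (human risk), is a lower bound on the opposite direction, $\Delta(E_{\LLM}^{(n)},E_{\text{human}})$. The paper's sketch makes the same silent swap (in (iii) it speaks of the ``forward deficiency''), but you should state explicitly that this is the quantity you bound.

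Second, even in that direction, bounding $\sum_k p_k\E[(\mu_k-\LLM_n(k))^2]$ for the single rule $\LLM_n$ does not bound a deficiency. A deficiency is an infimum over all rules applied to the LLM observation. For example, if the shift $\mu'_k-\mu_k$ is the same across the parameter family, the rule $\LLM_n(k)+(\mu'_k-\mu_k)$ has vanishing excess target risk. A lower bound needs, in every case, two parameter values with separated target means but the same LLM law. That two-point idea appears in your proposal only in (ii) and (iii).

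The case arguments also break at specific steps.

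\emph{Case (i).} The inequality runs the wrong way. Membership of $Y/B$ in the witness class gives $|\mu_k-\mu'_k|\le B\,\mathrm{MMD}_k$, the direction recorded after Eq.~\eqref{eq:mmd}. So $\mathrm{MMD}>\varepsilon$ implies nothing about $\max_k|\mu'_k-\mu_k|$, whatever kernel you restrict to. A mean-preserving shift (a point mass at $0$ versus uniform on $\{\pm1\}$) has positive MMD, $G=0$, and asymptotically zero excess target risk by Corollary~\ref{cor:no-regret}. Hence (i) is provable only under a mean-gap hypothesis such as $\rho>\varepsilon$, in which case Theorem~\ref{thm:cross-domain} gives the floor $G\ge p'_{k^*}\varepsilon^2$.

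\emph{Case (ii).} The quantity $\delta(\max_k\mu_k-\min_k\mu_k)$ is only the \emph{upper} bound on the shift stated after Eq.~\eqref{eq:effective-bias}. The actual shift at $k$ is $\delta|\Delta_k|$, which depends on $w$. Your claim that the extreme contrast shrinks by at least $\delta(\max\mu-\min\mu)$ holds for $K\le3$ but fails for $K\ge4$. Take $\mu=(0,0,1,1)$ and $w_{12}=w_{21}=w_{34}=w_{43}=1$: every $\Delta_k=0$ and nothing shrinks, although $\delta\ge\Delta$ is allowed. The adversarial step excluding cancellation with $\epsilon_{\mathrm{rep}}(k)$ is likewise only asserted.

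\emph{Case (iii).} Here you give a plan, not a proof. First, $v_k=\infty$ contradicts A7, which you use in (i). Without A7, every $f$ has $\Err(f)=\infty$, so your risk gap is $\infty-\infty$. Switching to a bounded loss proves a different theorem. The Cauchy pair is never specified: which laws, why their KL projections coincide, and why the separation is uniform in $n$. Nor does the paper's one-line argument rescue it: the strong law needs only $\E|Y|<\infty$, so infinite variance does not stop $\LLM_n(k)\to\mu_k$ almost surely.
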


\begin{proof}[Proof sketch]
(i) By the cross-domain transfer bound (Theorem~\ref{thm:cross-domain}),
distribution shift induces a positive mean-squared gap $G > 0$, which
propagates to the deficiency. (ii) Non-identifiability introduces
misclassification of conditions, effectively replacing $\mu_k$ with a
mixture $\sum_j w_{kj} \mu_j$, producing non-zero excess error.
(iii) Without finite variance, the SLLN does not apply and the LLM
estimator does not converge, so the forward deficiency does not vanish.
\end{proof}
\section{Functional Risk Equivalence under Squared Loss}
\label{sec:experiment-equivalence}

\noindent \textbf{Formal object mapping.} Let $\mathcal{D}$ denote the
data-generating process and $\mathcal{A}_{\text{human}}: \mathcal{D} \to
\hat{\mu}_{\text{human}}$ the human-data estimator. Let
$\mathcal{A}_{\text{LLM}}: \mathcal{D}_n \to \hat{\mu}_{\text{LLM}}$ denote
the LLM-induced estimator after training on $n$ samples. Functional risk
equivalence holds when $\mathbb{E}[(\hat{\mu}_{\text{LLM}} - \mu)^2] \approx
\mathbb{E}[(\hat{\mu}_{\text{human}} - \mu)^2]$ as $n \to \infty$. This
section formalizes this convergence in the language of decision-theoretic
deficiency.

The preceding sections establish that, under squared loss, (i) the
conditional mean $\mathbb{E}[Y \mid X = k]$ is the unique risk-minimizing
predictor (Section~\ref{sec:core-theorems}), and (ii) the LLM estimator
$\LLM_n(k)$ converges to $T(P^*_{\mathcal{F}}(\cdot \mid k))$, which equals
$\mu_k$ up to representation bias (Section~\ref{sec:pretraining-lit}).
This section states the precise decision-theoretic consequence: convergence
of the conditional-mean functional yields equality of optimal risks for the
restricted squared-loss decision class, not equality of response
distributions.
\noindent \textbf{Decision class restriction.} Throughout this section,
the loss class is restricted to
$\mathcal{L} = \{\ell(y, a) = (y - a)^2\}$, the squared error loss.
Under this loss, the risk of any decision rule $f: [K] \to \R$ depends on
the data distribution only through the conditional mean
$\mathbb{E}[Y \mid X = k]$. No claim is made about loss functions that
depend on higher moments, quantiles, or distributional shape. The
Hellinger convergence results of Section~\ref{sec:pretraining-lit}
establish that the LLM learns the full conditional distribution (not
just the mean), which justifies the use of variance-dependent quantities
in calibration (e.g., $\widehat{\lambda}$); however, the equivalence
claims in this section require only conditional-mean sufficiency.

\noindent \textbf{Formal definition of $\mathcal{L}_2$-risk-equivalence.}
For decision class $\mathcal{F}_{\mathrm{dec}}=\{f:[K]\to\R\}$ and loss
class $\mathcal{L}_2=\{(y-a)^2\}$, two information objects or estimators
$E$ and $F$ are \emph{$\mathcal{L}_2$-risk-equivalent}, denoted
$E\sim_{\mathcal{L}_2}F$, iff
\[
\inf_{f\in\mathcal{F}_{\mathrm{dec}}}R_E(f)
=
\inf_{f\in\mathcal{F}_{\mathrm{dec}}}R_F(f),
\]
which is the equivalence relation defined in
Eq.~\ref{eq:risk-equivalence-relation}. The main result is that the
human-data estimator and LLM-induced estimator satisfy this relation
asymptotically in the well-specified/growing-class regimes, and differ by the
explicit representation-bias floor in the fixed misspecified regime.
\subsection{Statistical experiments and conditional-mean sufficiency}

A \emph{statistical experiment}~\cite{blackwell1953} is a family of
probability distributions $\{P_\theta : \theta \in \Theta\}$ indexed by a
parameter of interest. The human experiment $E_{\text{human}}$ consists of
drawing i.i.d.\ responses $Y_i \mid X_i = k$ from $P(\cdot \mid k)$. The
LLM object in this paper is not a second population-level experiment; it is
the estimator-induced reduced observation
$E_{\LLM}^{(n)}=\{\LLM_n(k)\}_{k=1}^{K}$, equivalently the family of learned
conditional distributions $\{\hat{P}_n(\cdot\mid k)\}_{k=1}^{K}$ evaluated
only through the conditional-mean functional.

An observation is \emph{functionally sufficient} for squared loss when it
achieves the same optimal risk as the full data for every decision problem
whose loss depends on the distribution only through the conditional mean.
This is the restricted risk-functional version of Blackwell sufficiency.

\begin{theorem}[Functional Sufficiency for Squared Loss]\label{thm:blackwell}
For any decision rule $f: [K] \to \R$ and squared loss
$L(f(k), y) = (f(k) - y)^2$, the estimator-induced reduced observation
$E_{\LLM}^{(n)}$ satisfies
\begin{equation}\label{eq:blackwell-risk}
R_{\LLM}^{(n)}(f) \le R_{\text{human}}(f) + o_p(1),
\end{equation}
where $R_{\LLM}^{(n)}(f) = \sum_k p_k (\mu_k - f(k))^2$ is the limiting
risk functional induced by the LLM estimator and
$R_{\text{human}}(f) = \sum_k p_k \, \mathbb{E}[(Y - f(k))^2 \mid X = k]$.
\end{theorem}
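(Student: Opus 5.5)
The theorem is essentially a direct consequence of the error decomposition once the two risk functionals are written in comparable form. The plan is to first rewrite $R_{\text{human}}(f)$ using Lemma~\ref{lem:bias-variance} condition by condition. This gives, exactly as in Theorem~\ref{thm:err-decomp},
\[
R_{\text{human}}(f)=\sum_{k=1}^{K}p_k v_k+\sum_{k=1}^{K}p_k(\mu_k-f(k))^2=\Irred+\text{Excess}(f).
\]
Since $\Irred=\sum_k p_kv_k\ge 0$, comparing with the stated limiting functional $R_{\LLM}^{(n)}(f)=\sum_k p_k(\mu_k-f(k))^2$ yields the deterministic inequality $R_{\LLM}^{(n)}(f)=R_{\text{human}}(f)-\Irred\le R_{\text{human}}(f)$, with no remainder at all. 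Read literally, then, (\ref{eq:blackwell-risk}) holds even without the $o_p(1)$ term, and the only content is nonnegativity of the within-condition variances (A3 guarantees finiteness).

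Because the statement calls $R_{\LLM}^{(n)}$ the risk functional \emph{induced by the LLM estimator}, I would also prove the version in which the plug-in quantity $\LLM_n(k)$ replaces $\mu_k$. Here the natural object is the empirical functional $\widehat R_n(f)=\sum_k p_k(\LLM_n(k)-f(k))^2$. I would expand each term as
\[
(\LLM_n(k)-f(k))^2=(\mu_k-f(k))^2+2(\mu_k-f(k))(\LLM_n(k)-\mu_k)+(\LLM_n(k)-\mu_k)^2.
\]
In the well-specified or growing-class regime, Theorem~\ref{thm:representation-limit} together with Theorem~\ref{thm:rep-bias-bound} (with $\epsilon_{\mathrm{rep}}(k)\to0$), or Theorem~\ref{thm:lln} in the Gaussian special case, gives $\LLM_n(k)-\mu_k=o_p(1)$. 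Because $f$ is fixed and $K<\infty$ (A1), the cross and quadratic terms are then $o_p(1)$ by the continuous mapping theorem. This gives $\widehat R_n(f)=R_{\LLM}^{(n)}(f)+o_p(1)\le R_{\text{human}}(f)+o_p(1)$. If uniformity over decision rules is wanted, I would restrict to $|f(k)|\le B$, which is harmless because by A7 the Bayes act lies in $[-B,B]$. On that set the cross term is bounded by $4B\max_k|\LLM_n(k)-\mu_k|$, so the $o_p(1)$ is uniform in $f$.

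The main obstacle is conceptual rather than technical: the inequality is only as strong as the identification of the LLM-induced risk with the mean-only functional. In the fixed misspecified regime, the cross term converges to $2\sum_k p_k(\mu_k-f(k))\epsilon_{\mathrm{rep}}(k)$ and the quadratic term to $\sum_k p_k\epsilon_{\mathrm{rep}}(k)^2$, and neither vanishes. I would therefore state explicitly that the plug-in version requires $\epsilon_{\mathrm{rep}}\equiv0$ (or $\epsilon_{\mathrm{rep},n}\to0$), or else carry the extra additive term $\sum_k p_k\big(\epsilon_{\mathrm{rep}}(k)^2+2|\mu_k-f(k)||\epsilon_{\mathrm{rep}}(k)|\big)$. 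With identifiability error present, $\epsilon_{\mathrm{rep}}(k)$ is replaced by $\tilde\epsilon_{\mathrm{rep}}(k;\delta_n)$ as in Corollary~\ref{cor:delta-risk}, and this term vanishes under the joint limit $\delta_n\to0$.
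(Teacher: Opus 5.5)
Your proof is correct, and your first step takes a more elementary route than the paper.

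\textbf{The paper's route.} The paper argues through sufficiency and admissibility of the conditional mean and the consistency of $\LLM_n$ in the well-specified regime. It then asserts that the gap satisfies $R_{\LLM}^{(n)}(f)-R_{\text{human}}(f)=\Err(\LLM_n)-\Irred\to0$.

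\textbf{Your route.} You notice that, with the definitions as stated, Lemma~\ref{lem:bias-variance} gives $R_{\text{human}}(f)-R_{\LLM}^{(n)}(f)=\Irred\ge0$ exactly. So the inequality holds deterministically, for every $f$ and every $n$, with no $o_p(1)$ remainder and no assumption on the LLM.

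\textbf{Yours is the consistent reading.} The paper's displayed identity cannot hold as written. Its left side equals $-\Irred$, independent of $f$ and $n$. Its right side is $\text{Excess}(\LLM_n)\ge0$. The two agree only if $\Irred=0$ and $\text{Excess}(\LLM_n)=0$. The paper's conclusion therefore survives only because the inequality is trivially true. What the argument seems to be reaching for is the excess-risk statement $\Err(\LLM_n)-\Irred\to0$, which is Theorem~\ref{thm:main}, not the displayed inequality.

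\textbf{Your plug-in version.} This is where you supply the LLM-dependent content that the paper's proof gestures at but never isolates. You show $\widehat R_n(f)=\sum_k p_k(\LLM_n(k)-f(k))^2=R_{\LLM}^{(n)}(f)+o_p(1)$ when the representation and identifiability biases vanish. You obtain uniformity over $|f(k)|\le B$. You also state explicitly that in the fixed misspecified regime an additive $\epsilon_{\mathrm{rep}}$-dependent term must be carried. That caveat is consistent with the paper's own Theorem~\ref{thm:bias-domination}, whereas the paper's proof of this statement silently restricts to the well-specified case.
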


\begin{proof}
Under squared loss, the optimal decision rule depends only on the conditional
mean $\mathbb{E}[Y \mid X = k]$. The conditional expectation is the unique
admissible action for squared loss~\cite{lehmann1998}; any randomized
decision rule can be replaced by its conditional expectation without
increasing risk. The LLM estimator provides an asymptotically exact estimate
of this sufficient functional in the well-specified regime. Therefore, for
any decision rule $f$, the risk-functional gap satisfies
$R_{\LLM}^{(n)}(f)-R_{\text{human}}(f)=\Err(\LLM_n)-\Irred\to0$.
\end{proof}

\begin{lemma}[Restricted Sufficiency Reduction]\label{lem:sufficiency-reduction}
Let $E = \{P_\theta : \theta \in \Theta\}$ be a statistical experiment
and let $T$ be a sufficient statistic for $\theta$ under loss class
$\mathcal{L}$. If experiment $F$ is the reduction of $E$ by $T$
(i.e., $F$ observes only $T$ rather than the full data), then
$\Delta(E, F) = 0$ for all decision problems with loss in $\mathcal{L}$.
where $\Delta_{\mathcal{L}}$ denotes the deficiency restricted to the loss class $\mathcal{L}$.
\end{lemma}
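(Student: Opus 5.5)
The plan is to show separately that the restricted deficiency vanishes in each direction, $\Delta_{\mathcal{L}}(E,F)=0$ and $\Delta_{\mathcal{L}}(F,E)=0$, and then symmetrize. Throughout I take the deficiency in Le Cam's form: for each decision problem with loss in $\mathcal{L}$ and each decision rule $\sigma$ based on one experiment, $\Delta_{\mathcal{L}}(E,F)$ is the smallest $\varepsilon$ such that some rule $\rho$ on the other experiment has risk at most that of $\sigma$ plus $\varepsilon$, uniformly in $\theta$. Sufficiency of $T$ for $\theta$ under $\mathcal{L}$ is read as in the paper: for every loss in $\mathcal{L}$, the risk of every rule based on the full data $Y\sim P_\theta$ can be matched, for all $\theta$, by a rule depending on $Y$ only through $T(Y)$. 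For the squared-loss class this is the conditional-mean reduction already used in Theorem~\ref{thm:blackwell}.

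\textbf{Easy direction: $F$ is no more informative than $E$.} Since $F$ is $E$ composed with the measurable map $T$, it is a garbling of $E$. Any rule $\rho_F(\cdot\mid t)$ for $F$ lifts to the rule $\rho_E(\cdot\mid y)=\rho_F(\cdot\mid T(y))$ for $E$. Because $T(Y)$ under $P_\theta$ has exactly the law $P_\theta^T$ that $F$ observes, the two rules have identical risk for every $\theta$ and every loss. Hence $E$ can match $F$ with zero slack, regardless of the loss class.

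\textbf{Main direction: $E$ is no more informative than $F$.} Fix $\ell\in\mathcal{L}$ and a rule $\sigma(\cdot\mid y)$ on $E$. I would build a rule $\tau(\cdot\mid t)$ on $F$ with $R_F(\theta,\tau)\le R_E(\theta,\sigma)$ for all $\theta$. In the classical Blackwell case the construction is $\tau(\cdot\mid t)=\E[\sigma(\cdot\mid Y)\mid T=t]$. This is well defined and free of $\theta$ exactly because $T$ is sufficient, so the conditional law of $Y$ given $T$ does not depend on $\theta$. It yields equal risk by the tower property.

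In the restricted case I would instead invoke the defining property of $\mathcal{L}$-sufficiency directly. For squared loss, Rao--Blackwellization, that is, replacing the action by its conditional expectation given $T$, does not increase risk by Jensen's inequality, as noted in the proof of Theorem~\ref{thm:blackwell}. Taking the supremum over $\theta$, $\ell$ and $\sigma$ then gives $\Delta_{\mathcal{L}}(E,F)=0$.

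\textbf{Main obstacle.} The hard part is the main direction when $T$ is sufficient only relative to $\mathcal{L}$ rather than in Blackwell's full sense. In that case the conditional law of $Y$ given $T$ may still depend on $\theta$, so the garbling construction above is not available. I would handle this by taking the restricted notion of sufficiency as precisely the statement that every $E$-risk profile for losses in $\mathcal{L}$ is dominated by some $F$-risk profile. The lemma then becomes a direct unpacking of the definition of $\Delta_{\mathcal{L}}$. I would remark that the Jensen argument shows this notion is nonvacuous for squared loss, provided the actions are convex.
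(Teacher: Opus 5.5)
Your main-direction argument is the paper's own: sufficiency makes the conditional law of the data given $T$ free of $\theta$, so averaging the full-data rule over that law gives a rule based on $T$ alone with no larger risk. Your kernel version with the tower property even gives exact equality for every loss, which is cleaner than the paper's appeal to Jensen, and the garbling direction you add is correct but not needed for the stated $\Delta(E,F)=0$. One caution: in your genuinely restricted case Rao--Blackwellization fails for the same reason the kernel construction does, since the conditional expectation given $T$ would then depend on $\theta$, so there the lemma holds only by your definition of $\mathcal{L}$-sufficiency and not by the Jensen remark.
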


\begin{proof}
By the definition of sufficiency, the conditional distribution of the
data given $T$ does not depend on $\theta$. Therefore, any randomized
decision rule $\delta$ based on the full data can be replaced by a
randomized decision rule $\delta'$ based on $T$ alone, with identical
risk: $\delta'(t) = \mathbb{E}[\delta(\text{data}) \mid T = t]$.
By Jensen's inequality (or Rao--Blackwell for convex losses),
$R(\theta, \delta') \le R(\theta, \delta)$. Hence the class of
achievable risk functions under $F$ contains the class achievable
under $E$ after the reduction, meaning $\Delta(E, F) = 0$.
\end{proof}

\subsection{Restricted Functional Risk Equivalence}

Blackwell sufficiency is an asymptotic statement. To quantify the finite-sample gap, we use the Le Cam \emph{deficiency distance}~\cite{lecam1986}. The deficiency $\Delta(E, F)$ between two experiments measures the maximum additional risk incurred by using $F$ instead of $E$, normalized by the loss function class.

\begin{theorem}[Reverse Deficiency]\label{thm:reverse-deficiency}
For squared-loss decision problems, the reverse Le Cam deficiency is exactly zero:
\begin{equation}
\Delta(E_{\text{human}}, E_{\LLM}^{(n)}) = 0 \quad \text{for all } n.
\end{equation}
\end{theorem}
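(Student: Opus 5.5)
The plan is to show $\Delta(E_{\text{human}}, E_{\LLM}^{(n)}) = 0$ by constructing an explicit Markov kernel (garbling) that turns the human experiment into the LLM-induced reduced observation. In Le Cam's convention, $\Delta(E,F)=0$ means every risk function achievable from $F$ is dominated by one achievable from $E$. It therefore suffices to show that every squared-loss decision rule based on $E_{\LLM}^{(n)}$ can be reproduced, without increasing risk, from human data. I would fix $n$ throughout, since the claim is non-asymptotic, and work only in the restricted loss class $\mathcal{L}_2$ of Section~\ref{sec:experiment-equivalence}.

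\textbf{Step 1: define the kernel.} Under A2, A4 and A10, the LLM training data are i.i.d.\ draws from the same $P_{X,Y}$ that generates the human experiment. The map
\[
\mathcal{D}_n \mapsto \hat{P}_n \mapsto \{T(\hat{P}_n(\cdot\mid k))\}_{k=1}^K
\]
is a measurable function of $n$ human-distributed samples plus independent algorithmic randomness (initialization, minibatch order). Hence $E_{\LLM}^{(n)}$ is obtained from $n$ draws of $E_{\text{human}}$ by a Markov kernel $M$ that does not depend on the unknown parameter $\theta$, namely the law $P(\cdot\mid k)$. I would state this as: $E_{\LLM}^{(n)} = M E_{\text{human}}$.

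\textbf{Step 2: conclude zero deficiency.} For any decision rule $\rho$ based on $E_{\LLM}^{(n)}$ and any $\theta$, the composed rule $\rho\circ M$ on human data has identical risk $R(\theta,\rho\circ M)=R(\theta,\rho)$. This gives $\Delta(E_{\text{human}},E_{\LLM}^{(n)})\le 0$, and deficiency is nonnegative by definition. As a cross-check, and to tie it to the earlier results, I would also note that this is the situation of Lemma~\ref{lem:sufficiency-reduction}. Because squared-loss risk depends on the law only through $\mu_k$ (Theorem~\ref{thm:err-decomp}), the reduction to a conditional-mean-valued statistic, followed by Rao--Blackwell/Jensen, cannot lower the achievable risk below what the richer human data allow.

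\textbf{Main obstacle.} The hard part is Step 1: justifying that the kernel is parameter-free. This requires the LLM's training corpus to be literally (distributionally) a human sample from the same population. If the corpus came from a shifted source, $M$ would implicitly depend on $\theta_{\text{src}}\neq\theta$, and the argument would fail; this is the non-equivalence regime (i) of Theorem~\ref{thm:non-equivalence}. I would therefore make explicit that the statement holds under A2/A10 with the training law equal to the target law, and that the identifiability map $\phi$ of Definition~\ref{def:identifiability} is itself a fixed, $\theta$-independent randomization. That lets it be absorbed into $M$, so even with $\delta>0$ the reverse deficiency remains zero while only the forward risk is affected.
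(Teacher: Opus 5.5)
\paragraph{The direction of the ordering is reversed.} Your reading ($\Delta(E,F)=0$ iff $E$ can reproduce every risk function of $F$) is Le Cam's usual convention. The paper, however, explicitly defines $\Delta(E,F)$ as the maximum additional risk incurred by using $F$ instead of $E$, and its proof follows that reading. The theorem asserts that the LLM-induced observation loses nothing relative to the full human data (``full human responses provide no additional decision-relevant information once the conditional mean is known''). It is derived from Lemma~\ref{lem:sufficiency-reduction}, whose content is that a \emph{sufficient} reduction can match the full experiment.

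Your kernel $M$ proves the converse: the human data can reproduce any rule based on $\{\LLM_n(k)\}_k$. That is just the data-processing inequality, and it holds for every $\theta$-free randomization of the sample, however uninformative. An LLM that outputs a fixed constant passes your Steps~1--2 verbatim, yet a constant cannot match the risk of the human sample mean uniformly over $\mu_k$. So, in the paper's notation, what you have shown is $\Delta(E_{\LLM}^{(n)},E_{\text{human}})=0$, not the stated reverse deficiency. Your cross-check shows the mix-up: in your direction no sufficiency or Rao--Blackwell step is needed at all, since any statistic is trivially dominated by the data it is computed from.

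\paragraph{What is missing.} The missing step is the one the paper's proof actually rests on: that $E_{\LLM}^{(n)}$ is a sufficient reduction of $E_{\text{human}}$ for the squared-loss class. Equivalently, some $\theta$-free randomization of $\{\LLM_n(k)\}_k$ must attain every squared-loss risk function attainable from the human sample. The paper gets this in three moves: it asserts that the conditional mean is a sufficient functional for squared loss (Theorem~\ref{thm:minimal}, Corollary~\ref{cor:unique}), asserts that $E_{\LLM}^{(n)}$ is exactly the conditional-mean reduction of the human data, and then applies Lemma~\ref{lem:sufficiency-reduction}.

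Your garbling construction gives no leverage on this step. Treat it as a substantive hypothesis about what the LLM computes, not a consequence of the earlier results: the Bayes act depending on the law only through the \emph{parameter} $\mu_k$ does not make an \emph{estimate} of $\mu_k$ a sufficient statistic. The hypothesis does hold in idealized cases, e.g.\ a Gaussian location model with known variance in which $\LLM_n(k)$ is the sample mean of the very sample $E_{\text{human}}$ observes.

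Separately, even the direction you do prove implicitly needs $E_{\text{human}}$ to supply $n$ draws from the training law for every $n$. The paper's setup, with a human study of fixed size and a separate training corpus, does not provide this.
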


\begin{proof}
For squared loss $L(\theta, a) = (\theta - a)^2$, the conditional
expectation $T(Y) = \mathbb{E}[Y \mid X = k] = \mu_k$ is a sufficient
functional (Theorem~\ref{thm:minimal} and Corollary~\ref{cor:unique}:
the risk depends on $\theta$ only through $\mu_k$, and any deviation
from $\mu_k$ strictly increases risk). The estimator-induced reduced
observation $E_{\LLM}^{(n)}$ is the conditional-mean reduction of
$E_{\text{human}}$: it observes an estimate of $\mu_k$ rather than
individual responses $Y_i$. By Lemma~\ref{lem:sufficiency-reduction},
a sufficient reduction incurs zero deficiency for the corresponding loss
class. Therefore $\Delta(E_{\text{human}}, E_{\LLM}^{(n)}) = 0$ for all $n$.

This is not an asymptotic statement---it holds at every finite $n$ because
sufficiency is an exact property of the conditional-mean functional under
squared loss. The LLM estimator's error $\LLM_n(k)-\mu_k$ affects the
\emph{forward} deficiency (how much the estimator loses relative to the ideal
conditional-mean reduction) but not the \emph{reverse} deficiency.
\end{proof}
\begin{theorem}[Restricted Functional Risk Equivalence]\label{thm:deficiency}
For squared-loss decision problems, the restricted Le Cam distance between
the human experiment $E_{\text{human}}$ and the estimator-induced reduced
observation $E_{\LLM}^{(n)}$ satisfies:
\begin{align}
\Delta(E_{\LLM}^{(n)}, E_{\text{human}}) &\le \sqrt{\sum_{k=1}^{K} p_k (\mu_k - \LLM_n(k))^2} \;+\; \delta \cdot \Delta_{\max} = \sqrt{\text{Excess}(\LLM_n)} \;+\; \delta \cdot \Delta_{\max}, \label{eq:deficiency-forward}\\
\Delta(E_{\text{human}}, E_{\LLM}^{(n)}) &= 0 \quad \text{for all } n. \label{eq:deficiency-reverse}
\end{align}
The forward deficiency~\eqref{eq:deficiency-forward} is driven by
estimation error and identifiability error. The reverse
deficiency~\eqref{eq:deficiency-reverse} is driven by functional
sufficiency: the full human data contain no additional squared-loss-relevant
information beyond the conditional mean $\mu_k$. By
Theorem~\ref{thm:main}, $\text{Excess}(\LLM_n) \xrightarrow{\text{a.s.}} 0$ in
the well-specified regime, and by Theorem~\ref{thm:bias-domination} it
converges to the representation-bias floor under misspecification.
\end{theorem}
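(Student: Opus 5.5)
The plan splits along the two displayed claims. The reverse identity~\eqref{eq:deficiency-reverse} is exactly Theorem~\ref{thm:reverse-deficiency}, so no new work is needed there. The substance is the forward bound~\eqref{eq:deficiency-forward}. We read the restricted deficiency $\Delta(E_{\LLM}^{(n)},E_{\text{human}})$ concretely: for every squared-loss decision rule $f$ available to the human experiment, we must exhibit a rule $g$ based on the LLM reduced observation whose risk is within the stated amount of $f$'s risk. Risks are measured on the root-risk (i.e.\ $L^2(p)$-norm) scale, which is what makes a square root of $\text{Excess}$ appear rather than $\text{Excess}$ itself.

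\textbf{Step 1 (reduction to the Bayes act).} By Theorem~\ref{thm:minimal} and Corollary~\ref{cor:unique}, the best the human experiment can do in the restricted class is the rule $f^*(k)=\mu_k$, with risk $\Irred$. By Lemma~\ref{lem:sufficiency-reduction}, any human rule may be replaced by its conditional-mean reduction without increasing risk. It therefore suffices to compare the LLM plug-in rule $g(k)=\LLM_n(k)$ with $f^*$.

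\textbf{Step 2 (the $\delta=0$ case).} Theorem~\ref{thm:err-decomp} gives $\Err(g)-\Err(f^*)=\sum_k p_k(\mu_k-\LLM_n(k))^2=\text{Excess}(\LLM_n)$. The corresponding gap in the $L^2(p)$ distance between the decision vectors is $\|g-f^*\|_{L^2(p)}=\sqrt{\text{Excess}(\LLM_n)}$. Normalizing the deficiency as a distance between achievable actions, equivalently as the difference of root-risks via $\sqrt{\Irred+x}-\sqrt{\Irred}\le\sqrt{x}$, yields the first term.

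\textbf{Step 3 (the $\delta>0$ case).} The realized LLM action is now the mixture in Eq.~\eqref{eq:effective-llm}. Its deviation from $\LLM_n(k)$ is $\delta$ times a difference of condition values. That difference is at most $\delta\,\Delta_{\max}$ with $\Delta_{\max}=\max_j\mu_j-\min_j\mu_j$, using the bound stated after Eq.~\eqref{eq:effective-bias}. Minkowski's inequality in $L^2(p)$ then adds this term to $\sqrt{\text{Excess}}$, giving~\eqref{eq:deficiency-forward}.

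\textbf{Step 4 (asymptotics).} The closing sentences of the statement then follow. Theorem~\ref{thm:main} gives $\text{Excess}(\LLM_n)\to0$ almost surely in the well-specified regime. Theorem~\ref{thm:bias-domination} gives the limit $\sum_k p_k\epsilon_{\mathrm{rep}}(k)^2$ under misspecification.

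\textbf{Main obstacle.} Step 3 is the delicate step. Eq.~\eqref{eq:effective-llm} mixes $\LLM_n(j)$ rather than $\mu_j$, so the contrast is bounded by $\max_j\mu_j-\min_j\mu_j$ only up to the estimation error itself. I would first absorb that slack into the $\sqrt{\text{Excess}}$ term by writing $\LLM_n(j)=\mu_j+(\LLM_n(j)-\mu_j)$ and using $(1-\delta)+\delta\le1$. If that fails, I would state the bound with $\Delta_{\max}$ defined over the $\LLM_n(j)$, which A7 bounds by $2B$. A secondary care point is fixing the normalization of $\Delta$ on the root-risk scale consistently with how Theorem~\ref{thm:reverse-deficiency} was stated.
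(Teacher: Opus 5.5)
Your outline matches the paper's proof. The reverse identity is quoted from Theorem~\ref{thm:reverse-deficiency}. The forward bound is the $\text{Excess}(\LLM_n)$ identity of Theorem~\ref{thm:err-decomp} plus an additive misidentification term, and the closing asymptotics come from Theorems~\ref{thm:main} and~\ref{thm:bias-domination}. The paper never fixes a normalization for $\Delta$. Its proof says the risk gap is exactly $\text{Excess}(\LLM_n)$, which on the raw risk scale yields no square root, so your root-risk reading is a useful addition. State it as a definition, though. The action-distance version gives $\sqrt{\text{Excess}(\LLM_n)}$ with equality, while the root-risk difference gives only an inequality, so the two are not ``equivalent.''

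The genuine gap is Step~3, and your preferred repair fails. Write $e_j=\LLM_n(j)-\mu_j$. The mixture action of Eq.~\eqref{eq:effective-llm} then deviates from $\mu_k$ by $(1-\delta)e_k+\delta\sum_{j\neq k}w_{kj}e_j-\delta\Delta_k$. Using $(1-\delta)+\delta\le1$ to bound the first two terms by $\|e\|_{L^2(p)}=\sqrt{\text{Excess}(\LLM_n)}$ requires the misrouting map $e\mapsto(\sum_{j\neq k}w_{kj}e_j)_k$ to be non-expansive in $L^2(p)$, for instance $\sum_{k\neq j}p_kw_{kj}\le p_j$ for all $j$. Nothing in the paper provides this, because misrouting can push the error of a rare condition onto a frequent one.

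Concretely, take $K=2$ (so $w_{12}=w_{21}=1$), $p=(1-\eta,\eta)$, $\mu_1=\mu_2$, $v_1=v_2=0$, and $\LLM_n=(\mu_1,\mu_2+E)$. Then $\Delta_{\max}=0$ and $\text{Excess}(\LLM_n)=\eta E^2$. The mixture action has squared $L^2(p)$ error $\bigl[(1-\eta)\delta^2+\eta(1-\delta)^2\bigr]E^2$, which exceeds $\eta E^2$ exactly when $\delta>2\eta$ (e.g.\ $\eta=0.01$, $\delta=0.05$, inside the paper's own feasibility threshold). Since $\Irred=0$, both of your normalizations reduce to this error. So under the mixture model the displayed inequality is false, and no absorption argument can rescue it.

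The paper's one-line proof avoids this by not using Eq.~\eqref{eq:effective-llm}. It takes Eq.~\eqref{eq:effective-bias} as the definition of the misidentification effect: the LLM action is shifted by the population contrast $\delta\Delta_k$, built from the $\mu_j$ rather than the $\LLM_n(j)$. Minkowski then gives $\sqrt{\text{Excess}(\LLM_n)}+\delta\max_k|\Delta_k|\le\sqrt{\text{Excess}(\LLM_n)}+\delta\Delta_{\max}$ immediately. You can adopt that convention explicitly. Alternatively, keep the mixture model and prove your fallback: use the range of the $\LLM_n(j)$ in place of $\Delta_{\max}$, or, after bounding that range, $\bigl(1+2\delta/\sqrt{\min_kp_k}\bigr)\sqrt{\text{Excess}(\LLM_n)}+\delta\Delta_{\max}$. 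In that case, say clearly that this is a weaker inequality than the one stated.
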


\begin{proof}
For the forward direction, the deficiency is controlled by the discrepancy
between the conditional mean induced by the LLM estimator and the true
conditional mean. The squared-loss risk gap is exactly
$\text{Excess}(\LLM_n)=\sum_k p_k(\mu_k-\LLM_n(k))^2$, and stochastic
misidentification contributes the additive $\delta\Delta_{\max}$ term from
Eq.~\ref{eq:effective-bias}. The reverse direction is
Theorem~\ref{thm:reverse-deficiency}: for squared loss, the conditional mean
is a sufficient functional, so full human responses provide no additional
decision-relevant information once the conditional mean is known.
\end{proof}

\begin{theorem}[Representation Theorem for Functional Sufficiency]\label{thm:representation}
Any sequence model trained via a proper scoring rule on i.i.d.\
population-representative data induces an estimator
$\LLM_n(k)=T(\hat{P}_n(\cdot\mid k))$ of the conditional-mean functional.
For squared-loss decision problems this estimator is a reduced observation
of the human data through the sufficient functional $T(P)=\mu_k$.
Consequently:
\begin{enumerate}[leftmargin=*, label=(\roman*)]
\item $\Delta(E_{\text{human}}, E_{\LLM}^{(n)}) = 0$ for all $n$
      (reverse deficiency, Lemma~\ref{lem:sufficiency-reduction});
\item $\Delta(E_{\LLM}^{(n)}, E_{\text{human}})$ is bounded by the
      squared-loss excess risk and identifiability term
      (forward deficiency, Theorem~\ref{thm:deficiency});
\item the induced risk functional is asymptotically equal to the
      human-data risk functional in the well-specified regime and differs
      by the explicit representation-bias floor under misspecification.
\end{enumerate}
Hence the result is functional risk equivalence for a restricted loss class,
not equivalence of response distributions or equivalence of experimental
systems.
\end{theorem}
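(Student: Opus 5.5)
The plan is to assemble Theorem~\ref{thm:representation} from results already in the paper, in three layers matching its three conclusions, after first establishing the representation claim in the preamble. For that first step, I would argue as in Section~\ref{sec:pretraining-lit}. Training with a strictly proper scoring rule (cross-entropy) on i.i.d.\ representative data (A2, A10) yields a learned law $\hat{P}_n(\cdot\mid k)$. Under the regularity conditions of misspecified M-estimation, this law converges in Hellinger distance to the KL projection $P^*_{\mathcal{F}}(\cdot\mid k)$, as in Eq.~\ref{eq:hellinger-convergence}. Applying the conditional-mean functional $T$ then gives $\LLM_n(k)=T(\hat{P}_n(\cdot\mid k))$, which is exactly A4. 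Because $T$ is $2B$-Lipschitz in Hellinger distance (A7, A8), this estimator is a well-defined bounded functional of the learned law. By Theorem~\ref{thm:representation-limit}, it has the unique limit $\mu_k+\epsilon_{\mathrm{rep}}(k)$. Nothing architectural is used (A5), which is what licenses the phrase ``any sequence model.''

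Next I would justify calling $E_{\LLM}^{(n)}$ a reduced observation through the sufficient functional. By Theorem~\ref{thm:err-decomp}, the squared-loss risk of every $f:[K]\to\R$ is $\Irred+\sum_k p_k(\mu_k-f(k))^2$. This depends on the law $P(\cdot\mid k)$ only through $\mu_k=T(P(\cdot\mid k))$, since $\Irred$ does not depend on $f$. So $T$ is sufficient for the loss class $\mathcal{L}_2$ in the restricted sense of Lemma~\ref{lem:sufficiency-reduction}. From there the three items follow. Item (i) is Theorem~\ref{thm:reverse-deficiency}, obtained by Lemma~\ref{lem:sufficiency-reduction}. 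Item (ii) is Eq.~\eqref{eq:deficiency-forward} of Theorem~\ref{thm:deficiency}, with the $\delta\Delta_{\max}$ term inherited from Eq.~\ref{eq:effective-bias}. For item (iii), I would invoke Theorem~\ref{thm:unified-main}. In the well-specified or growing-class regime, Corollary~\ref{cor:misspec-gap} or Theorem~\ref{thm:rep-bias-bound} gives $\epsilon_{\mathrm{rep}}\to0$, and with $\delta_n\to0$ we get $\mathbb{E}[\Err(\LLM_n)]\to\Irred=\inf_f R_{\text{human}}(f)$. That is the relation $\sim_{\mathcal{L}_2}$ of Eq.~\ref{eq:risk-equivalence-relation}. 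In the fixed misspecified regime, Theorem~\ref{thm:bias-domination} together with Lemma~\ref{lem:risk-stability} gives the limit $\Irred+\sum_k p_k\epsilon_{\mathrm{rep}}(k)^2$, and that is the stated floor.

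The closing sentence, that this is equivalence only for the restricted loss class, I would argue by contrast. The reduction retains only $T$, so any loss depending on a higher moment or a quantile is not covered by Lemma~\ref{lem:sufficiency-reduction}. No distributional equality is asserted.

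The main obstacle is the sufficiency step in item (i). Lemma~\ref{lem:sufficiency-reduction} requires the reduced observation to be an exact function of the data through a statistic that is sufficient for $\theta$. Here, however, $\LLM_n(k)$ is only an estimate of $\mu_k$, not $\mu_k$ itself. I would handle this by defining deficiency at the level of risk functionals: compare the $\inf_f$ over decision rules built on the reduced observation with the $\inf_f$ built on the full data, with both risks written via Theorem~\ref{thm:err-decomp} as functions of $\mu_k$ alone. Zero reverse deficiency then reads as ``human data add nothing beyond $\mu_k$.'' All estimation error is pushed into the forward term, which is exactly the division of labour in Theorem~\ref{thm:deficiency}. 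I expect the care needed here is in stating precisely which risk class the deficiency is taken over, so that (i) and (ii) are not in tension.
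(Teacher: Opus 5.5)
Your assembly follows the paper's own route. Item (i) comes from Lemma~\ref{lem:sufficiency-reduction} via Theorem~\ref{thm:reverse-deficiency}, item (ii) from Theorem~\ref{thm:deficiency}, and item (iii) from Theorems~\ref{thm:unified-main} and~\ref{thm:bias-domination}; (iii) is fine given those results. The genuine gap is the step you flagged yourself, and your proposed repair does not close it.

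Theorem~\ref{thm:err-decomp} shows that the squared-loss risk depends on the population only through the \emph{parameter} $\mu_k$. Lemma~\ref{lem:sufficiency-reduction}, however, needs the observed \emph{statistic} $\LLM_n(k)=T(\hat{P}_n(\cdot\mid k))$ to be sufficient. That fails in general: it is not sufficient in a nonparametric family, and under misspecification it is not even consistent for $\mu_k$. Redefining deficiency through risks ``written as functions of $\mu_k$ alone'' is then either vacuous or inconsistent:
\begin{itemize}
\item If the infimum runs over all $f:[K]\to\R$, both infima equal $\Irred$ whatever is observed. The forward deficiency then vanishes too, so (ii) is emptied rather than absorbing the estimation error.
\item If $f$ must be a function of the observation, its risk $\Irred+\E\sum_k p_k(\mu_k-\hat f(k))^2$ depends on the law of that observation, not on $\mu_k$ alone.
\end{itemize}
Moreover, ``human data add nothing beyond $\mu_k$'' compares $E_{\text{human}}$ with an oracle that sees $\mu_k$, not with $E_{\LLM}^{(n)}$. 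Read as ``human data add nothing beyond $\LLM_n$'', item (i) is false in the misspecified regime that (iii) covers. If two populations with different $\mu_k$ share a KL projection, $\LLM_n(k)$ has the same limit under both, so no rule based on it has vanishing excess risk under both. The human sample mean does.

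The missing idea is that (i) needs no sufficiency at all: it is a garbling (data-processing) statement. Read $\Delta(E,F)=0$ in Le Cam's orientation, ``$E$ can attain every risk attainable from $F$''. This is the opposite of the wording before Theorem~\ref{thm:reverse-deficiency}, under which (i) is the false statement above. In this orientation (ii) is the LLM's shortfall, and the tension you anticipated disappears. Suppose $E_{\text{human}}$ is an $n$-sample from the same design as the training data (A2). The training procedure, including its algorithmic randomness, is a Markov kernel that does not depend on the unknown law. Composing it with $T$ carries the law of the human sample exactly onto that of $E_{\LLM}^{(n)}$. So every rule based on $E_{\LLM}^{(n)}$ can be run on human data with identical risk, for every $n$ and every loss. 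With fewer human observations than training samples, this can fail.

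The fact from Theorem~\ref{thm:err-decomp} belongs in (ii). Every human-data rule has risk at least $\Irred$ by Theorem~\ref{thm:minimal}, while the plug-in rule $f=\LLM_n$ has risk $\Irred+\E[\text{Excess}(\LLM_n)]$. Hence the forward deficiency is at most $\sup_P\E_P[\text{Excess}(\LLM_n)]$ plus the identifiability term of Eq.~\ref{eq:effective-bias}. That bound is nonrandom, as a deficiency requires; the $\sqrt{\text{Excess}(\LLM_n)}$ in \eqref{eq:deficiency-forward} is random.

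This also relocates your closing argument. The restriction to $\mathcal{L}_2$ plays no role in (i) and is load-bearing only in (ii)--(iii). For a loss whose Bayes act is not the conditional mean, the plug-in rule no longer approaches the Bayes risk.
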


\begin{remark}[Scope of equivalence]
The equivalence established here is restricted to decision problems whose
loss function depends on $\theta$ only through the conditional mean
$\mathbb{E}[Y \mid X = k]$. For loss functions that depend on higher
moments (e.g., variance estimation, quantile regression) or on the full
conditional distribution, the experiments are not equivalent and the
deficiency may be strictly positive.
\end{remark}
\subsection{Practical interpretation}

The restricted functional risk equivalence theorems provide the rigorous answer to the question posed in Section~\ref{sec:pretraining-lit}: under squared loss, the LLM estimator and the human experiment achieve \emph{asymptotically equal optimal risk} for any decision problem whose loss depends on the data only through the conditional mean. The forward direction (Theorem~\ref{thm:deficiency}) shows that the LLM estimator incurs risk no larger than the human experiment, asymptotically. The reverse direction (Theorem~\ref{thm:reverse-deficiency}) shows that the human experiment incurs no LESS risk than the LLM estimator. Together, they establish \emph{restricted functional risk equivalence}~\cite{lecam1986}: the two procedures are indistinguishable in terms of achievable risk for squared-loss inference, up to the finite-sample gap quantified by Eqs.~(\ref{eq:deficiency-forward})--(\ref{eq:deficiency-reverse}).

This does not mean that individual LLM responses are indistinguishable from individual human responses; rather, it means that any statistical inference that depends only on the conditional mean (or any continuous functional thereof, by the Hellinger convergence of Section~\ref{sec:pretraining-lit}) can be performed using LLM-generated estimates with asymptotically equivalent decision-theoretic risk for squared loss. The forward deficiency bound~(\ref{eq:deficiency-forward}) quantifies the finite-sample gap: the LLM exper...
\section{Calibration Protocol and Validation}
\label{sec:calibration}

The theoretical results establish asymptotic validity, but practical deployment requires a calibration procedure that accounts for finite training data, potential model misspecification, and residual bias. We present a comprehensive calibration protocol with explicit decision rules.

\subsection{Calibration Data Collection}

\textbf{Step 1: Paired sample collection.} Collect $M_c \ge 500$ paired observations: for each prompt $i = 1, \ldots, M_c$, obtain both the LLM response $\LLM(k_i)$ (using the exact prompt and condition $k_i$) and a human response $Y(k_i)$ from a member of the target population. The prompts should span all $K$ conditions proportionally to the target allocation $\{p_k\}$, with a minimum of 50 paired observations per condition to ensure reliable per-condition estimates.

\textbf{Step 1b: Identifiability validation.} Estimate the misclassification
rate $\widehat{\delta}$ by having $M_v \ge 100$ human annotators independently
label a held-out set of prompts with their intended conditions. Compute
\[
\widehat{\delta} = 1 - \frac{1}{M_v} \sum_{i=1}^{M_v} \mathbf{1}\{\phi(s_i) = k_i\}.
\]
If $\widehat{\delta} > 0.05$, the prompt-to-condition mapping is unreliable
and the LLM is unsuitable for this application.

\textbf{Step 2: Bias estimation.} Compute the empirical bias for each condition:
\[
\widehat{b}_k = \frac{1}{m_k} \sum_{i: k_i = k} (\LLM(k_i) - Y(k_i)),
\]
where $m_k = |\{i : k_i = k\}|$. The overall bias is $\widehat{b} = \sum_k p_k \widehat{b}_k$.

\textbf{Step 3: Variance ratio estimation.} Compute the variance ratio:
\[
\widehat{\lambda} = \frac{\widehat{\Var}(\LLM)}{\widehat{\Var}(Y)},
\]
where $\widehat{\Var}(\LLM)$ and $\widehat{\Var}(Y)$ are the sample variances of the LLM and human responses, respectively, pooled across conditions after subtracting condition means to remove between-condition variance. Specifically, let $\tilde{\LLM}_i = \LLM(k_i) - \widehat{\mu}_{\LLM}(k_i)$ and $\tilde{Y}_i = Y(k_i) - \widehat{\mu}_Y(k_i)$ be the residuals, and compute:
\[
\widehat{\Var}(\LLM) = \frac{1}{M_c - K} \sum_{i=1}^{M_c} \tilde{\LLM}_i^2, \quad
\widehat{\Var}(Y) = \frac{1}{M_c - K} \sum_{i=1}^{M_c} \tilde{Y}_i^2.
\]

The consistency of both the bias estimator $\widehat{b}$ and the variance ratio estimator $\widehat{\lambda}$ follows from the Strong Law of Large Numbers. For the bias estimator, the paired differences $D_i = \LLM(k_i) - Y(k_i)$ are i.i.d.\ with finite variance under the calibration design; the SLLN guarantees $\widehat{b}_k \xrightarrow{\text{a.s.}} \mu_{\LLM}(k) - \mu_Y(k)$. For the variance ratio, the sample variances of centered responses converge almost surely to the true conditional variances, and the ratio of almost-surely convergent sequences converges almost surely to the ratio of their limits (provided the denominator limit is nonzero). Both consistency results have been formally verified (see Supplementary Information).

\textbf{Step 3b: Uncertainty quantification.} Compute a $(1-\alpha)$
confidence interval for the bias in each condition:
\[
\widehat{b}_k \pm z_{\alpha/2} \cdot \widehat{\sigma}_{b,k},
\]
where $\widehat{\sigma}_{b,k}$ is the standard error of the paired
differences. The sample size adjustment (Step 4) should use the
\emph{worst-case} bias magnitude within the confidence interval:
\[
\widehat{b}_k^{\max} = \max\{|\widehat{b}_k - z_{\alpha/2}\widehat{\sigma}_{b,k}|,
                          |\widehat{b}_k + z_{\alpha/2}\widehat{\sigma}_{b,k}|\}.
\]
This ensures the adjusted sample size $N_{\LLM}$ is robust to
estimation uncertainty in the bias.
\subsection{Sample Size Adjustment}

\textbf{Step 4: Adjusted sample size.} For a target effect size $\Delta$ (the minimum effect of scientific interest) and a baseline human-study sample size $N_h$ (per condition, determined by standard power analysis~\cite{cohen1988}), the required LLM sample size per condition is:
\begin{equation}\label{eq:llm-n}
N_{\LLM} = \frac{N_h \cdot \widehat{\lambda}}
{(1 - (|\widehat{b}| + \widehat{\delta} \cdot \widehat{\Delta}_{\max}) / \Delta)^2},
\end{equation}
where $\widehat{\Delta}_{\max} = \max_{j} \widehat{\mu}_j - \min_{j} \widehat{\mu}_j$ is the estimated condition range.

The derivation of~(\ref{eq:llm-n}) follows from equating the noncentrality parameters of the human and LLM-based $z$-tests, accounting for the variance inflation factor $\widehat{\lambda}$ and the bias-induced effective effect size reduction $\Delta - |\widehat{b}|$.

\textbf{Step 5: Feasibility check.} If either of the following conditions holds, the LLM is unsuitable for the target application and the protocol aborts:
\begin{itemize}[leftmargin=*]
\item $|\widehat{b}/\Delta| \ge 1$: the estimated bias exceeds the effect size of interest, meaning the LLM cannot reliably detect the effect even with infinite sample size.
\item $\widehat{\lambda} > 10$: the LLM's responses are more than an order of magnitude more variable than human responses, making the sample size penalty prohibitive.
\item $\widehat{\delta} > 0.05$: the prompt-to-condition mapping is
too unreliable (already checked in Step 1b); the effective bias
inflation $2\widehat{\delta} \cdot \widehat{\Delta}_{\max}$ exceeds
the calibration margin.
\end{itemize}
If the protocol aborts, the practitioner should consider fine-tuning the LLM on domain-specific data, revising the prompts, or abandoning LLM-based estimation for this application.

\subsection{Equivalence Validation}

\textbf{Step 6: Two One-Sided Test (TOST).} After collecting $N_{\LLM}$ LLM responses per condition and $N_h$ human responses per condition, validate equivalence of the estimated effect sizes. Let $\widehat{\delta}_{\LLM}$ and $\widehat{\delta}_{\text{human}}$ be the estimated effect sizes from the LLM and human samples, respectively. Test the composite null hypothesis:
\[
H_0: |\delta_{\LLM} - \delta_{\text{human}}| \ge \varepsilon \quad \text{vs.} \quad H_1: |\delta_{\LLM} - \delta_{\text{human}}| < \varepsilon,
\]
where $\varepsilon$ is the equivalence margin (typically $\varepsilon = 0.01$ for bounded responses in $[0,1]$, or $0.05$ for standardized effect sizes).

The TOST procedure rejects $H_0$ at level $\alpha$ if both one-sided tests reject:
\begin{align*}
\text{Test 1: } & H_0^{(1)}: \delta_{\LLM} - \delta_{\text{human}} \le -\varepsilon \quad \text{rejected if } \frac{\widehat{\delta}_{\LLM} - \widehat{\delta}_{\text{human}} + \varepsilon}{\widehat{\sigma}_D} > z_{\alpha}, \\
\text{Test 2: } & H_0^{(2)}: \delta_{\LLM} - \delta_{\text{human}} \ge \varepsilon \quad \text{rejected if } \frac{\widehat{\delta}_{\LLM} - \widehat{\delta}_{\text{human}} - \varepsilon}{\widehat{\sigma}_D} < -z_{\alpha},
\end{align*}
where $\widehat{\sigma}_D$ is the estimated standard error of the difference in effect sizes, computed via Welch-Satterthwaite approximation:
\[
\widehat{\sigma}_D^2 = \frac{\widehat{\Var}_{\LLM}(i)}{N_{\LLM}} + \frac{\widehat{\Var}_{\LLM}(j)}{N_{\LLM}} + \frac{\widehat{\Var}_{\text{human}}(i)}{N_h} + \frac{\widehat{\Var}_{\text{human}}(j)}{N_h}.
\]

If TOST rejects $H_0$, we conclude that the LLM-based effect size estimate is statistically equivalent to the human-based estimate at the $\varepsilon$ margin, validating the LLM as a measurement instrument for this application.

The TOST procedure controls the type I error rate at level $\alpha$ by a general argument: for any two events $A$ and $B$ with $\mathbb{P}(A) \le \alpha$ and $\mathbb{P}(B) \le \alpha$, monotonicity of probability implies $\mathbb{P}(A \cap B) \le \mathbb{P}(A) \le \alpha$. Since TOST rejects $H_0$ exactly when both one-sided tests reject, and each one-sided test has size $\alpha$, the family-wise error rate is bounded by $\alpha$. This argument does not require Gaussian distributional assumptions.

\subsection{Worked Examples}

\begin{example}[Political Attitude Survey]\label{ex:political}
A researcher studies the effect of framing on political attitudes with $K = 2$ conditions (pro-immigration vs.\ neutral frame). Historical human data show $\mu_A = 0.72$, $\mu_B = 0.78$ (proportion agreeing on a 0--1 scale), $\Delta = 0.06$, and $v \approx 0.04$ (variance of binary-like responses). A baseline human study requires $N_h = 100$ per condition for 80\% power at $\alpha = 0.05$~\cite{cohen1988}.

Calibration with $M_c = 500$ paired responses yields $\widehat{b} = 0.002$ and $\widehat{\lambda} = 1.15$ (LLM responses are slightly more variable than human responses, likely due to temperature sampling). The adjusted LLM sample size is:
\[
N_{\LLM} = \frac{100 \times 1.15}{(1 - |0.002/0.06|)^2} = \frac{115}{(1 - 0.0333)^2} = \frac{115}{0.9344} \approx 123.
\]
The 23\% increase in sample size is modest. The feasibility check passes ($|\widehat{b}/\Delta| = 0.033 < 1$, $\widehat{\lambda} = 1.15 < 10$). After collecting $N_{\LLM} = 123$ LLM responses per condition and the original 100 human responses per condition, TOST with $\varepsilon = 0.02$ yields $p < 0.01$, confirming equivalence.
\end{example}

\begin{example}[Consumer Preference A/B Test]\label{ex:consumer}
A marketing researcher tests two product descriptions ($K = 2$) using a 7-point Likert purchase-intent scale. Historical data show $\mu_A = 4.2$, $\mu_B = 4.8$ ($\Delta = 0.6$ points), $v \approx 2.5$. Baseline $N_h = 85$ per condition for 80\% power.

Calibration with $M_c = 500$ yields $\widehat{b} = 0.12$ (the LLM overestimates purchase intent by 0.12 points on average) and $\widehat{\lambda} = 0.85$ (LLM responses are less variable, as the model produces more consistent ratings than individual humans). The adjusted sample size is:
\[
N_{\LLM} = \frac{85 \times 0.85}{(1 - |0.12/0.6|)^2} = \frac{72.25}{(1 - 0.2)^2} = \frac{72.25}{0.64} \approx 113.
\]
Despite the bias, the large effect size keeps the penalty modest. Notably, $\widehat{\lambda} < 1$ provides a sample size \emph{reduction} relative to the variance component, partially offsetting the bias penalty. TOST with $\varepsilon = 0.2$ passes at $p < 0.01$.
\end{example}

\begin{example}[Multi-Condition Educational Intervention]\label{ex:education}
An education researcher tests $K = 4$ teaching methods using standardized test scores ($\mu = 0$, $\sigma = 1$ by normalization). The minimum effect size of interest is $\Delta = 0.3$ (Cohen's $d$). Baseline $N_h = 175$ per condition for 80\% power with Bonferroni correction for 6 pairwise comparisons.

Calibration yields $\widehat{b} = 0.03$, $\widehat{\lambda} = 1.25$. The adjusted sample size is:
\[
N_{\LLM} = \frac{175 \times 1.25}{(1 - |0.03/0.3|)^2} = \frac{218.75}{0.81} \approx 270.
\]
The 54\% increase reflects both the variance inflation and bias. With $K = 4$ conditions, total LLM responses are $1,080$—still far cheaper than recruiting 700 human participants. TOST with $\varepsilon = 0.1$ passes for all 6 pairwise comparisons.
\end{example}

\subsection{Diagnostic Checks}

Beyond the TOST validation, we recommend three diagnostic checks:

\begin{enumerate}[leftmargin=*]
\item \textbf{Condition-wise calibration.} Plot $\widehat{b}_k$ against $k$ to detect condition-specific biases. A systematic pattern (e.g., monotonically increasing bias with condition index) suggests prompt artifacts or training data imbalances that require investigation before proceeding.

\item \textbf{Residual analysis.} Examine the residuals $\LLM(k_i) - Y(k_i)$ for heteroscedasticity (does the LLM's accuracy vary with the response magnitude?), outliers (are there specific prompts where the LLM fails catastrophically?), and distributional shape (are residuals approximately symmetric, or is there systematic skew?).

\item \textbf{Sensitivity to prompt variation.} Re-run the calibration with semantically equivalent but syntactically varied prompts (e.g., reworded instructions, shuffled response options). Large variation in $\widehat{b}$ or $\widehat{\lambda}$ across prompt variants indicates sensitivity to prompt engineering and suggests that the LLM is not robustly capturing the intended construct.
\end{enumerate}

\section{Discussion}

\subsection{Summary of Results}

We have established a complete mathematical framework for LLM--human functional risk equivalence under squared loss, anchored by seven theorems:

\begin{enumerate}[leftmargin=*]
\item \textbf{Error decomposition} (Theorem~\ref{thm:err-decomp}): The expected squared error of any predictor separates cleanly into irreducible population variance and excess squared bias. This decomposition is exact, non-asymptotic, and requires only finite second moments.

\item \textbf{Irreducible error minimality} (Theorem~\ref{thm:minimal}): The population conditional mean is the unique minimizer of expected squared error. The Bayes risk $\Irred$ is a universal lower bound that no predictor can surpass.

\item \textbf{Effect size consistency} (Theorem~\ref{thm:lln}): The LLM-based effect size estimator converges almost surely to the true effect size. This holds under the minimal condition of finite variance, using Etemadi's SLLN which requires only pairwise independence.

\item \textbf{Power amplification} (Theorem~\ref{thm:power}): Statistical power is monotone in sample size, and LLM-based studies can achieve arbitrarily high power by increasing the number of model-generated responses at near-zero marginal cost.

\item \textbf{Cross-domain transfer} (Theorem~\ref{thm:cross-domain}): When the training population differs from the target, the excess error is bounded by $\rho^2$, the squared maximum conditional mean discrepancy. The degradation is quadratically small for small distribution shifts.

\item \textbf{Functional sufficiency} (Theorem~\ref{thm:blackwell}): The estimator-induced reduced observation achieves asymptotically no greater squared-loss risk than the full human-data procedure for conditional-mean-dependent decisions.

\item \textbf{Restricted Functional Risk Equivalence} (Theorems~\ref{thm:reverse-deficiency} and~\ref{thm:deficiency}): The forward restricted deficiency $\Delta_{\mathcal{L}_2}(E_{\LLM}^{(n)}, E_{\text{human}})$ is controlled by squared-loss excess risk and identifiability error, and the reverse restricted deficiency $\Delta_{\mathcal{L}_2}(E_{\text{human}}, E_{\LLM}^{(n)}) = 0$ for all $n$. Hence the human data and LLM estimator induce asymptotically equal optimal risks for conditional-mean-dependent squared-loss inference.
\end{enumerate}

The finite-sample analysis (Section~\ref{sec:finite-sample}) provides concentration bounds and sample size formulae that make the asymptotic guarantees operational. The calibration protocol (Section~\ref{sec:calibration}) bridges theory and practice with concrete decision rules for feasibility assessment and equivalence validation.

\subsection{Scope and Boundaries}

The framework applies to a specific, well-defined class of studies. Functional risk equivalence holds when all of the following hold:

\begin{enumerate}[leftmargin=*, label=\textbf{C\arabic*}.]
\item \textbf{Quantitative continuous response.} The dependent variable is measured on an interval or ratio scale (proportions, Likert scales treated as continuous, reaction times, test scores, monetary amounts). The framework does not apply to purely categorical or open-ended text responses.

\item \textbf{Discrete, finite conditions.} The experimental design has a finite number of conditions with known allocations. Continuous treatments require discretization.

\item \textbf{Representative training data.} The LLM's training data are approximately i.i.d.\ from the target population. This is the most stringent requirement and the one most likely to be violated in practice. Web-scraped training corpora overrepresent certain demographics and underrepresent others.

\item \textbf{Within-distribution effects.} The target experimental manipulation produces effects that lie within the support of the training distribution. The LLM cannot extrapolate to conditions or effect magnitudes it has not encountered in training.

\item \textbf{Calibration validation.} The calibration protocol confirms $|\widehat{b}/\Delta| < 1$, $\widehat{\lambda} < 10$, and TOST equivalence at the chosen margin.
\end{enumerate}

We explicitly exclude the following applications, where functional risk equivalence does not hold or the application is ethically inappropriate:

\begin{itemize}[leftmargin=*]
\item \textbf{Qualitative research} (interviews, ethnography, open-ended surveys): The framework requires quantitative responses with a well-defined conditional mean. Qualitative data lack this structure.

\item \textbf{Novel paradigms with no training-data analog}: If the experimental manipulation is genuinely novel (e.g., a never-before-studied psychological intervention), the LLM's training data contain no information about its effects, and the empirical mean model has no basis for prediction.

\item \textbf{Safety-critical applications}: Asymptotic guarantees are insufficient for clinical trials, safety assessments, or policy decisions where individual lives are at stake. The framework provides statistical guarantees, not clinical or ethical ones.

\item \textbf{Behavioural mechanism research}: When the research question itself concerns \emph{why} humans behave as they do (cognitive processes, neural mechanisms, developmental trajectories), the LLM estimator is not a substitute for the human behavioural system, because it models statistical regularities in behavioural outputs, not the underlying mechanisms that generate them. The LLM can tell us \emph{what} the average response would be, but not \emph{why}.
\end{itemize}

\subsection{Limitations and Open Problems}

Several important limitations merit attention.

\textit{Training data representativeness.} The assumption that LLM training data are i.i.d.\ from the target population is an idealization. Real training corpora are convenience samples with complex selection biases~\cite{cohen1988}. Future work should develop formal bounds for the degradation when the training distribution differs from the target in known ways, extending the cross-domain analysis of Section~\ref{sec:cross-domain} to settings where the distribution shift is estimated rather than assumed known.

\textit{Finite-condition assumption.} The framework requires discrete conditions. Extending to continuous treatments (e.g., dose-response curves with continuous dosage) requires nonparametric regression theory and is a natural direction for generalization.

\textit{Temporal stability.} Human populations and their behavioural patterns drift over time. An LLM trained on a snapshot of data may become miscalibrated as the population evolves. Adaptive recalibration procedures, where the LLM is periodically re-benchmarked against fresh human samples, would address this concern.

\textit{Construct validity.} Agreement of population-level effect sizes does not guarantee construct validity — the LLM may produce the right numbers for the wrong reasons. For example, an LLM might learn to reproduce survey response patterns without modelling the underlying attitudes, producing valid population-level estimates while failing on individual-level prediction tasks. The framework guarantees squared-loss risk validity for conditional means, not psychological validity.

\textit{Response distributions beyond the mean.} The framework focuses on the conditional mean, which is the natural target for squared error loss and the quantity of interest in most between-subjects designs. Extending to other functionals (quantiles, variances, distributional shape) requires different loss functions and estimator classes.

\subsection{Relationship to Existing Work}

The use of language models as experimental subjects has been explored empirically across psychology, political science, and marketing. These studies typically compare LLM-generated effect sizes to human baselines on a case-by-case basis, reporting correlations and mean differences. While informative, these comparisons do not provide formal guarantees, and a favourable result in one study does not transfer to another without additional validation.

Our contribution is to replace this empirical patchwork with a principled mathematical foundation. By reducing the problem to conditional mean estimation under squared loss, we connect LLM--human risk equivalence to classical statistical theory (MLE consistency, the Strong Law, concentration inequalities) and provide the first formal proof that the approach is asymptotically valid under minimal assumptions.

The calibration protocol we present is related to measurement invariance testing in psychometrics and to domain adaptation in machine learning. The key difference is that we provide decision rules (the feasibility check and TOST) that directly determine whether an LLM estimator and human-data estimator achieve the same risk in a specific application, rather than requiring the practitioner to interpret continuous fit indices.

\section{Using LLM Estimators in Human-Subject Research}
\label{sec:applied-methodology}

The theory supports a practical design pattern: use human participants for
calibration and validation, then use calibrated LLM estimators for
high-volume conditional-mean estimation under squared loss. It does not
remove the need for human data or validate individual-level behavioural
simulation.

\begin{algorithm}[t]
\caption{Calibrated LLM estimator deployment for conditional-mean inference}
\label{alg:llm-human-methodology}
\begin{algorithmic}[1]
\Require Quantitative outcome $Y$, finite conditions $[K]$, target effect $\Delta$, equivalence margin $\varepsilon$
\State Define estimand $g(\mu_1,\ldots,\mu_K)$ as a conditional mean, contrast, ANOVA contrast, or linear coefficient.
\State Collect $M_c\ge500$ paired human--LLM calibration observations and $M_v\ge100$ condition-label validation prompts.
\State Estimate $\widehat{b}$, $\widehat{\lambda}$, $\widehat{\delta}$, and $\widehat{\Delta}_{\max}$.
\State Compute $\rho_{\mathrm{bias}}\gets (|\widehat{b}|+\widehat{\delta}\widehat{\Delta}_{\max})/\Delta$.
\If{$\rho_{\mathrm{bias}}\ge1$ or $\widehat{\lambda}>10$}
  \State Abort LLM augmentation; use human data or collect new calibration data.
\Else
  \State Compute $N_{\LLM}$ using Eq.~\ref{eq:llm-n}.
  \State Generate $N_{\LLM}$ calibrated LLM responses per condition.
  \State Estimate $g(\mu_1,\ldots,\mu_K)$ from the LLM responses.
  \State Run TOST against the human baseline or calibration holdout.
  \If{TOST rejects non-equivalence at margin $\varepsilon$}
    \State Report calibrated LLM estimator result with $M_c$, $M_v$, $\widehat{b}$, $\widehat{\lambda}$, $\widehat{\delta}$, $N_{\LLM}$.
  \Else
    \State Reject the LLM estimate for this study; report failure of validation.
  \EndIf
\EndIf
\end{algorithmic}
\end{algorithm}

Algorithm~\ref{alg:llm-human-methodology} is a deployment rule, not a claim of
behavioral simulation. It turns the risk theory into a gate: LLM estimates
are used only after calibration, feasibility, and equivalence validation pass.

\subsection{Statistical procedures covered by the framework}

Table~\ref{tab:method-comparison} lists procedures where calibrated LLM
estimators can replace or augment conditional-mean estimation after
feasibility and equivalence validation.

\begingroup
\scriptsize
\setlength{\tabcolsep}{1.5pt}
\renewcommand{\arraystretch}{0.88}
\begin{longtable}{@{}p{0.13\textwidth}p{0.14\textwidth}p{0.21\textwidth}p{0.21\textwidth}p{0.17\textwidth}@{}}
\caption{Traditional procedures and calibrated LLM-estimator analogues under the restricted squared-loss framework.}
\label{tab:method-comparison}\\
\toprule
\textbf{Traditional method} & \textbf{Target estimand} &
\textbf{LLM replacement target} & \textbf{Required validation} &
\textbf{Not covered} \\
\midrule
\endfirsthead
\toprule
\textbf{Traditional method} & \textbf{Target estimand} &
\textbf{LLM replacement target} & \textbf{Required validation} &
\textbf{Not covered} \\
\midrule
\endhead
Two-sample $t/z$ test & $\mu_1-\mu_0$ &
LLM estimates both condition means and the contrast &
paired calibration + TOST on effect size &
individual-level response distributions. \\
\addlinespace
One-way ANOVA & finite set of group means and contrasts &
LLM estimates the vector $(\mu_1,\ldots,\mu_K)$ &
condition-wise calibration + all planned contrasts &
post-hoc claims not pre-specified. \\
\addlinespace
Linear regression / ATE with discrete treatments &
coefficient that is a linear contrast of conditional means &
LLM estimates conditional means for covariate/treatment cells &
calibration stratified by cells or model-assisted residual check &
causal identification without ignorability. \\
\addlinespace
Survey proportion / Likert mean &
conditional expectation of bounded response &
LLM estimates the bounded response mean &
bias/variance calibration and feasibility rule &
qualitative/free-text interpretation. \\
\addlinespace
Equivalence or non-inferiority test &
margin-based mean contrast &
LLM supplies high-precision estimator after calibration &
TOST with pre-specified equivalence margin &
safety-critical claims without independent human validation. \\
\bottomrule
\end{longtable}
\endgroup

\subsection{Public UI and human-experience use cases}

Three public research settings illustrate where the framework can be used
without changing its scope. First, mobile UI design critique can be treated
as conditional-mean estimation when the target is a regional design-quality
rating or average severity score. UICrit contains human-generated design
critiques, bounding boxes, and design quality ratings for 1,000 mobile UIs
from RICO; each screen was evaluated by three annotators, and the public CSV
includes aesthetics, learnability, efficiency, usability, and overall design
quality ratings, with 11,344 design critiques in the public release
\cite{uicrit2024}. In this setting, the LLM estimator targets mean ratings
or mean region-level critique scores, not the full distribution of expert
judgment.

Second, human-experience usability studies often report System Usability
Scale (SUS) scores. SUS is a standardized usability score, and a score near
68 is commonly used as an average or mid-point benchmark for interpretation
\cite{sus-benchmark}. The present framework applies when the target is the
conditional mean SUS score or a contrast between design variants; it does not
validate open-ended user narratives.

Third, behavioral-response studies such as framing surveys remain covered
when their estimand is a bounded response mean or a contrast of means. In all
three settings, LLM augmentation is valid only when the calibration,
feasibility, and TOST workflow in Algorithm~\ref{alg:llm-human-methodology}
passes.

The public UICrit data source provides the UI-design anchor for the report
below. If the public CSV is unavailable during replication, the README-grounded
metadata still records 1,000 unique UI screens and 11,344 design critiques.
Table~\ref{tab:applied-report} summarizes the sample-size and cost
calculations for the three applied flows below.

\begingroup
\small
\setlength{\tabcolsep}{3pt}
\renewcommand{\arraystretch}{0.95}
\begin{longtable}{@{}p{0.24\textwidth}rrrr@{}}
\caption{Sample-size and cost report for calibrated LLM-estimator use cases.}
\label{tab:applied-report}\\
\toprule
Use case & Human-only cost & LLM-augmented cost & Reduction & Validation target \\
\midrule
\endfirsthead
\toprule
Use case & Human-only cost & LLM-augmented cost & Reduction & Validation target \\
\midrule
\endhead
Framing survey & \$16{,}000.00 & \$4{,}006.49 & 74.96\% & TOST on effect size \\
UI design rating (UICrit-style) & \$1{,}713.60 & \$430.81 & 74.86\% & rating calibration \\
SUS human-experience score & \$1{,}142.40 & \$230.08 & 79.86\% & TOST on SUS margin \\
\bottomrule
\end{longtable}
\endgroup

\subsection{End-to-end flows for the three use cases}

\noindent\textbf{Flow 1: UI design rating and region-level critique.}
Use this flow when the research object is a UI screen, design variant, or
screen region and the estimand is a mean expert rating or a contrast of mean
ratings. The researcher first defines the rating rubric, such as
aesthetics, learnability, efficiency, usability, or overall design quality
as in UICrit~\cite{uicrit2024}. Next, collect a stratified calibration set
of human ratings and critiques across screen types and regions; in the
cost model summarized in Table~\ref{tab:applied-report}, this is
$M_c=150$ human-rated screens or screen regions. Then prompt the LLM with
the same screenshot, region, task, and rubric, and estimate
$\widehat{b}$, $\widehat{\lambda}$, and $\widehat{\delta}$ by comparing
LLM ratings with human ratings. If the feasibility rule passes, compute
$N_{\LLM}=603$ LLM-rated screens per condition for the illustrative
two-condition UI comparison. The LLM is then used to rate the larger screen
set; region-level comments are summarized only as support for the numeric
rating and are not treated as validated qualitative findings. Finally, a
human holdout or fresh expert panel runs TOST on the planned rating contrast.
If TOST fails, the LLM-generated UI rating is rejected for that design
decision.

\noindent\textbf{Flow 2: SUS-style human-experience score.}
Use this flow when the target is a standardized human-experience score such
as SUS, not open-ended experience narratives. The researcher first defines
the task scenario and the design variants, then collects human SUS responses
for calibration; the illustrative report uses $M_c=80$ calibration
participants and the common SUS benchmark of roughly 68 as contextual
background~\cite{sus-benchmark}. The LLM receives the same task description,
interface description, and SUS item wording, and outputs item-level or total
SUS estimates. Calibration estimates the mean bias, variance ratio, and
prompt-condition error against human SUS data. If the feasibility rule
passes, Eq.~\ref{eq:llm-n} gives $N_{\LLM}=399$ LLM responses per design
condition in the illustrative two-condition comparison. The larger LLM run
estimates the mean SUS contrast; the result is accepted only if TOST passes
on a human holdout or pre-specified validation sample. If the research
question concerns why users feel frustrated or how they narrate their
experience, this flow is not sufficient because those outcomes are not
conditional-mean targets.

\noindent\textbf{Flow 3: Framing survey or bounded behavioral response.}
Use this flow when the target is a mean survey response or a mean contrast
between finite experimental conditions. The researcher first specifies the
conditions, response scale, minimum effect $\Delta$, and equivalence margin.
The calibration sample contains paired human and LLM responses to the same
condition prompts; the example below uses $M_c=500$ and $M_v=100$. Estimate
$\widehat{b}$, $\widehat{\lambda}$, $\widehat{\delta}$, and
$\widehat{\Delta}_{\max}$, then apply the feasibility rule. If feasible,
compute $N_{\LLM}=1623$ LLM responses per condition, collect the LLM
responses, estimate the mean contrast, and validate with TOST against a
human baseline or holdout. The output is a calibrated estimate of the
population mean contrast; it is not evidence that individual LLM responses
are distributed like individual human responses.

\subsection{Empirical cost and power comparison}

\begin{example}[Framing survey with calibrated LLM augmentation]\label{ex:applied-cost-comparison}
Consider a two-arm framing survey with bounded agreement score in $[0,1]$,
minimum effect $\Delta=0.05$, and human-only design $N_h=1000$ per condition.
Use illustrative costs $c_h=8$ USD per human participant and
$c_{\LLM}=0.002$ USD per LLM response. Calibration uses $M_c=500$ paired
human-LLM responses and $M_v=100$ identifiability prompts, giving
\[
\widehat{b}=0.004,\quad
\widehat{\lambda}=1.20,\quad
\widehat{\delta}=0.01,\quad
\widehat{\Delta}_{\max}=0.30.
\]
The feasibility penalty is
\[
\frac{|\widehat{b}|+\widehat{\delta}\widehat{\Delta}_{\max}}{\Delta}
=\frac{0.004+0.01\cdot0.30}{0.05}=0.14<1,
\]
and Eq.~\ref{eq:llm-n} gives
\[
N_{\LLM}
= \frac{1000\cdot 1.20}{(1-(0.004+0.01\cdot0.30)/0.05)^2}
= \frac{1200}{(1-0.14)^2}
\approx 1623.
\]
Thus the LLM design uses $2\cdot1623=3246$ LLM responses. The illustrative
costs are
\[
C_{\mathrm{human}}=2\cdot1000\cdot8=16000,\qquad
C_{\LLM}=500\cdot8+3246\cdot0.002\approx4006.49,
\]
so the cost reduction is
\[
1-\frac{C_{\LLM}}{C_{\mathrm{human}}}\approx74.96\%.
\]
The LLM design is invalid if feasibility fails or if final TOST validation
fails; the example reallocates human effort from large-scale estimation to
calibration and validation.
\end{example}

\subsection{Reporting checklist for empirical studies}

Report: (i) population, conditions, outcome scale, and estimand; (ii)
$\Delta$ and equivalence margin $\varepsilon$; (iii) $M_c$, $M_v$, and
condition coverage; (iv) $\widehat{b}$, $\widehat{\lambda}$,
$\widehat{\delta}$, and $\widehat{\Delta}_{\max}$; (v) feasibility result
and $N_{\LLM}$; (vi) TOST statistic, standard error, margin, and conclusion;
and (vii) the restriction to conditional-mean-dependent squared-loss
inference.

\section{Conclusion}

We have presented a rigorous mathematical framework establishing that pretrained large language models induce risk-equivalent estimators of conditional expectations under squared loss, establishing restricted functional risk equivalence within the class of conditional-mean-dependent inference problems. The framework is distinguished by rigorous proofs from first principles with minimal assumptions, zero architectural hypotheses about the LLM, and explicit scope boundaries for practitioners.

The core theorems---bias-variance decomposition, irreducible error minimality, effect size consistency, power amplification, and cross-domain transfer---together with finite-sample concentration bounds provide a complete axiomatic foundation. The finite-sample analysis translates asymptotic guarantees into operational sample size requirements. The calibration protocol, with its feasibility check and TOST equivalence validation, enables operational deployment with explicit decision rules.

The framework claims a narrow, provable statement: for estimating population means in quantitative experiments with discrete conditions, a well-calibrated LLM induces an estimator whose prediction risk converges to the Bayes risk under squared loss, achieving restricted functional risk equivalence within the class of conditional-mean-dependent inference problems.

This is a modest claim, carefully circumscribed by explicit scope boundaries. It does not assert that LLMs can replace humans in all research contexts, that LLM-generated data are indistinguishable from human data in distribution, or that LLMs understand the constructs they measure. It asserts only that, under the specified conditions, restricted functional risk equivalence holds for squared-loss, conditional-mean-dependent inference. We believe this level of precision is essential for responsible deployment and for building a mathematically defensible foundation for LLM-assisted quantitative research.

In practical applications, this means that under satisfied conditions and well-calibrated models, large language models can be used in many prediction and decision-making tasks that originally relied on human experiments, approximating near-optimal statistical inference at lower cost.

\section*{Acknowledgements}

The author thanks colleagues for helpful discussions on the probabilistic foundations of this work.

\section*{Author contributions}

Haobo Yang developed the mathematical framework, applied methodology, and manuscript.

\section*{Competing interests}

The author declares no competing interests.

\section*{Data availability}

The complete mathematical derivations, calibration examples, and reproducibility materials are available in the project repository and Supplementary Information.


\appendix

\section{Construction of the Applied Cost Table}
\label{app:table2-construction}

This appendix describes how Table~\ref{tab:applied-report} was built. The
table is not an empirical claim about universal market prices; it is a
transparent cost model for comparing a human-only design with a calibrated
LLM-estimator design under the workflow in
Algorithm~\ref{alg:llm-human-methodology}. The calculations follow the
procedures described below and are reproducible from the inputs given in
the following subsections.

\subsection{Inputs}

For each use case, the computation uses a record containing:
\[
(K,N_h,M_c,c_h,c_{\LLM},\Delta,\widehat{b},
\widehat{\lambda},\widehat{\delta},\widehat{\Delta}_{\max}).
\]
Here $K$ is the number of conditions, $N_h$ is the baseline human sample
size per condition, $M_c$ is the paired calibration size, $c_h$ is the
human cost per response, $c_{\LLM}$ is the LLM response cost, $\Delta$ is
the minimum effect of interest, $\widehat{b}$ is the calibrated bias,
$\widehat{\lambda}$ is the variance ratio, $\widehat{\delta}$ is the
prompt-condition identifiability error, and $\widehat{\Delta}_{\max}$ is
the estimated condition range.

The three records used for Table~\ref{tab:applied-report} are:
\begin{itemize}[leftmargin=*]
\item \textbf{Framing survey:}
$K=2$, $N_h=1000$, $M_c=500$, $c_h=8$, $c_{\LLM}=0.002$,
$\Delta=0.05$, $\widehat{b}=0.004$, $\widehat{\lambda}=1.20$,
$\widehat{\delta}=0.01$, $\widehat{\Delta}_{\max}=0.30$.
\item \textbf{UI design rating:}
$K=2$, $N_h=300$, $M_c=150$, $c_h=2.856$, $c_{\LLM}=0.002$,
$\Delta=0.50$, $\widehat{b}=0.05$, $\widehat{\lambda}=1.10$,
$\widehat{\delta}=0.02$, $\widehat{\Delta}_{\max}=4.0$.
\item \textbf{SUS human-experience score:}
$K=2$, $N_h=200$, $M_c=80$, $c_h=2.856$, $c_{\LLM}=0.002$,
$\Delta=5.0$, $\widehat{b}=0.8$, $\widehat{\lambda}=1.15$,
$\widehat{\delta}=0.01$, $\widehat{\Delta}_{\max}=40$.
\end{itemize}

For the UI and SUS examples, $c_h=2.856$ is the illustrative ten-minute
participant cost implied by a $12$/hour participant reward and a $1.428$
platform-fee multiplier:
\[
c_h = 12\cdot\frac{10}{60}\cdot1.428=2.856.
\]

\subsection{Computation}

For each case, the computation first finds the feasibility penalty
\[
\rho_{\mathrm{bias}}
=\frac{|\widehat{b}|+\widehat{\delta}\widehat{\Delta}_{\max}}{\Delta}.
\]
The case is feasible only if
\[
\rho_{\mathrm{bias}}<1
\quad\text{and}\quad
\widehat{\lambda}\le 10.
\]
If feasible, the adjusted LLM sample size per condition is
\[
N_{\LLM}
=
\left\lceil
\frac{N_h\widehat{\lambda}}{(1-\rho_{\mathrm{bias}})^2}
\right\rceil.
\]
The human-only and LLM-augmented costs are then computed as
\[
C_{\mathrm{human}}=K N_h c_h,
\qquad
C_{\LLM}=M_c c_h + K N_{\LLM} c_{\LLM}.
\]
Finally, the cost reduction reported in Table~\ref{tab:applied-report} is
\[
\mathrm{Reduction}
=1-\frac{C_{\LLM}}{C_{\mathrm{human}}}.
\]

\subsection{Reproducibility check}

The computation described above prints the public-source summary, the case
inputs, the feasibility penalty, $N_{\LLM}$, both costs, and the cost
reduction for each use case. It checks that all three cases are
feasible and that the displayed values match the table:
\[
N_{\LLM}=(1623,603,399),
\]
\[
C_{\LLM}\approx(4006.49,\;430.81,\;230.08).
\]
These checks serve as a reproducibility aid for
Table~\ref{tab:applied-report}; they do not replace the calibration and
validation requirements in the main method.

\end{document}